\newcommand{\be}{\begin{equation}}
\newcommand{\ee}{\end{equation}}
\newtheorem{lemma}{Lemma}
\newtheorem{theorem}{Theorem}
\newtheorem{assumption}{Assumption}
\newtheorem{remark}{Remark}
\newcommand{\bProof}{\begin{proof}{Proof.}}
\newcommand{\eProof}{\hfill\Halmos\\  \end{proof}}
\newcommand{\bea}{\begin{equation*}}
\newcommand{\eea}{\end{equation*}}
\newcommand{\gao}[1]{\textcolor{blue}{{#1}}} 
\begin{document}

\begin{center}
		\Large \bf Regret Bounds for Markov Decision Processes with Recursive Optimized Certainty Equivalents

	\end{center}
	\author{}
	\begin{center}
	{Wenhao Xu}\,\footnote{Department of Systems
			Engineering and Engineering Management, The Chinese University of Hong Kong, Hong Kong, China.
			Email: \url{whxu@se.cuhk.edu.hk.} },
		Xuefeng Gao\,\footnote{Department of Systems
			Engineering and Engineering Management, The Chinese University of Hong Kong, Hong Kong, China. Email: \url{xfgao@se.cuhk.edu.hk.}},
        Xuedong He\,\footnote{Department of Systems
			Engineering and Engineering Management, The Chinese University of Hong Kong, Hong Kong, China. Email: \url{xdhe@se.cuhk.edu.hk.}
		}
	\end{center}
	
	\begin{center}
		\today
	\end{center}

\begin{abstract}
The optimized certainty equivalent (OCE) is a family of risk measures that cover important examples such as entropic risk, conditional value-at-risk and mean-variance models. In this paper, we propose a new episodic risk-sensitive reinforcement learning formulation based on tabular Markov decision processes with recursive OCEs. We design an efficient learning algorithm for this problem based on value iteration and upper confidence bound. We derive an upper bound on the regret of the proposed algorithm, and also establish a minimax lower bound. Our bounds show that the regret rate achieved by our proposed algorithm has optimal dependence on the number of episodes and the number of actions. 
\end{abstract}



\section{Introduction}
Reinforcement learning (RL) studies the problem of sequential decision making in an unknown environment by carefully balancing between exploration and exploitation \citep{sutton2018reinforcement}. In the classical setting, it describes how an agent takes actions to maximize
\textit{expected cumulative rewards} in an environment typically modeled by a Markov decision
process (MDP, \cite{puterman2014markov}). 
However, 
optimizing the expected cumulative rewards alone is often not sufficient in many practical applications such as finance, healthcare and robotics.  Hence, it may be necessary to take into account of the risk preferences of the agent in the dynamic decision process. Indeed, a rich body of literature has studied \textit{risk-sensitive} (and safe) RL, incorporating risk measures such as the entropic risk measure and conditional value-at-risk (CVaR) in the decision criterion, see, e.g., \cite{shen2014risk, garcia2015comprehensive, tamar2016learning,chow2017risk, prashanth2018risk, fei2020risk} and the references therein.


In this paper we study risk-sensitive RL for tabular MDPs with unknown transition probabilities
in the \textit{finite-horizon, episodic} setting, where an agent interacts with the MDP in episodes of a fixed length with finite state and action spaces. 
To incorporate risk sensitivity, we consider a broad and important class of risk measures known as Optimized Certainty Equivalent (OCE, \citep{ben1986expected, ben2007old}). The OCE is a (nonlinear) risk function which assigns a random variable $X$ to a real value, and it depends on a \textit{concave} utility function, see Equation \eqref{OCE} for the definition. 
With an appropriate choice of the \textit{utility function}, OCE covers important examples of risk measures, including the entropic risk, CVaR and mean-variance models, as special cases, so it is popular in financial applications, such as portfolio optimization, and in the machine learning literature. See Section~\ref{SEC22} for details. Using this unified framework, we aim to develop efficient learning algorithms for risk-sensitive RL with OCEs and provide worst-case regret bounds, where the regret measures the sub-optimality of the learning algorithm compared to an optimal policy should the model parameters be completely known. 



We formulate a new risk-sensitive episodic RL problem with recursive OCEs.
The conventional objective in risk-sensitive MDPs (when the model is known) is to optimize a static risk measure/functional applied to the (possibly discounted) \textit{cumulative} rewards over the decision horizon \citep{ howard1972risk,marcus1997risk}. 
Except for the entropic risk measure, this approach typically suffers from the time-inconsistency issue, which prevents one from directly applying the dynamic programming principle \citep{artzner2007coherent}. In addition, the optimal policies can be non-Markovian, and are often difficult to compute \citep{mannor2011mean, du2022risk}. In view of the time-inconsistency issue and the computational difficulty, we consider an alternative approach, which is to consider MDPs with recursive risk measures  
\citep{ruszczynski2010risk,shen2013risk,bauerle2022markov}. In this approach, instead of optimizing a static risk measure of the cumulative rewards, one optimizes the value function
defined by a recursive application of a risk measure \textit{at each period}, which essentially replaces the expectation operator in the standard value iteration by the risk measure (OCEs in our setting).
This approach is also partly motivated by recursive utilities in the economics literature \citep{kreps1978temporal,epstein1989substitution}. Indeed, our recursive OCE model is a special case of the so-called dynamic mixture-averse preferences, which have an axiomatic foundation \citep{sarver2018dynamic} and is a special class of recursive utilities. The recursive structure in our OCE model implies time consistency and dynamic programming, which leads to a Bellman equation in a known environment; see for instance \citet{bauerle2022markov}. Our formulation of episodic RL with recursive OCEs is built on this Bellman equation. Due to the generality of OCE, our RL formulation unifies and generalizes several existing episodic RL formulations in the literature, including standard risk-neutral RL (see, e.g., \citet{azar2017minimax}), RL with entropic risk in \citet{fei2020risk}, and RL with iterated CVaR in \citet{du2022risk}. See Section 2.2 for details.

A special case of OCE is the entropic risk measure, which is obtained by setting the utility function in OCE to be an exponential function. In this special case, the recursive OCE model is equivalent to applying the entropic risk measure to the cumulative reward over the entire decision horizon. In general, the recursive OCE model and the model of applying OCE to the cumulative reward directly are different and can be applied in different problems to account for different attitudes toward risk. The former tends to lead to more conservative actions than the latter does, because in the former the agent is concerned about risk in every step and in every state; see \citet{du2022risk} for a detailed discussion and a concrete example in this regard in the context of recursive CVaR.



In this paper, we develop a model-based algorithm for risk-sensitive RL with recursive OCEs. Our algorithm is a  variant of the UCBVI (Upper Confidence Bound Value Iteration) algorithm in \cite{azar2017minimax} for risk-neural RL. The main novelty in our algorithm design is that the bonus term used to encourage exploration depends on the \textit{utility function} in the specific OCE that one considers. 
Theoretically, we prove regret bounds for our algorithm in learning MDPs with a wide family of recursive risk measures including the mean-variance criterion, by considering different utility functions in OCEs. Such bounds are new to the literature, to the best of our knowledge. 




The regret analysis of algorithms for risk-sensitive RL is difficult mainly due to the nonlinearity of the objective \citep{fei2020risk}.  Although the structure of our regret analysis of the proposed algorithm follows the optimism principle in provably efficient risk-neutral RL (see, e.g., \cite{azar2017minimax, agarwal2019reinforcement}),  
we develop two new ingredients to overcome the difficulty in our risk-sensitive setting: (a) concentration bounds for the OCE of the next-state value function under the estimated transition distributions, and (b) a change-of-measure technique to bound the OCE of the estimated value function under the true transition distribution with an affine functional (see Equation~\eqref{eq:measure-change}).  Our concentration bounds for OCEs of value functions are different from recent results in \cite{la2022wasserstein} which rely on the Lipschitz continuity of the utility function. Our technique (b) is inspired by the regret analysis in \citep{du2022risk} for iterated CVaR, but it is much more general and thus is applicable to OCEs. Conceptually, the main insight is to use the fact that the OCE is a concave risk functional \citep[Theorem 2.1]{ben2007old}. Its (algebraic) subgradient is a linear functional \cite{ruszczynski2006optimization} which turns out to be in the form of an expectation with respect to a new probability distribution that is related to the true transition distribution via change-of-measure. 
This linearization method is crucial in carrying out the recursions (in the time parameter) in our regret analysis. Due to change-of-measure, the corresponding Radon-Nikodym derivative naturally appears in our analysis and we need to carefully bound it. 




In addition to the regret upper bound, we also establish a minimax lower bound. It shows that the regret rate achieved by our proposed algorithm has optimal dependence on the number of episodes $K$ and the number of actions $A$, up to logarithmic factors. The proof of our lower bound proof is built on the hard MDP instances constructed in \cite{domingues2021episodic} for tabular risk-neutral RL. The main novelty in our analysis lies in modifying such hard instances to adapt to the OCEs and bounding value functions which are defined recursively via OCEs that involve an optimization problem.

\subsection{Related Work}

Despite rich literature in risk-sensitive RL, there are fairly limited number of studies on regret minimization in risk-sensitive MDPs. We provide a concise review below, and leave the detailed comparisons of existing regret bounds (for entropic risk and CVaR only) with our bounds to Section~\ref{SECEX}.

To the best of our knowledge, the first regret bound for risk-sensitive tabular MDP is due to \cite{fei2020risk}, who study episodic RL with the goal of maximizing the \textit{entropic risk} of the cumulative rewards. By the pleasant properties of exponential functions in entropic risk, their RL formulation is in fact equivalent to our general (iterative) formulation when the OCE is entropic risk. 

The results in \cite{fei2020risk} have been improved in \cite{fei2021exponential} for tabular MDPs with entropic risk, where they design two model-free algorithms with improved regret bounds. In addition, these algorithms have been  extended to the function approximation setting in \cite{fei2021risk} and to non-stationary MDPs with variation budgets in \cite{ding2022non}. \cite{liang2022bridging} also consider RL with the entropic risk, and they use tools from  distributional RL \citep{bellemare2017distributional}. They propose algorithms with regret upper bounds matching the results in \cite{fei2021exponential}.


\cite{du2022risk} propose Iterated CVaR RL, which is an episodic risk-sensitive RL formulation with the objective of maximizing the tail of the reward-to-go at each step. Their RL formulation is a special case of ours. \cite{du2022risk} study both regret miminization and best policy identification, and provide matching upper and lower bounds with respect to the number of episodes.

All the aforementioned studies focus on one single risk measure (entropic risk or CVaR only) for regret analysis in risk-sensitive MDPs. Their algorithms and analysis typically rely on the properties of the special risk measure they consider. 
\cite{bastani2022regret} study episodic RL with a class of risk-sensitive objectives known as \textit{spectral risk measures}, which includes CVaR as an example (but not the entropic risk and mean-variance criterion). 
They develop an upper-confidence-bound style algorithm and obtain a regret upper bound for their algorithm. Although spectral risk measures cover CVaR as an example, their work is different from \cite{du2022risk} and ours in that their objective is to optimize the (static) spectral risk of the cumulative rewards, rather than the value function obtained from iterative application of the risk measure at each time step.  
 See Appendix A in \cite{du2022risk} and Section 3 in \cite{bastani2022regret} for further discussions.

 \textbf{Paper Organization.} The rest of the paper is organized as follows: 
 we present the problem formulation in Section~\ref{sec:formulation} and describe the algorithm in Section~\ref{SECALG}. We state and discuss the main results in Section~\ref{sec:results}. We provide the proof sketch of our regret upper bound in Section~\ref{sec:sketch}, and conclude in Section~\ref{sec:conclusion}. Due to space constraints, proofs and experiments are given in the Appendix. 







\section{Problem Formulation}\label{sec:formulation}
In this section, we introduce the optimized certainty equivalent (OCE), and formulate the risk-sensitive reinforcement learning problem with recursive OCE.

\subsection{The Optimized Certainty Equivalent}\label{SEC22}
We introduce OCE, following \cite{ben2007old}. 
Let $u : \mathbb{R}\rightarrow [-\infty,\infty)$ be a nondecreasing, closed, concave utility function with effective domain dom $u=\{x\in\mathbb{R}\vert u(t)>-\infty\}\neq \emptyset$. Suppose $u$ satisfies $u(0)=0$ and $1\in \partial u(0)$, where $\partial u(\cdot)$ denotes the subdifferential of $u$. 
We denote this class of normalized utility functions by $U_0$. 
The optimized certainty equivalent (OCE) 
is defined by 
\begin{equation}\label{OCE}
OCE^u(X)=\sup_{\lambda\in \mathbb{R}}\{\lambda+E[u(X-\lambda)]\},
\end{equation}
where $X$ is a bounded random variable (so that $OCE^u(X)$ is finite). The interpretation for OCE in \eqref{OCE} is as follows: a decision maker can consume part of the future uncertain income of $X$ dollars at present, and this is denoted by $\lambda$. The present value of $X$ then becomes $\lambda+E[u(X-\lambda)]$, and the OCE represents the optimal allocation of $X$ between present and future consumption. 

OCE captures the risk attitude of a decision maker via the utility function $u$. With different choices of the utility functions, OCE covers important examples of popular risk measures, including the entropic risk measure, CVaR and mean-variance models, as special cases. See Table~\ref{TABLE1}.
Due to its tractability and flexibility, OCE has been applied in many areas including finance and machine learning; see, e.g., \cite{ben2007old, lee2020learning, la2022wasserstein}.


\begin{table}[t]
\caption{Popular OCEs and corresponding utility functions. For CVaR, $q(\alpha)= \min\{x\vert F_X(x)\geq \alpha\}$ where $F_X$ is the cumulative distribution function of $X$ and $[-t]_{+}=\max\{-t,0\}$.} 
\label{TABLE1}
\vskip 0.1in
\begin{center}
\renewcommand\arraystretch{1.25}
\begin{small}
\begin{tabular}{lll}
\hline Name & $OCE^u(X)$  & Utility function $u$ \\
\hline Mean & $\mathbf{E}[X]$ & $u(t)=t$ \\
Entropic risk & $\frac{1}{\beta} \log \mathbf{E}\left[e^{\beta X}\right]$ & $u_\beta(t)=\frac{1}{\beta} e^{\beta t}-\frac{1}{\beta}$ \\
CVaR & $\mathbf{E}\left[X \mid X\leq q\left(\alpha\right)\right]$ & $u_\alpha(t)=-\frac{1}{\alpha}[-t]_+$ \\
Mean-Variance & $\mathbf{E}[X]-c \cdot \text{Var}(X)$ & $u_c(t)=(t-ct^2)1\{t\leq \frac{1}{2c}\}+\frac{1}{4c}1\{t> \frac{1}{2c}\}$ \\

\hline
\end{tabular}
\end{small}
\end{center}
\vskip -0.1in
\end{table}

\subsection{Episodic Risk-Sensitive MDPs with Recursive OCE}
\label{sec:epis}
Consider a finite-horizon, tabular, non-stationary Markov decision process (MDP), $\mathcal{M}(\mathcal{S},\mathcal{A},H,\mathcal{P},r)$, where $\mathcal{S}$ is the set of states with $\vert \mathcal{S}\vert=S$, $\mathcal{A}$ is the set of actions with $\vert \mathcal{A}\vert=A$, $H$ is the number of steps in each episode, $\mathcal{P}$ is the transition matrix so that $P_h(\cdot\vert s,a)$ gives the distribution over states if action $a$ is taken for state $s$ at step $h\in [H]$, where $[H]=\{1,2,\cdots,H\}$, and $r_h:\mathcal{S}\times \mathcal{A}\rightarrow [0,1]$ is the deterministic reward function at step $h$. We define $s_{H+1}$ as the terminate state, which represents the end of an episode. A policy $\pi$ is a collection of $H$ functions $\mathcal{\varPi}:=\{\pi_h:\mathcal{S} \to \mathcal{A}\}_{h\in [H]}$.

The reinforcement learning agent repeatedly interacts with the MDP $\mathcal{M}:=\mathcal{M}(\mathcal{S},\mathcal{A},H,\mathcal{P},r)$ over $K$ episodes. 
For simplicity (as in many prior studies \citep{azar2017minimax, du2022risk}) we assume that the reward function $(r_h(s,a))_{s\in\mathcal{S},a\in\mathcal{A}}$ is known, but the transition probabilities $(P_h(\cdot\vert s,a))_{s\in\mathcal{S},a\in\mathcal{A}}$ are unknown. In each episode $k=1,2,\cdots,K$, an arbitrary fixed initial state $s_1^k=s_1\in\mathcal{S}$ is picked.
\footnote{The results of the paper can also be extended to the case where the initial states are drawn from a fixed distribution over $\mathcal{S}$.} An algorithm \textbf{algo} initializes and implements a policy $\pi^1$ for the first episode, and executes policy $\pi^k$ throughout episode $k$ based on the observed past data (states, actions and rewards) up to the end of episode $k-1$, $k=2,\cdots, K$.  



To capture the (dynamic) risk in the decision making process of the agent, we propose a novel RL formulation with recursive OCEs based on the studies of MDPs with recurisve measures \citep{ruszczynski2010risk,bauerle2022markov}. Specifically, we use $V_h^{\pi}:\mathcal{S}\to \mathbb{R}$ to denote the value function at step $h$ under policy $\pi$ and we use $Q_h^{\pi}:\mathcal{S}\times\mathcal{A}\to \mathbb{R}$ to denote the state-action value function at step $h$. They are recursively defined as follows: for all $ h \in [H]$, $s\in \mathcal{S}$ and $a \in \mathcal{A}$, 
\begin{align}
Q_h^{\pi}(s,a) & =r_h(s,a)+OCE^{u}_{s'\sim P_h(\cdot\vert s,a)}(V^{\pi}_{h+1}(s')), \label{eq:Qpi}\\
V_h^{\pi}(s) & =Q_h^{\pi}(s,\pi_h(s)), \quad \quad V_{H+1}^{\pi}(s) =0, \label{eq:Vpi}
\end{align}
where
\begin{equation}\label{OCES3}
\begin{aligned}
&OCE^{u}_{s'\sim P_h(\cdot\vert s,a)}( g(s'))=\sup_{\lambda\in \mathbb{R}}\{\lambda+E_{s'\sim P_{h}(\cdot\vert s,a)}[u(g(s')-\lambda)]\},
\end{aligned}
\end{equation} 
with $g:\mathcal{S} \rightarrow \mathbb{R}$ being a real-valued function. 

Note that in \eqref{eq:Qpi}--\eqref{eq:Vpi}, the risk measure OCE is applied to the next-state value at each period. 
Due to the generality of OCEs, the recursions in \eqref{eq:Qpi}--\eqref{eq:Vpi} cover and unify several existing frameworks: (a) when $u(t) = t$, the OCE becomes the mean, and \eqref{eq:Qpi}--\eqref{eq:Vpi} become the standard Bellman equation for the policy $\pi$ in risk-neutral RL; (b) when $u$ is an exponential function given in Table~\ref{TABLE1}, OCE becomes the entropic risk, and \eqref{eq:Qpi}--\eqref{eq:Vpi} recover the Bellman equation for the policy $\pi$ in risk-sensitive RL with entropic risk (see Equation~(3) in \cite{fei2021exponential}); (c) when $u$ is a piecewise linear function and the OCE becomes the CVaR, \eqref{eq:Qpi}--\eqref{eq:Vpi} reduce to the recursion of value functions in risk-sensitive RL with iterated CVaR (see Equation (1) in \cite{du2022risk}). We also remark that when the OCE is a coherent risk measure (e.g. CVaR), it has a dual or robust representation, and the recursion \eqref{eq:Qpi}--\eqref{eq:Vpi} can be interpreted as the Bellman equation of a distributionally robust MDP, see Section 6 of \cite{bauerle2022markov} for detailed discussions. 





Because $\mathcal{S},\mathcal{A},H$ are finite, by Theorem 4.8 in \cite{bauerle2022markov}, there exists an optimal Markov policy $\pi^*$ which gives the optimal value function $V^*_h(s)=\max_{\pi\in \mathcal{\varPi}}V_h^{\pi}(s)$ for all $ s\in \mathcal{S}$ and $h\in [H]$. The optimal Bellman equation is given by
\begin{align}
Q_h^*(s,a) & =r_h(s,a)+OCE^{u}_{s'\sim P_h(\cdot\vert s,a)}(V^*_{h+1}(s')), \\
V_h^*(s) & =\max_{a\in \mathcal{A}}Q_h^*(s,a), \quad \quad V_{H+1}^*(s)=0.
\end{align}
The expected (total) regret for algorithm \textbf{algo} over $K$ episodes of interaction with the MDP $\mathcal{M}$ is then defined as 
\begin{equation}\label{eq:regret}
Regret(\mathcal{M},\textbf{algo},K)=E\left [\sum_{k=1}^K(V_1^*(s_1^k)-V_1^{\pi^k}(s_1^k))\right ],
\end{equation}
where the term $V_1^*(s_1^k)-V_1^{\pi^k}(s_1^k)$ measures the performance loss when the agent executes (suboptimal) policy $\pi^k$ in episode $k$. 
Our goal is to propose an efficient learning algorithm with a provable worst-case regret upper bound that scales sublinearly in $K$, as well as to establish a minimax lower bound. 

\section{The OCE-VI Algorithm}\label{SECALG}
In this section, we propose a model-based algorithm, denoted by OCE-VI, for risk-sensitive RL with recursive OCE. 

Before presenting the algorithm, we first introduce some notations. A state-action-state triplet $(s,a,s')$ means that the process is in state $s$, takes an action $a$ and then moves to state $s'$. Similarly, a state-action pair $(s,a)$ means that the process is in state s and takes an action $a$. At the beginning of the $k$-th episode, we set the observed cumulated visit counts to $(s,a,s')$ at step $h$ up to the end of episode $k-1$ as $N_h^k(s,a,s')$ for $s,s'\in \mathcal{S}$ and $a\in \mathcal{A}$, and the cumulated visit counts to $(s,a)$ at step $h$ up to the end of episode $k-1$ as $N_h^k(s,a)$ for $s\in \mathcal{S}$ and $a\in \mathcal{A}$. When $2\leq k\leq K$, for $s\in \mathcal{S},a\in \mathcal{A},s'\in\mathcal{S}$, the formulas for $N_h^k(s,a,s')$ and $N_h^k(s,a)$ are given by
\begin{equation*}
\begin{aligned}
&N_h^k(s,a,s')=\sum_{i=1}^{k-1} \mathrm{1}\{(s_h^i,a_h^i,s_{h+1}^i)=(s,a,s')\}, \\ &N_h^k(s,a)=\sum_{i=1}^{k-1} \mathrm{1}\{(s_h^i,a_h^i)=(s,a)\}.
\end{aligned}
\end{equation*}
When $k=1$, we set $N_h^k(s,a,s')=N_h^k(s,a)=0$ for $s\in \mathcal{S},a\in \mathcal{A},s'\in\mathcal{S}$.
Then the empirical transition probabilities are given by
\begin{equation*}
\hat{P}^k_h(s'\vert s,a)=\frac{N_h^k(s,a,s')}{\max\{1,N^k_h(s,a)\}}.
\end{equation*}
In particular, if $(s,a)$ has not been sampled before episode $k$, $\hat{P}^k_h(s'\vert s,a) =0$ for all $s'.$

Similar to UCBVI in \cite{azar2017minimax} for risk-neutral RL, the OCE-VI algorithm achieves exploration by awarding some bonus for exploring some state-action pairs during the learning process. We consider the bonus
\begin{equation}\label{S11}
b_h^k(s,a)=\vert u(-H+h)\vert\sqrt{\frac{2\log\left (\frac{SAHK}{\delta}\right )}{\max\{1,N^k_h(s,a)\}}},
\end{equation}
where $(s,a)\in \mathcal{S}\times\mathcal{A}$, and $\delta\in (0,1)$ is an input parameter in our algorithm. Importantly, the bonus here depends on the utility function $u$ in the OCE \eqref{OCE}, which is natural given that we study risk-sensitive RL with recursive OCE. 
The details of the OCE-VI algorithm are summarized in Algorithm~\ref{ALG}. 

\begin{algorithm*}[tb]
   \caption{The OCE-VI Algorithm}
   \label{ALG}
\begin{algorithmic}
   \STATE {\bfseries Input:} Parameters $\delta,\mathcal{S},\mathcal{A},H,K,r$ and an utility function $u \in U_0$
   \STATE Initialize $\hat{V}_{h}(s)\gets 0$, $N_h(s,a,s')\gets 0$ and $N_h(s,a)\gets 0$ for all $(s,a,h)\in\mathcal{S}\times\mathcal{A}\times [H+1]$.
   \FOR{episode $k=1,\cdots ,K$}
   \FOR{step $h=H,H-1,\cdots,1$}
   \FOR{$(s,a)\in \mathcal{S}\times \mathcal{A}$}
   \IF{$N_h(s,a)\geq 1$}
   \STATE Update $b_h(s,a)$ by \eqref{S11} according to the utility function $u$
   \STATE$\hat{Q}_h(s,a)\gets \min{\{r_h(s,a)+OCE^{u}_{s'\sim \hat{P}_h(\cdot\vert s,a)}(\hat{V}_{h+1}(s'))+b_h(s,a),H-h+1\}}$
   \STATE $\hat{V}_h(s)\gets \max_{a'\in \mathcal{A}}\hat{Q}_h(s,a')$
   \ELSE
   \STATE $\hat{Q}_h(s,a)\gets H-h+1$
   \ENDIF
   \ENDFOR
   \ENDFOR
   \STATE For all $h\in [H]$, take $\pi^k_h(s_h)\gets argmax_{a'\in \mathcal{A}}\hat{Q}_h(s_h,a')$
   \STATE Play the episode $k$ with policy $\pi^k$, update $N_h(s,a),N_h(s,a,s')$ and $\hat{P}_h(s'\vert s,a)$ for all $h\in [H]$
   \ENDFOR
\end{algorithmic}
\end{algorithm*}

\begin{remark}
The dependence of the bonus on the utility function $u$ sheds some light on how the degree of risk aversion affects the degree of exploration. 
\citet{ben2007old} show that an agent with OCE preferences is weakly risk averse (i.e., any random payoff is less preferred by the agent to its mean) if and only if the utility function is dominated by the identify function (i.e., $u(x)\le x,x\in \mathbb{R}$). Now, consider two agents with recursive OCE preferences represented by utility functions $u_1$ and $u_2$, respectively. If $u_1$ is dominated by $u_2$ (i.e., $u_1(x)\le u_2(x),x\in\mathbb{R}$), then $|u_1(-H+h)|\ge |u_2(-H+h)|$ because $-H+h\le 0$ and $u_i(x)\le 0,x\le 0,i=1,2$. Consequently, the exploration bonus for agent 1 is larger than for agent 2. Therefore, if we interpret the dominance of $u_2$ over $u_1$ as a higher degree of risk aversion of agent 1 than that of agent 2, as suggested by the characterization of weak risk aversion \citep{ben2007old}, then in our algorithm for a more risk averse agent we need to have a larger bonus to encourage her to explore. 
We also remark that 
our bonus \eqref{S11} is based on Chernoff-Hoeffding’s concentration inequalities and it scales linearly with $|u(-H+h)|$.  It might be possible to design tighter bonuses that may depend on the utility function in a nonlinear manner. This is an open problem for risk-sensitive RL with recursive OCE and we leave it for future work.
\end{remark}

\begin{remark}
The OCE-VI algorithm is computationally tractable. In each episode, the computational cost of the algorithm is similar to solving a known MDP with value iteration, except that one needs to to compute the quantity $OCE^{u}_{s'\sim \hat{P}_h(\cdot\vert s,a)}(\hat{V}_{h+1}(s'))$ when updating the Q function. For certain special utility functions such as those in Table~\ref{TABLE1}, this quantity can be explicitly computed because the state space is finite. In general, computing this OCE is equivalent to solving the optimization problem $\sup_{\lambda\in \mathbb{R}}\{\lambda+E_{s'\sim\hat{P}_h(\cdot\vert s,a)}[u(\hat{V}_{h+1}(s')-\lambda)]\}$. This is a one-dimensional concave optimization problem because the utility function $u$ is concave and $\hat{P}_h(\cdot \vert s,a)$ is a probability distribution. Because the state space is finite, we can exchange the expectation and the derivative/subgradient with respective to $\lambda$ in the first order optimality condition of the above optimization problem. 
Thus, when the utility function is differentiable, this concave optimization problem can be solved efficiently using the gradient descent or Newton's method. When the utility function is nondifferentiable, it can be solved with efficient proximal gradient methods; see, e.g., \cite{parikh2014proximal}.

\end{remark}

\section{Main Results}\label{sec:results}
In this section, we present our main results. Our first main result is an upper bound
on the expected regret of the proposed OCE-VI algorithm. 


\begin{theorem}\label{THM1}
The expected regret of the OCE-VI algorithm satisfies
\begin{align*}
&Regret(\mathcal{M},\textbf{OCE-VI},K)\leq \tilde{\mathcal{O}}\left(\sum_{h=1}^H\vert u(-H+h)\vert S\sqrt{\prod\limits_{i=1}^{h-1} u_{-}'(-H+i)AK}\right),
\end{align*}
where 
$\tilde{\mathcal{O}}(\cdot)$ ignores the logarithmic factors in $S,A,H$ and $K$ and $u'_{-}(\cdot)$ is the left derivative of $u$.
\end{theorem}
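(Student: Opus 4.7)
The plan is to adapt the optimism-in-the-face-of-uncertainty framework of UCBVI \citep{azar2017minimax} to the nonlinear recursive OCE setting, with a change-of-measure linearization as the key new ingredient. I would structure the argument in four stages: a concentration inequality for the OCE operator that justifies the bonus in \eqref{S11}; optimism of the estimated value functions; a one-step recursion obtained by linearizing the OCE difference through its supergradient; and a summation over episodes via standard pigeon-hole arguments.

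The first step is to establish, on a high-probability good event, a concentration bound of the form
\begin{equation*}
\bigl|OCE^u_{s'\sim \hat{P}_h^k(\cdot\vert s,a)}(V(s')) - OCE^u_{s'\sim P_h(\cdot\vert s,a)}(V(s'))\bigr| \le |u(-H+h)|\sqrt{\frac{2\log(SAHK/\delta)}{\max\{1,N_h^k(s,a)\}}}
\end{equation*}
for value functions $V:\mathcal{S}\to[0,H-h]$. Using the variational form $OCE(V)=\sup_\lambda\{\lambda+E[u(V-\lambda)]\}$, a first-order analysis (invoking $u(0)=0$ and $1\in\partial u(0)$) shows that the optimal $\lambda^\ast$ lies in $[0,H-h]$, so $V-\lambda^\ast\in[-(H-h),H-h]$ and, by concavity of $u$ with $u(0)=0$, $|u(V-\lambda^\ast)|\le|u(-H+h)|$. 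Hoeffding's inequality on the bounded random variable $u(V(s')-\lambda)$, together with a covering argument over $\lambda$, yields the displayed bound, which exactly matches $b_h^k(s,a)$ in \eqref{S11}. On this event, a backward induction in $h$ using the monotonicity of $OCE$ in its argument delivers the optimism $\hat{V}_h^k(s)\ge V_h^\ast(s)$, so the regret is upper bounded by $E\bigl[\sum_{k}(\hat{V}_1^k(s_1^k)-V_1^{\pi^k}(s_1^k))\bigr]$.

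The core of the analysis is a one-step recursion for $\delta_h^k(s):=\hat{V}_h^k(s)-V_h^{\pi^k}(s)$. Because $OCE^u$ is concave as a functional of its argument, it admits a supergradient representation at $V_{h+1}^{\pi^k}$: there exists a pointwise multiplier $v_h^k(s')\in\partial u(V_{h+1}^{\pi^k}(s')-\lambda_h^{k,\ast})$ such that
\begin{align*}
&OCE^u_{s'\sim P_h(\cdot\vert s,a)}(\hat{V}_{h+1}^k(s'))-OCE^u_{s'\sim P_h(\cdot\vert s,a)}(V_{h+1}^{\pi^k}(s'))\\
&\qquad\le \sum_{s'}P_h(s'\vert s,a)\,v_h^k(s')\,(\hat{V}_{h+1}^k(s')-V_{h+1}^{\pi^k}(s')).
\end{align*}
Monotonicity and concavity of $u$ force $0\le v_h^k(s')\le u'_{-}(-H+h)$, and factoring this pointwise bound out converts the supergradient estimate back into an expectation under the true transition $P_h$ with multiplicative prefactor $u'_{-}(-H+h)$. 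Combining with the OCE concentration bound applied to $\hat{V}_{h+1}^k$ yields the one-step inequality
\begin{equation*}
\delta_h^k(s)\le 2b_h^k(s,\pi_h^k(s)) + u'_{-}(-H+h)\cdot E_{s'\sim P_h(\cdot\vert s,\pi_h^k(s))}\!\bigl[\delta_{h+1}^k(s')\bigr].
\end{equation*}

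Unrolling this recursion from $h=1$ to $H$ produces, for each step $h$, a term in which the bonus is multiplied by $\prod_{i=1}^{h-1}u'_{-}(-H+i)$ and then evaluated along $\pi^k$-trajectories under the true dynamics. Summing over $k$ and applying the standard Cauchy--Schwarz/pigeon-hole argument to $\sum_k 1/\sqrt{\max\{1,N_h^k(s_h^k,a_h^k)\}}$, together with a uniform-over-$V$ treatment of the concentration step, yields an $S\sqrt{AK}$ scaling per step. Multiplying by the bonus magnitude $|u(-H+h)|$ and summing over $h$ produces the claimed bound. The main obstacle is the change-of-measure step itself: one must correctly identify the supergradient of $OCE$ at $V_{h+1}^{\pi^k}$, control the range of the OCE-optimal $\lambda_h^{k,\ast}$ so that $u'_{-}$ is evaluated only at $-H+h$ (where the derivative is finite) rather than at arbitrarily small arguments, and iterate the recursion while tracking the multiplicative accumulation $\prod_{i} u'_{-}(-H+i)$ without incurring additional loss.
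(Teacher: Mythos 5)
Your overall architecture (concentration to justify the bonus, optimism, supergradient linearization of the OCE, then pigeon-hole summation) mirrors the paper's, but the crucial accumulation step is handled differently and, as written, does not prove the stated bound. After obtaining the supergradient estimate with multipliers $v_h^k(s')\in\partial u(V^{\pi^k}_{h+1}(s')-\lambda^{k,*}_{h+1})$, you factor out the pointwise bound $0\le v_h^k(s')\le u'_-(-H+h)$ and reduce to an expectation under the true transition $P_h$ with prefactor $u'_-(-H+h)$. Unrolling then multiplies the step-$h$ bonus by $\prod_{i=1}^{h-1}u'_-(-H+i)$ \emph{outside} the square root, so your final bound is of order $\sum_h |u(-H+h)|\,S\,\bigl(\prod_{i=1}^{h-1}u'_-(-H+i)\bigr)\sqrt{AK}$. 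Since $1\in\partial u(0)$ and $u$ is concave, $u'_-(-H+i)\ge 1$, so this is weaker (often geometrically weaker in $H$, e.g.\ for CVaR or entropic risk) than the theorem's $\sqrt{\prod_{i=1}^{h-1}u'_-(-H+i)\,AK}$; your claim that this "produces the claimed bound" is therefore an overstatement. The paper avoids this loss by \emph{not} discharging the multipliers pointwise: it keeps them as a Radon--Nikodym change of measure $B_h(s'|s,a)=P_h(s'|s,a)\Lambda^k_{h+1}(s')$ (Lemma \ref{L6}), carries the recursion under the tilted trajectory measure $w^B_{hk}$ (Lemma \ref{L7}), and only at the end applies Cauchy--Schwarz to $\sum_k E[\Lambda^k_2\cdots\Lambda^k_h/\sqrt{N^k_h}]$, exploiting $E[\Lambda^k_2\cdots\Lambda^k_h]=1$ together with $\Lambda^k_{i+1}\le u'_-(-H+i)$ (Lemma \ref{L13}); this is exactly what converts the product into its square root. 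To repair your argument you would need to retain the weights inside the expectation and perform this Cauchy--Schwarz split rather than bounding each weight deterministically at every step.

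A secondary gap: your stated concentration inequality, with bonus exactly $b_h^k$ and no $\sqrt{S}$, cannot hold for the data-dependent $\hat V^k_{h+1}$ via Hoeffding plus a one-dimensional cover over $\lambda$, because $\hat V^k_{h+1}$ is built from the same samples as $\hat P^k_h$. The paper handles the optimal (deterministic) $V^*_{h+1}$ by a martingale argument (Lemmas \ref{L3}--\ref{L4}) and the random $\hat V^k_{h+1}$ via an $L^1$ bound on $\|\hat P^k_h-P_h\|_1$ (Lemmas \ref{PROB}--\ref{LP4}), which inflates that term by $\sqrt{S}$ and is the source of the factor $S$ (rather than $\sqrt{S}$) in the final bound. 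You allude to a "uniform-over-$V$ treatment" at the end and quote the correct $S\sqrt{AK}$ per-step scaling, but your one-step recursion with $2b_h^k$ conceals where this extra $\sqrt{S}$ enters; this part is fixable but needs to be made explicit.
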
 
 The regret upper bound depends on the utility function $u$ in the OCE \eqref{OCE} via the term $\vert u(-H+h)\vert$, which comes from the bonus \eqref{S11}, and the term $\prod\limits_{i=1}^{h-1} u_{-}'(-H+i)$, which comes from bounding the Radon-Nikodym derivative arising from the linearization of the OCE as a concave functional in our regret analysis (see Equation~\eqref{eq:radon}). 
 We provide a sketch of the proof of Theorem~\ref{THM1} in Section~\ref{sec:sketch}, and give the full details in Appendix~\ref{app:B}.

We next present our second main result, which provides a minimax regret lower bound for RL with recursive OCE. We first state the following assumption.

\begin{assumption}\label{ASSU2}
The number of states and actions satisfy $S\geq 6,A\geq 2$, and there exists an integer $d$ such that $S=3+\frac{A^d-1}{A-1}$. In addition, the horizon $H$ satisfies $H\geq c_2 d$, where $c_2>2$ is a constant. 
\end{assumption}

Assumption \ref{ASSU2} is adapted from Assumption 1 in \cite{domingues2021episodic}, who provide a minimax lower bound in the risk-neutral episodic RL setting. 
This assumption is imposed to simplify the analysis, more precisely the construction of hard MDP instances, and it can be relaxed following the discussion in Appendix D of \cite{domingues2021episodic}.  

\begin{theorem}\label{THM3}
Under Assumption \ref{ASSU2}, for any algorithm \textbf{algo}, there exists an MDP $\mathcal{M}$ whose transition probabilities depend on $h$ such that
\begin{align*}
&Regret(\mathcal{M},\textbf{algo},K)\geq \frac{1}{18\sqrt{c_1 c_2}}\cdot \left[u\left(\left(1-\frac{2}{c_2}\right) H-\lambda^*\right)-u(-\lambda^*)\right]\sqrt{SAHK}
\end{align*}
for all $K\geq \frac{c_1 HSA}{2c_2}$, where the constants $c_1\geq 4,c_2>2$ and $\lambda^*$ satisfies 
\begin{align*}
1\in \left(1-\frac{2}{c_1}\right)\partial u\left(\left(1-\frac{2}{c_2}\right)H-\lambda^*\right)+\frac{2}{c_1}\partial u(-\lambda^*).
\end{align*}
\end{theorem}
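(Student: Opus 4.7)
The plan is to adapt the hard-instance construction of \cite{domingues2021episodic} to the recursive OCE setting. I would build a family of MDP instances indexed by a designated critical pair $(\tilde s,\tilde a)$, arranged as follows: the state space contains a deterministic $A$-ary tree of depth $d=\Theta(\log_A S)$ that all policies traverse identically, together with an absorbing ``rewarding'' state $s_{+}$ (yielding reward $1$ per step) and an absorbing ``waiting'' state $s_{0}$ (yielding reward $0$). The transitions are arranged so that the tree feeds into a leaf $\tilde s$ at step $h^*\approx d$; at $\tilde s$, action $\tilde a$ transitions to $s_{+}$ with probability $p+\epsilon$, while every other action transitions to $s_{+}$ with probability only $p$. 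I set $p=1-2/c_1$ and choose $d$ so that the remaining horizon after $\tilde s$ equals $(1-2/c_2)H$, which makes $H\geq c_2 d$ consistent with Assumption~\ref{ASSU2}.

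The core calculation is to evaluate the recursive OCE value functions on this family and extract the per-episode regret gap. Because $s_{+}$ and $s_{0}$ are absorbing with constant cumulative future rewards $(1-2/c_2)H$ and $0$, translation equivariance of OCE (which follows immediately from the definition together with $1\in\partial u(0)$) and an induction down the tree show that the value at $\tilde s$ under transition parameter $q\in\{p,p+\epsilon\}$ equals
\begin{equation*}
f(q) \;:=\; \sup_{\lambda\in\mathbb{R}}\bigl\{\lambda + q\, u((1-2/c_2)H - \lambda) + (1-q)\, u(-\lambda)\bigr\}.
\end{equation*}
The subdifferential condition on $\lambda^*$ stated in the theorem is exactly the first-order optimality condition for the maximizer of $f(p)$, so by concavity of $f$ and the envelope theorem,
\begin{equation*}
f(p+\epsilon) - f(p) \;\geq\; \epsilon\cdot\bigl[\,u\bigl((1-2/c_2)H-\lambda^*\bigr) - u(-\lambda^*)\,\bigr].
\end{equation*}
This is the instantaneous regret incurred whenever the learner plays a suboptimal action at $\tilde s$, and it cleanly isolates the dependence on $u$.

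The information-theoretic step then proceeds essentially as in \cite{domingues2021episodic}, because the trajectory distribution under any algorithm depends only on the underlying transition kernels and reward function, not on the OCE decision criterion: the data-generating process is identical to the risk-neutral case. I would mirror their Fano/Le Cam argument, bounding pairwise KL divergences between neighbouring instances by $O(\epsilon^2)\cdot\E[N_{h^*}(\tilde s,\tilde a)]$ and averaging over the $\Theta(SA)$ possible critical pairs. Under the condition $K\geq c_1 HSA/(2c_2)$, this shows that any algorithm plays a suboptimal action at $\tilde s$ in at least $\Omega(HSA/\epsilon^2)$ episodes on average over the family. Multiplying by the per-episode gap above and optimizing over $\epsilon$ via $\epsilon=\Theta(\sqrt{SAH/K})$ yields $\Omega\bigl([u((1-2/c_2)H-\lambda^*)-u(-\lambda^*)]\sqrt{SAHK}\bigr)$, and tracking all absolute constants produces the prefactor $1/(18\sqrt{c_1 c_2})$.

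The hard part will be the upward propagation of OCE values through the tree. Unlike the risk-neutral case, OCE is nonlinear, so one cannot simply invoke ``expectation-preserving prefix'' reasoning. I intend to sidestep this by making every transition above $\tilde s$ deterministic so that each intermediate step contributes only a constant shift to the value function, which the OCE preserves by translation equivariance; the only randomness in the MDP that affects the value at $\tilde s$ is thus the single Bernoulli decision at $\tilde s$ itself. A secondary subtlety is the treatment of nondifferentiable utilities such as CVaR, where $\lambda^*$ need not be unique: for these the subgradient inclusion in the theorem should be read as an existence condition, and the envelope inequality above still holds with one-sided directional derivatives, following \cite{ben2007old}.
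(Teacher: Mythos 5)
Your value-function analysis on the hard instances is essentially sound: reducing the OCE recursion to the one-dimensional function $f(q)=\sup_{\lambda}\{\lambda+q\,u((1-2/c_2)H-\lambda)+(1-q)\,u(-\lambda)\}$ via shift additivity and deterministic transitions, and extracting the gap through the envelope/subgradient inequality for the convex function $f$, is a legitimate (and in fact slightly cleaner) way to make the theorem's $\lambda^*$ appear; the paper instead lower-bounds $V_1^*-V_1^{\pi^k}$ by plugging the learner's optimizer $\rho$ into the optimal value and then proves existence of a $\lambda^*\in[0,\rho]$ satisfying the stated inclusion with $p=1-2/c_1$, using monotonicity of $\lambda\mapsto u(T-\lambda)-u(-\lambda)$. (One small fix on your side: the learner's value is $f(q_k)$ with $q_k=p+\epsilon\,\Prob(\text{visit})\in[p,p+\epsilon]$, so you need the subgradient inequality at $q_k$, not at $p$; monotonicity of the subgradient of the convex $f$ makes this work.)

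The genuine gap is in the information-theoretic step, and it is fatal to the claimed $\sqrt{SAHK}$ rate. You index the hard instances only by the critical pair $(\tilde s,\tilde a)$, make all transitions above the leaf deterministic, and fix the arrival time at the leaf to $h^*\approx d$. That gives only $\Theta(SA)$ instances, each distinguishable through at most one visit to $(\tilde s,\tilde a)$ per episode; the standard averaging/KL argument then forces $\epsilon=\Theta(\sqrt{SA/K})$ and yields only $\Omega\bigl(\mathrm{gap}\cdot\sqrt{SAK}\bigr)$. Your claims of ``$\Omega(HSA/\epsilon^2)$ episodes'' and $\epsilon=\Theta(\sqrt{SAH/K})$ are not supported by a family of size $\Theta(SA)$. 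The extra $\sqrt{H}$ in the paper comes precisely from indexing the instances also by the step $h^*\in\{1+d,\dots,\bar H+d\}$ at which the perturbed transition occurs, which requires the waiting state $\tilde s_w$ where the agent may stay up to $\bar H=H/c_2$ steps before traversing the tree, and hence $h$-dependent transition kernels (this is why the theorem statement says the transitions depend on $h$). With that construction the family has size $\bar H LA=\Theta(HSA)$, the optimization gives $\epsilon=\Theta(\sqrt{\bar H LA/K})$, and the condition $K\ge c_1HSA/(2c_2)$ is exactly what keeps this $\epsilon$ within the feasibility range of the KL bound $kl(p,p+\epsilon)\le c_1\epsilon^2/p$. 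A secondary slip: you cannot ``choose $d$'' to tune the remaining horizon, since $d$ is fixed by $S$ and $A$ through Assumption~\ref{ASSU2}; in the paper the factor $(1-2/c_2)H$ arises from $H-\bar H-d\ge H-2H/c_2$ with the choice $\bar H=H/c_2$ and $d\le H/c_2$.
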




Note that when $u(t)=t,$ OCE becomes expectation, and our regret lower bound in Theorem~\ref{THM3} is $\Omega(H\sqrt{SAHK})$, by choosing for instance $c_1=4$ and $c_2=3$. This recovers the (tight) regret lower bound in \cite{domingues2021episodic} in learning risk-neutral tabular MDP. For a general utility function $u$ in OCE,  
the choices of  constants $c_1\geq 4, c_2>2$ should be based on the specific utility function to generate tighter lower bounds. For illustrations, we provide some examples in Section~\ref{SECEX}.

The proof of Theorem~\ref{THM3} is based on extending the proof of Theorem 9 in \cite{domingues2021episodic} to our risk-sensitive setting.
There are essential difficulties in this extension. These include how to construct hard MDP instances that adapt to the OCE, and how to bound the value functions defined recursively via OCE that involves an optimization problem. Due to space limitations, we provide the proof details in Appendix~\ref{sec:THM-lowerB}. 

\begin{remark}
For the simplicity of presentation, we focus on OCE in \eqref{OCE}, which exhibits the risk aversion property with $OCE^u(X) \le E[X]$, due to the concavity of the utility function; see Proposition 2.2 in \cite{ben2007old}. Our main results in the paper hold in the risk-seeking setting as well, where $OCE^u(X)$ is defined by $\inf_{\lambda\in \mathbb{R}}\{\lambda+E[u(X-\lambda)]\}$ with a convex utility function $u$. In this case, we need to use a bonus $b_h^k(s,a)=\vert u(-H+h)\vert \sqrt{\frac{2S\log\left(\frac{SAHK}{\delta}\right)}{\max\{1,N_h^k(s,a)\}}}$ in the OCE-VI algorithm. Compared with \eqref{S11}, this bonus has an extra term $\sqrt{S}$, which arises from a technical step in the proof for the risk-seeking case (see inequality (2) of Lemma \ref{L4}). The regret bounds still hold in this setting. 
\end{remark}



\subsection{Examples and Comparisons to Related Work}\label{SECEX}
We consider several specific utility functions and the resulting OCEs to illustrate our regret bounds in Theorems \ref{THM1} and \ref{THM3}. 

\subsubsection{Mean-variance Model}
When the utility function is $u_c(t)=(t-ct^2)1\{t\leq \frac{1}{2c}\}+\frac{1}{4c}1\{t> \frac{1}{2c}\}$, the corresponding OCE is the celebrated mean-variance model \cite{markowitz1952}, where $c>0$ is a given risk parameter representing the degree of risk aversion. To the best of our knowledge, the following results are the first regret bounds for risk-sensitive MDPs with the recursive mean-variance model.


\begin{itemize}
 \item Upper bound. Our regret upper bound in Theorem~\ref{THM1} is $\tilde{\mathcal{O}}\left((1+2cH)^{\frac{H-1}{2}}(H^2+cH^3)S\sqrt{AK}\right)$.

\item Lower bound. We can choose $c_1=8,c_2=4$, and then $\lambda^*=\left(1-\frac{2}{c_1}\right)\left(1-\frac{2}{c_2}\right)H = 3H/8$. The regret lower bound in Theorem \ref{THM3} becomes $\Omega\left((H+\frac{1}{4}cH^2)\sqrt{SAHK}\right)$. 
\end{itemize}

\subsubsection{(Iterated) CVaR}
When the utility function is $u_{\alpha}(t)=-\frac{1}{\alpha}[-t]_{+},\alpha>0$, the corresponding OCE is CVaR, where $\alpha>0$ is the risk level of CVaR. Our RL formulation in Section~\ref{sec:epis} reduces to the one in \cite{du2022risk}, and our OCE-VI algorithm becomes their ICVaR algorithm with a smaller exploration bonus. 

\begin{itemize}
\item Upper bound. Our regret upper bound in Theorem \ref{THM1} becomes $\tilde{\mathcal{O}}\left(\frac{(\frac{1}{\sqrt{\alpha}})^H-1-H(\frac{1}{\sqrt{\alpha}}-1)}{(1-\sqrt{\alpha})^2}S\sqrt{AK}\right)$. When $0< \alpha\leq \frac{3-\sqrt{5}}{2}$, this upper bound can be further bounded by $\tilde{\mathcal{O}}\left(\left(\frac{1}{\sqrt{\alpha^{H+1}}}-\frac{H}{\sqrt{\alpha}}\right)S\sqrt{AK}\right)$. When $\frac{3-\sqrt{5}}{2}<\alpha< 1$, the regret bound can be further bounded by $\tilde{\mathcal{O}}\left(\frac{H^2 S\sqrt{AK}}{\sqrt{\alpha^{H+1}}}\right)$.
\cite{du2022risk} design the ICVaR algorithm and can obtain a worst-case regret upper bound of $\tilde{\mathcal{O}}\left(\frac{H^2 S\sqrt{AK}}{\sqrt{\alpha^{H+1}}}\right)$.\footnote{\cite{du2022risk} consider stationary MDPs, and we modify their regret bounds to adapt to our non-stationary setting. 
} 
Our result improves the result of \citep{du2022risk} by a factor of $H^2$ when $0< \alpha\leq \frac{3-\sqrt{5}}{2}$. This is due to a smaller exploration bonus used in our algorithm compared with theirs.  
\item Lower bound. We can choose $c_1=\frac{2}{\alpha}$ and $c_2=4$ in Theorem~\ref{THM3}, and let $\lambda^* = \left(1-\frac{3}{c_2}\right)H$. Then,
our regret lower bound becomes $\Omega\left(H\sqrt{\frac{SAHK}{\alpha}}\right)$ and it is problem-independent. This is in contrast with \cite{du2022risk}, who derive a regret lower bound that depends on some problem-dependent quantity, specifically, the minimum probability of visiting an available state under any feasible policy. 


\end{itemize}

\subsubsection{Entropic Risk}
When the utility function is $u_{\beta}(t)=\frac{1}{\beta}e^{\beta t}-\frac{1}{\beta},\beta<0$, the corresponding OCE is entropic risk, where $\beta<0$ is a given risk parameter representing the degree of risk aversion. In this case, our RL formulation in Section~\ref{sec:epis} is equivalent to the one in \cite{fei2020risk}. Note, however, that our OCE-VI algorithm is model-based and is different from the model-free algorithms proposed in \cite{fei2020risk}.

\begin{itemize}
\item Upper bound. Our regret upper bound in Theorem \ref{THM1} for the OCE-VI algorithm becomes 

$\tilde{\mathcal{O}}\left (\exp(-\frac{\beta H^2}{4})\frac{\exp(-\beta H)-1}{-\beta}S\sqrt{AK}\right )$. 
This bound has a factor that is exponential in $|\beta|H^2$, which is similar as the bounds in \cite{fei2020risk}. Recently, \cite{fei2021exponential} propose the RSVI2 and RSQ2 algorithms, and they manage to remove this factor.
Their algorithms are based on the nice properties of the exponential utility, in particular, the so-called exponential Bellman equation which takes the exponential on both sides of the Bellman equation in \cite{fei2020risk}. However, such techniques can not be applied to our general setting, because general utility functions do not possess the same nice properties as the exponential function. Even though our upper bound is worse than the one in \cite{fei2021exponential}, we show numerically that our algorithm can outperform their algorithms on randomly generated MDP instances. See Appendix~\ref{sec:experiment} for experimental details.

\item Lower Bound. We can choose $c_1=\frac{2}{e^{-\beta}-1}\cdot \exp\left(-\beta \left(1-\frac{2}{c_2}\right)H\right)$ and $c_2=6$ in Theorem~\ref{THM3}, and  
the corresponding $\lambda^*=\frac{1}{\beta}\log\left(\left(1-\frac{2}{c_1}\right)\exp\left(\beta\left(1-\frac{2}{c_2}\right)H\right)+\frac{2}{c_1}\right)$. Then our regret lower bound becomes $\Omega\left(\frac{\exp\left(-\frac{1}{3}\beta H\right)-1}{-\beta}\sqrt{SAHK}\right)$. By contrast, \cite{fei2020risk} derive a regret lower bound of $\Omega\left(\frac{\exp(\frac{1}{2}\vert\beta\vert H)-1}{\vert\beta\vert}\sqrt{K}\right)$, which does not depend on $S$ or $A$ (due to the simple structure of the hard instances they construct). \cite{liang2022bridging} derive a regret lower bound $\Omega\left(\frac{\exp(\frac{1}{6} |\beta| H)-1}{|\beta|}\sqrt{SAHK}\right)$ in the risk-seeking setting when $\beta>0$, but they mention that it is unclear whether a similar bound holds in the risk-averse setting when $\beta<0$; see page 30 of their paper.



\end{itemize}


\subsection{Discussions on tightness of our regret bounds}



Theorems \ref{THM1} and \ref{THM3} imply that the OCE-VI algorithm achieves a regret rate with the optimal dependence on the number of episodes $K$ and the number of actions $A$, up to logarithmic factors. While the bounds on
$K$ are the most important as they imply the convergence rates of learning algorithms, it remains an important open question whether one can improve the dependence of these bounds on $H$ and $S$ to narrow down the gap between the upper and lower bounds in the risk-sensitive RL setting. We elaborate further on this issue below.

From Theorems~\ref{THM1} and \ref{THM3},  we can see that the gap between our upper and lower bounds in terms of $S$ is $\sqrt{S}$, where $S$ is the number of states. The extra $\sqrt{S}$ in our regret upper bound arises from a step in our proof where we apply an $L^1$ concentration bound for the $S$-dimensional empirical transition probability vector, see Equation~\eqref{eq:oce-concen2} in Section~\ref{sec:sketch}. This extra $\sqrt{S}$ factor can be removed in RL for \textit{risk-neutral} MDPs by directly maintaining confidence intervals on the optimal value function; see, e.g., \cite{azar2017minimax, zanette2019tighter}. However, it is not clear how to adapt this technique to our risk-sensitive setting, i.e., remove $\sqrt{S}$ in \eqref{eq:oce-concen2}. This is primarily because the estimated value functions $\hat V^k_h$ in our algorithm are not only random, but they also involve OCE which is nonlinear and defined by an optimization problem (so the optimizer is also random). 



There is an exponential gap in terms of $H$ between our upper and lower bounds. This gap is due to the linearization of OCE in the recursive procedure of the regret analysis of our algorithm. Indeed, if $u(t)=t$, the corresponding regret upper bound is $\tilde{\mathcal{O}}(H^2S\sqrt{AK})$, which does not have the exponential term of $H$. In the risk-neutral setting, one can improve the dependence of the upper bound on $H$ by considering Bernstein-style exploration bonus which is built from the empirical variance of the estimated value function $\hat V^k_h$ at the next state, see, e.g., \cite{azar2017minimax}. However, it is still an open problem how to use Bernstein bonus to improve the regret bound in risk-sensitive RL \citep{fei2021exponential, du2022risk}. In our RL setting with recursive OCEs, it is possible to design a Bernstein-type bonus, but it may not lead to improved regret bounds, at least within our current analysis framework. We provide some informal discussions below including the challenges in improving bounds.

First, to ensure optimism with Bernstein-type (or variance-related) bonuses, we need analogous results to Lemmas~\ref{L3} and \ref{L4} in the appendix, which provide concentration bounds for OCEs of next-state value functions. Using Bernstein inequality instead of Hoeffding inequality, the confidence bound in Lemma \ref{L3} becomes 
\begin{align*}
&\sqrt{\frac{2\operatorname{Var}_{s'\sim P_h(\cdot\vert s,a)}\left(u(V_{h+1}^*(s')-\lambda^*_{h+1})\right)\log\left(\frac{SAHK}{\delta}\right)}{N_h^k(s,a)}} \\
&+\text {lower order term}.
\end{align*}
This bound allows us to design a Bernstein-type bonus $b_h^k(s,a)$ in the form of 
\begin{align*}
&2 \underbrace{ \sqrt{\frac{\operatorname{Var}_{s' \sim \hat{P}_h^{k}\left(\cdot \mid s,a\right)}\left(u(\hat{V}_{h+1}^k(s')-\hat{\lambda}^k_{h+1})\right)\log\left(\frac{SAHK}{\delta}\right)}{N_h^k(s,a)}}}_{\text {main term }} \\
&+\text {lower order term}.
\end{align*}
Compared with the Bernstein bonus in the risk-neutral RL setting (see e.g. \citet{azar2017minimax}), $\hat{V}_{h+1}^k(s')$ inside the variance operator is replaced by $u(\hat{V}_{h+1}^k(s')-\hat{\lambda}^k_{h+1})$ in our risk-sensitive RL setting. We use this approach because the OCE involves an optimization problem and we need to `linearize' it (i.e., remove the sup in the definition of OCE) and work with the utility $u$ applied to the value function first in order to derive concentration bounds for OCEs.
With this new bonus, we might be able to get the same regret bound as the one presented in the current paper.

However, it is difficult to get improved bounds as we explain below. In the risk-neutral setting, \citet{azar2017minimax} 
use an iterative-Bellman-type-Law of Total Variance so that the sum of the variances of $V_{h+1}^*$ over $H$ steps is bounded by the variance of the sum of $H$-step rewards; see Equation~(26) in \citet{azar2017minimax} and Lemma C.5 in \citet{jin2018q} for a proof of this result. This is a key technical result in obtaining improved bounds in $H$. However, this result does not hold in our setting for two reasons: first, our value function is not the expected sum of $H$-step rewards; second, while the value $V_{h+1}^*$ satisfies a Bellman recursion, the quantity $u(V_{h+1}^*(s')-\lambda^*_{h+1})$ (that appears in the variance operator) does not. Therefore, we may still have to use the crude bound for the variance term in the Bernstein-type bonus by using a maximum bound for $u(V_{h+1}^*(s')-\lambda^*_{h+1})$. This leads to the same bound as in our current paper and we do not obtain improvements in the regret with respect to $H$.

\section{Proof Sketch of Theorem~\ref{THM1}}\label{sec:sketch}
The structure of the proof of Theorem~\ref{THM1} follows the optimism principle in provably efficient risk-neutral RL (see, e.g., \citep[Chapter 7]{agarwal2019reinforcement}), however, we provide two new ingredients in our analysis: (a) concentration bounds for the OCE of the next-state
value function under estimated transitions (see \eqref{eq:oce-concen1} and \eqref{eq:oce-concen2}), and (b) a change-of-measure technique to bound the OCE of the estimated value function (under the
true transition) with an affine function (see \eqref{eq:measure-change}), and bound the the Radon-Nikodym derivative (see \eqref{eq:radon}). For notational simplicity, we use $P_h$ to denote $P_h(s_{h+1}^k\vert s_h^k,a_h^k)$ when there is no ambiguity.



\noindent\textbf{Step 1: Optimism.} We can first show optimism, i.e., the event $\hat{V}_{h}^k\geq V_{h}^*$ for all $h,k$ holds with a high probability, where $\hat{V}_{h}^k$ is the estimated value function in our algorithm in episode $k$.
 This step relies on a concentration bound of the OCE of the optimal value function under the estimated transitions $\hat P_h$: with probability $1-\delta$ (where $\delta \in (0,1)$),
\begin{align}\label{eq:oce-concen1}
OCE^u_{P_h}(V_{h+1}^*)-OCE^u_{\hat{P}^k_h}(V_{h+1}^*) \le b_h^k. 
\end{align}
This bound can be proved by using the representation of the OCE in \eqref{OCE}, together with similar martingale arguments used in the risk-neutral RL setting \citep[Lemma 7.3]{agarwal2019reinforcement}. By optimism, the regret in \eqref{eq:regret} is upper bounded by $E[\sum_{k=1}^K(\hat{V}_1^k-V_1^{\pi^k})]$. 


\noindent\textbf{Step 2: Bounding $\hat{V}_h^k-V_h^{\pi^k},\forall k,h$.} By definition, 
\begin{align*}
\hat{V}_h^k-V_h^{\pi^k}&\le b_h^k+OCE^u_{\hat{P}^k_h}(\hat{V}_{h+1}^k)-OCE_{P_h}^u(V^{\pi^k}_{h+1})\\
&=b_h^k+ \left[OCE^u_{\hat{P}^k_h}(\hat{V}_{h+1}^k)-OCE_{P_h}^u(\hat{V}_{h+1}^k) \right] + \left[OCE_{P_h}^u(\hat{V}_{h+1}^k)-OCE_{P_h}^u(V^{\pi^k}_{h+1})\right].
\end{align*}

\noindent\textbf{Step 2.1:} The second term in the above equation can be bounded by using a concentration result for the OCE of the estimated value function $\hat{V}_{h+1}^k$: with probability $1-\delta,$ 
\begin{align} \label{eq:oce-concen2}
OCE^u_{\hat{P}^k_h}(\hat{V}_{h+1}^k)-OCE_{P_h}^u(\hat{V}_{h+1}^k) \le \sqrt{S} \cdot b_h^k. 
\end{align}
The extra $\sqrt{S}$ factor, compared with \eqref{eq:oce-concen1}, is because both $\hat{V}_{h+1}^k$ and $\hat{P}^k_h$ are random and we use $L^1$ concentration bounds for $||\hat{P}^k_h - {P}_h||_{1}$ as in \cite{jaksch10a}.  


\noindent\textbf{Step 2.2:} The third term $OCE_{P_h}^u(\hat{V}_{h+1}^k)-OCE_{P_h}^u(V^{\pi^k}_{h+1})$ is more difficult to bound. 
Because the OCE is a concave nonlinear functional, we expect that $$OCE_{P_h}^u(\hat{V}_{h+1}^k)-OCE_{P_h}^u(V^{\pi^k}_{h+1}) \le \ell ( \hat{V}_{h+1}^k - V^{\pi^k}_{h+1}),$$ where $\ell(\cdot)$ is a linear function of random variables and it is a subgradient of the OCE. We actually show that $\ell$ can be represented in the form of an expectation:
\begin{align}\label{eq:measure-change}
OCE_{P_h}^u(\hat{V}_{h+1}^k)-OCE_{P_h}^u(V^{\pi^k}_{h+1}) \le E_{B_h} ( \hat{V}_{h+1}^k - V^{\pi^k}_{h+1}),
\end{align}
where the expectation $E_{B_h}[\cdot]$ is taken with respect to a new probability distribution $B_h$ that is linked to the true transition distribution $P_h$ by change-of-measure. Specifically, 
using the first order optimality condition for $OCE^u_{P_h}(V_{h+1}^{\pi^k})$ as a concave optimization problem, we have $1\in E_{s'\sim P_h}[\partial u(V_{h+1}^{\pi^k}(s')-\lambda_{h+1}^k)]$ where $\lambda_{h+1}^k$ is an optimal solution. We can find a subgradient $\Lambda_{h+1}^k(s')\in \partial u(V_{h+1}^{\pi^k}(s')-\lambda_{h+1}^k)$ that satisfies $E_{P_h}[\Lambda_{h+1}^k]=1$, and define the new distribution $B_h$ by
$$B_h(s'|s,a)=P_h(s'|s,a)\Lambda_{h+1}^k(s'), \quad \forall s'\in\mathcal{S}.$$ Here, $\Lambda_{h+1}^k$ is the Radon-Nikodym derivative.

By combining Steps 2.1 and 2.2, we obtain
\begin{align}\label{eq:recur1}
\hat{V}_h^k-V_h^{\pi^k}&\le 2 \sqrt{S} \cdot b_h^k+ E_{B_h}[\hat{V}_{h+1}^k-V_{h+1}^{\pi^k}].
\end{align}

\noindent\textbf{Step 3: Bounding the regret.} 
Applying \eqref{eq:recur1} recursively over $h$ and using \eqref{S11}, we have that with probability $1- 2 \delta,$
\begin{align}\label{eq:step3}
&\sum_{k=1}^K \left(\hat{V}_1^k-V_1^{\pi^k} \right)\leq \sum_{h=1}^H \sum_{k=1}^K E_{w_{hk}^{B}}\left [2\sqrt{2} \vert u(-H+h)\vert\sqrt{\frac{S\log\left (\frac{SAHK}{\delta}\right )}{N_h^k}} \right ],
\end{align}
where $w_{hk}^B$ is the probability of $\pi^k$ visiting $(s_h^k,a_h^k)$ at step $h$ starting from $s_1^k$ under probability measures $B_i(\cdot\vert s_i^k,a_i^k),i=1,\cdots,h-1$. Specifically,
\begin{equation*}
E_{w_{hk}^{B}}[\cdot] :=\left\{
	\begin{aligned}
	&1 \quad &h=1, \\
	&E_{B_1}\left [E_{B_2}\left [\cdots E_{B_{h-1}}\left [\cdot\right ]\right ]\right ] \quad &h\geq 2. \\
	\end{aligned}
	\right
	.
\end{equation*}

The main difficulty in bounding $E[\sum_{k=1}^K(\hat{V}_1^k-V_1^{\pi^k})]$ is that $w_{hk}^B$ is built upon the probability measure $B_h$ for any $k\in [K],h\in [H]$ while we have to take expectation under probability measure $P_h$ outside the summation over $k\in [K]$. To address this issue, we first note that $E_{w_{hk}^B}\left[\frac{1}{\sqrt{N_h^k}}\right] = E\left[\Lambda_2^k\cdots \Lambda_h^k \frac{1}{\sqrt{N_h^k}}\Bigg\vert s_1^k,a_1^k\right]$, which implies $E\left[\sum_{k=1}^K E_{w_{hk}^B}\left[\frac{1}{\sqrt{N_h^k}}\right]\right] = 
\sum_{k=1}^K E\left[\Lambda_2^k\cdots \Lambda_h^k \frac{1}{\sqrt{N_h^k}}\right].$
Using Cauchy–Schwarz inequality, it can be upper bounded by $\sqrt{\sum_{k=1}^K E[\Lambda_2^k\cdots \Lambda_h^k]^2\cdot \sum_{k=1}^K E\left[\frac{1}{N_h^k}\right]}$. It is well-known that $\sum_{k=1}^K\frac{1}{N_h^k} \le SA \log(3K)$ \citep{azar2017minimax}. One can show that $E[\Lambda_2^k\cdots \Lambda_h^k]=1$ and $0 \le \Lambda_{i+1}^k \le u_{-}'(-H+i)$. Then we have 
\begin{align}\label{eq:radon}
\sum_{k=1}^K E[\Lambda_2^k\cdots \Lambda_h^k]^2 \le \prod\limits_{i=1}^{h-1} u_{-}'(-H+i)K.
\end{align}
Summing over $h$, choosing
$\delta=\frac{1}{2KH}$ and applying a standard argument (see, e.g., Chapter 7.3 of \cite{agarwal2019reinforcement} ), we obtain the bound on the expected regret in Theorem \ref{THM1}.



\section{Conclusion and Future Work}\label{sec:conclusion}

In this paper we have proposed a risk-sensitive RL formulation based on episodic finite MDPs with recursive OCEs. We develop a learning algorithm, OCE-VI,  and establish a worst-case regret upper bound. We also prove a regret lower bound, showing that the regret rate achieved by our proposed algorithm actually has the optimal dependence on the numbers of episodes and actions. Because OCEs encompass a wide family of risk measures, our paper generates new regret bounds for episodic risk-sensitive RL problems with those risk measures. 

Regret minimization for risk-sensitive MDPs is still largely unexplored. For future work, one important direction is to improve regret bounds in the number of states and the horizon length. Other interesting directions include, to name a few, studying large or continuous state/action spaces, considering risk measures other than OCEs, and obtaining problem-dependent regret bounds.


\newpage
\bibliographystyle{chicago}
\bibliography{main}


\newpage
\appendix
\onecolumn

\section{Preliminary Lemmas}
In this section, we present some preliminary lemmas that will be used in the proofs of Theorems~\ref{THM1} and \ref{THM3}. 

Lemma \ref{L0} is \citep[Theorem 2.1]{ben2007old} and it summarizes
some fundamental properties of OCE. 

\begin{lemma}(Main Properties of OCE)\label{L0}
For any utility function $u\in U_0$, and any bounded random variable $X$ the following properties hold:\\
(a)Shift Additivity. $OCE^u(X+c)=OCE^u(X)+c,\forall c\in \mathbb{R}$.\\
(b)Consistency. $OCE^u(c)=c$, for any constant $c\in \mathbb{R}$.\\
(c)Monotonicity. Let $Y$ be any random variable such that $X(w)\leq Y(w),\forall w\in \Omega$. Then,
\begin{align*}
OCE^u(X)\leq OCE^u(Y).
\end{align*}
(d)Concavity. For any random variables $X_1,X_2$ and any $\mu\in (0,1)$, we have
\begin{equation*}
OCE^u(\mu X_1+(1-\mu)X_2)\geq \mu OCE^u(X_1)+(1-\mu)OCE^u(X_2).
\end{equation*} 
\end{lemma}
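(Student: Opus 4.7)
The plan is to work directly from the definition $OCE^u(X)=\sup_{\lambda\in\mathbb{R}}\{\lambda+E[u(X-\lambda)]\}$ and exploit the defining properties of the normalized utility class $U_0$, namely monotonicity, concavity, $u(0)=0$, and $1\in\partial u(0)$. Parts (a)--(c) are largely bookkeeping, while (d) is the only place where care with the supremum is needed.

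For (a), I would reparametrize the supremum by setting $\mu=\lambda-c$, so that $OCE^u(X+c)=\sup_\mu\{\mu+c+E[u(X-\mu)]\}=c+OCE^u(X)$. For (b), I would apply (a) to $X\equiv 0$ and reduce to showing $OCE^u(0)=0$: substituting $\mu=-\lambda$, this becomes $\sup_\mu\{u(\mu)-\mu\}$. Since $1\in\partial u(0)$ and $u$ is concave, the supporting hyperplane inequality $u(\mu)\le u(0)+\mu=\mu$ holds for all $\mu$, while equality is attained at $\mu=0$ (using $u(0)=0$), so the supremum is exactly $0$. Part (c) is immediate: if $X\le Y$ pointwise, then by monotonicity of $u$ we have $u(X-\lambda)\le u(Y-\lambda)$ pointwise for every fixed $\lambda$, hence $\lambda+E[u(X-\lambda)]\le \lambda+E[u(Y-\lambda)]$, and taking the supremum over $\lambda$ preserves the inequality.

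The real content is (d). Given $X_1,X_2$ and $\mu\in(0,1)$, I would select $\varepsilon$-optimizers $\lambda_1,\lambda_2$ so that $\lambda_i+E[u(X_i-\lambda_i)]\ge OCE^u(X_i)-\varepsilon$. Taking the convex combination $\lambda_\mu:=\mu\lambda_1+(1-\mu)\lambda_2$ as a candidate in the supremum defining $OCE^u(\mu X_1+(1-\mu)X_2)$ yields
\begin{equation*}
OCE^u(\mu X_1+(1-\mu)X_2)\ge \lambda_\mu+E\bigl[u\bigl(\mu(X_1-\lambda_1)+(1-\mu)(X_2-\lambda_2)\bigr)\bigr].
\end{equation*}
Then I would apply concavity of $u$ inside the expectation to obtain $u(\mu(X_1-\lambda_1)+(1-\mu)(X_2-\lambda_2))\ge \mu u(X_1-\lambda_1)+(1-\mu)u(X_2-\lambda_2)$ pointwise, take expectations, regroup the $\lambda_i$ terms with the corresponding expectations, and use the $\varepsilon$-optimality to conclude that the right-hand side is at least $\mu\, OCE^u(X_1)+(1-\mu)\, OCE^u(X_2)-\varepsilon$. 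Letting $\varepsilon\downarrow 0$ gives the claim.

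I do not anticipate a serious obstacle: the only technical subtlety is that the supremum in the definition of OCE may not be attained when $u$ is not strictly concave or is unbounded above in some region, and (b) requires that $1\in\partial u(0)$ is used carefully to rule out $\sup_\mu\{u(\mu)-\mu\}>0$. The $\varepsilon$-argument in (d) handles the attainment issue cleanly, and boundedness of $X$ together with $\text{dom}\,u\neq\emptyset$ ensures all expectations are finite so no integrability issues arise.
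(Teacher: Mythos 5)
Your proof is correct. Note that the paper itself does not prove this lemma at all: it simply cites Theorem 2.1 of Ben-Tal and Teboulle (2007), so your write-up supplies a self-contained verification where the paper relies on an external reference. Your argument is essentially the standard one for that cited result: (a) by the substitution $\mu=\lambda-c$; (b) by reducing to $\sup_{\mu}\{u(\mu)-\mu\}=0$, which follows from the supporting-line inequality $u(\mu)\le\mu$ (a consequence of concavity, $u(0)=0$ and $1\in\partial u(0)$) together with equality at $\mu=0$; (c) by monotonicity of $u$ inside the expectation followed by taking suprema; and (d) by plugging the convex combination $\mu\lambda_1+(1-\mu)\lambda_2$ of $\varepsilon$-optimizers into the supremum and using pointwise concavity of $u$. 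The $\varepsilon$-argument correctly sidesteps the attainment issue, and your closing remark about finiteness is the right one: boundedness of $X$ gives $OCE^u(X)\le E[X]<\infty$ (since $u(x)\le x$), while choosing $\lambda$ so that $X-\lambda$ lies in $\mathrm{dom}\,u$ shows $OCE^u(X)>-\infty$, so finite-valued $\varepsilon$-optimizers exist and all the expectations you manipulate in (d) are finite. What your route buys is a short, fully elementary proof inside the paper; what the paper's citation buys is brevity and the additional properties (e.g.\ the full characterizations in Ben-Tal--Teboulle) that it does not need to reprove.
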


The following lemma provides preliminary bounds for the value functions (see \eqref{eq:Qpi} and \eqref{eq:Vpi}) and the regret of learning algorithms.
\begin{lemma}\label{L1}
For any $s\in \mathcal{S},a\in \mathcal{A},h\in [H],\pi \in \mathcal{\varPi}$  and $u\in U_0$, we have $Q_h^{\pi}(s,a)\in [0,H-h+1]$ and $V_h^{\pi}(s)\in [0,H-h+1]$. Consequently, for each $K\geq 1$, we have $0\leq Regret(\mathcal{M},\textbf{algo},K)\leq KH$ for any \textbf{algo}.
\end{lemma}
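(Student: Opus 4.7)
The plan is to perform a backward induction on $h$ from $h=H+1$ down to $h=1$, using the recursive definitions \eqref{eq:Qpi}--\eqref{eq:Vpi} together with the monotonicity and consistency properties of OCE stated in Lemma~\ref{L0}.

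The base case is immediate: by definition $V_{H+1}^\pi(s) = 0 \in [0,0]$ for every $s$. For the inductive step, suppose $V_{h+1}^\pi(s') \in [0, H-h]$ for all $s'\in\mathcal{S}$. Viewing $V_{h+1}^\pi$ as a bounded random variable under $P_h(\cdot\mid s,a)$, monotonicity (part (c) of Lemma~\ref{L0}) applied to the pointwise bounds $0 \le V_{h+1}^\pi(s') \le H-h$ yields
\[
\mathrm{OCE}^u(0) \le \mathrm{OCE}^u_{s'\sim P_h(\cdot\mid s,a)}(V_{h+1}^\pi(s')) \le \mathrm{OCE}^u(H-h),
\]
and by consistency (part (b)) the two endpoints equal $0$ and $H-h$ respectively. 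Combined with the reward bound $r_h(s,a)\in[0,1]$ and \eqref{eq:Qpi}, this gives $Q_h^\pi(s,a)\in[0, H-h+1]$. Then $V_h^\pi(s) = Q_h^\pi(s,\pi_h(s)) \in [0, H-h+1]$ follows from \eqref{eq:Vpi}. This closes the induction and proves the first claim for every $\pi$, including the optimal policy $\pi^*$, so the same bounds hold for $Q_h^*$ and $V_h^*$.

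For the regret claim, the non-negativity $\mathrm{Regret}(\mathcal{M},\textbf{algo},K)\ge 0$ follows from the fact that $V_1^*(s_1^k) = \max_{\pi}V_1^\pi(s_1^k) \ge V_1^{\pi^k}(s_1^k)$ for each episode. The upper bound follows by applying the just-established range $V_1^*(s_1^k), V_1^{\pi^k}(s_1^k)\in[0,H]$: each episodic gap $V_1^*(s_1^k)-V_1^{\pi^k}(s_1^k)$ is at most $H$, and summing over $k=1,\dots,K$ and taking expectations gives $\mathrm{Regret}(\mathcal{M},\textbf{algo},K)\le KH$.

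There is essentially no obstacle; the only thing to be careful about is citing the correct OCE properties (consistency applied to constants $0$ and $H-h$, and monotonicity applied pointwise to random variables), rather than attempting to manipulate the supremum in definition \eqref{OCES3} by hand. Everything else is a one-line induction plus a termwise bound on the regret sum.
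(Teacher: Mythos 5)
Your proof is correct and follows essentially the same route as the paper: a backward induction from $V_{H+1}^\pi=0$ using the consistency and monotonicity properties of the OCE (Lemma~\ref{L0}(b),(c)) together with $r_h\in[0,1]$, and then a termwise bound on the regret sum. The paper just presents the induction informally (computing steps $H$ and $H-1$ and "repeating the procedure"), whereas you state it as a clean induction; no substantive difference.
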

\begin{proof} Recall that $Q_h^{\pi}(s,a)=r_h(s,a)+OCE^{u}_{s'\sim P_h(\cdot\vert s,a)}(V^{\pi}_{h+1}(s'))$ and $V_{H+1}^{\pi}(s)=0,\forall s\in\mathcal{S}$. Then, we can calculate that
\begin{align*}
&Q_H^{\pi}(s,a)=r_H(s,a)+OCE^{u}_{s'\sim P_H(\cdot\vert s,a)}(V_{H+1}^{\pi}(s'))\overset{(1)}{=}r_H(s,a)\in [0,1],\\
&V_H^{\pi}(s)=\max_{a\in \mathcal{A}}r_H(s,a)\in [0,1],
\end{align*}
where equality (1) is due to property (b) in Lemma \ref{L0}. Hence, we have 
\begin{align*}
&Q_{H-1}^{\pi}(s,a)=r_{H-1}(s,a)+OCE^{u}_{s'\sim P_{H-1}(\cdot\vert s,a)}(V_{H}^{\pi}(s'))\overset{(1)}{\leq} r_{H-1}(s,a)+OCE^{u}_{s'\sim P_{H-1}(\cdot\vert s,a)}(1)\in [1,2],\\
&Q_{H-1}^{\pi}(s,a)=r_{H-1}(s,a)+OCE^{u}_{s'\sim P_{H-1}(\cdot\vert s,a)}(V_{H}^{\pi}(s'))\overset{(2)}{\geq} r_{H-1}(s,a)+OCE^{u}_{s'\sim P_{H-1}(\cdot\vert s,a)}(0)\in [0,1],\\
&V_{H-1}^{\pi}(s)=\max_{a\in \mathcal{A}}Q_{H-1}^{\pi}(s,a)\in [0,2],\\
\end{align*}
where inequalities (1) and (2) hold due to properties (b) and (c) in Lemma \ref{L0}. Carrying out this procedure repeatedly until step $h$, we can get 
\begin{align*}
Q_h^{\pi}(s,a)\in [0,H-h+1], \quad \text{and} \quad V_h^{\pi}(s)\in [0,H-h+1].
\end{align*}
Using the definition \eqref{eq:regret}, we then immediately obtain that 
$0\leq Regret(\mathcal{M},\textbf{algo},K)\leq KH$ for any \textbf{algo}.
\end{proof}


With Lemma~\ref{L1}, we can obtain the following result, which shows that the optimization problem in $OCE^{u}_{s'\sim P_h(\cdot\vert s,a)}(V_{h+1}^{\pi}(s'))$ has an optimal solution in the support of the random variable $V_{h+1}^{\pi}(s')$.

\begin{lemma}\label{L2}
For any probability measure $P_h(\cdot\vert s,a)$, any $s\in \mathcal{S},a\in\mathcal{A},h\in [H]$, suppose $V_{h+1}^{\pi}(s')\in [0,H-h]$ for $s'\sim P_h(\cdot\vert s,a)$. Then, we have 
\begin{equation}\label{S1}
OCE^{u}_{s'\sim P_h(\cdot\vert s,a)}(V_{h+1}^{\pi}(s'))=\max_{\lambda\in [0,H-h]}\{\lambda+E_{s'\sim P_h(\cdot\vert s,a)}[u(V_{h+1}^{\pi}(s')-\lambda)]\}.
\end{equation}
\end{lemma}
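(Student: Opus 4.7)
The plan is to define
$f(\lambda) := \lambda + E_{s'\sim P_h(\cdot|s,a)}\bigl[u(V_{h+1}^\pi(s') - \lambda)\bigr]$,
so that by definition~\eqref{OCE},
$OCE^u_{s'\sim P_h(\cdot|s,a)}(V_{h+1}^\pi) = \sup_{\lambda \in \mathbb{R}} f(\lambda)$.
I will show (i) that this unconstrained supremum equals the supremum over $[0, H-h]$, and (ii) that this restricted supremum is actually attained, so the $\sup$ can be replaced by $\max$ as claimed in \eqref{S1}.

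Part (i) reduces to two boundary comparisons: $f(\lambda) \le f(0)$ for $\lambda < 0$ and $f(\lambda) \le f(H-h)$ for $\lambda > H-h$. Both rely on a single tool: for a concave function $u$, the superdifferential $\partial u$ is a nonincreasing set-valued map in the sense that $x < y$ forces every $g_x \in \partial u(x)$ and $g_y \in \partial u(y)$ to satisfy $g_x \ge g_y$. Combined with the normalization $1 \in \partial u(0)$ and the fact that $[0,\infty) \subseteq \mathrm{dom}\,u$ (which follows from monotonicity and $u(0)=0$, so subgradients also exist at every $x \ge 0$), this yields: every $g \in \partial u(x)$ satisfies $g \le 1$ when $x \ge 0$, and $g \ge 1$ when $x \le 0$ (provided $\partial u(x)$ is nonempty there). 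For $\lambda<0$, pick a measurable selection $g(s') \in \partial u(V_{h+1}^\pi(s'))$; the concave subgradient inequality gives
$u(V_{h+1}^\pi(s') - \lambda) - u(V_{h+1}^\pi(s')) \le g(s')\cdot(-\lambda) \le -\lambda$
pointwise, so $f(\lambda) - f(0) = \lambda + E[u(V-\lambda) - u(V)] \le 0$. For $\lambda > H-h$: if $f(H-h) = -\infty$, then by monotonicity of $u$ we also have $f(\lambda) = -\infty$ for every $\lambda > H-h$, and this is trivially dominated by $f(0) \ge u(0) = 0$; otherwise $V_{h+1}^\pi - (H-h) \in \mathrm{dom}\,u$ almost surely, and the subgradient inequality at that point (where every element of $\partial u$ is $\ge 1$) applied to $V_{h+1}^\pi - \lambda \le V_{h+1}^\pi - (H-h)$ yields $u(V-(H-h)) - u(V-\lambda) \ge \lambda - (H-h)$, hence $f(\lambda) \le f(H-h)$.

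For part (ii), $f$ is concave in $\lambda$ (linear plus expectation of concave). Upper semicontinuity of $f$ on $[0,H-h]$ follows from the reverse Fatou lemma: $u$ is upper semicontinuous (being closed concave) and bounded above on $[-(H-h), H-h]$ by $u(H-h) < \infty$, so for any $\lambda_n \to \lambda$ one has $\limsup_n E[u(V-\lambda_n)] \le E[u(V-\lambda)]$. Since $f(0) \ge 0 > -\infty$, the function $f$ is a proper upper semicontinuous concave function on the compact interval $[0, H-h]$, and thus attains its supremum by Weierstrass. Combining (i) and (ii) produces \eqref{S1}.

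The main technical obstacle is handling the possibility that $u$ takes the value $-\infty$ on part of $[-(H-h), 0)$: this can make $f(H-h) = -\infty$ and invalidate the subgradient step for $\lambda > H-h$. The resolution above (monotonicity of $u$) sidesteps the subgradient argument precisely in the degenerate case. A secondary technical point is the measurable selection of $g(s') \in \partial u(V_{h+1}^\pi(s'))$; this is routine because $\partial u$ is a closed monotone correspondence on the real line, and the left derivative $u'_-$ provides a Borel-measurable selection.
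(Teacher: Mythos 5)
Your proof is correct and reaches the same conclusion by a genuinely more self-contained route than the paper. The paper's own proof establishes that $G(\lambda)=\lambda+E[u(V^{\pi}_{h+1}-\lambda)]$ is concave, bounded above (via $u(x)\le x$), computes $\partial G(\lambda)=1-E[\partial u(V^{\pi}_{h+1}-\lambda)]$ using finiteness of $\mathcal{S}$, argues $G$ is nonincreasing for $\lambda\ge H-h$ so that the unconstrained argmax is nonempty, and then invokes Proposition 2.1 of \cite{ben2007old} to localize the maximizer in $[0,H-h]$. You instead prove the localization directly: the boundary comparisons $f(\lambda)\le f(0)$ for $\lambda<0$ and $f(\lambda)\le f(H-h)$ for $\lambda>H-h$ via the subgradient inequality and monotonicity of $\partial u$ together with $1\in\partial u(0)$, and then attainment on the compact interval by upper semicontinuity and Weierstrass. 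In effect you reprove the content of the cited proposition in this special case, which buys self-containedness and an explicit treatment of the possibility that $u=-\infty$ on part of $[-(H-h),0)$ — a degeneracy the paper's "concave and continuous" phrasing glosses over; the paper's version buys brevity by outsourcing the localization to the known result. Two small imprecisions in your write-up are worth patching, though neither is a real gap: (a) at $x=0$ it is not true that every element of $\partial u(0)$ is $\le 1$ (resp.\ $\ge 1$) — only that $1\in\partial u(0)$ — so at states with $V^{\pi}_{h+1}(s')=0$ (resp.\ $V^{\pi}_{h+1}(s')-(H-h)=0$) you should select the subgradient $1$, or equivalently use the right derivative $u'_{+}$ for the $\lambda<0$ step (it satisfies $u'_{+}(x)\le 1$ for all $x\ge 0$); (b) in the $\lambda>H-h$ step, $\partial u$ may be empty at the left endpoint of $\mathrm{dom}\,u$ even where $u$ is finite, in which case the needed inequality $u(x)-u(y)\ge x-y$ for $y\le x\le 0$ follows instead from the chord-slope monotonicity of the concave function $u$ together with $u(x)\le x$. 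Also note that since $\mathcal{S}$ is finite the expectation is a finite sum, so the reverse-Fatou step and the measurable-selection remark, while fine, are heavier machinery than needed.
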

\begin{proof}
Note that $V_{h+1}^{\pi}(s')\in [0,H-h]$ for $s'\sim P_h(\cdot\vert s,a)$ by Lemma~\ref{L1}.
By the concavity and continuity of $u$, we deduce that
the function $G(\lambda) :=\lambda+E_{s'\sim P_h(\cdot\vert s,a)}[u(V^{\pi}_{h+1}(s')-\lambda)]$ is concave and continuous, and moreover, $G(\lambda) \le E_{s'\sim P_h(\cdot\vert s,a)}[V^{\pi}_{h+1}(s')] < \infty$ for all $\lambda \in \mathbb{R}$ due to the fact that $u(x) \le x$ for all $x$. In addition, $\partial G(\lambda)=1-E_{s'\sim P_h(\cdot\vert s,a)}[\partial u(V_{h+1}^{\pi}(s')-\lambda)]$ due to the finite state space $\mathcal{S}$, and thus, $G(\lambda)$ will be nonincreasing when $\lambda\geq H-h$ due to the fact that $\eta\geq 1$ for all $\eta\in \partial u(x),x\leq 0$. It follows that the set of optimal solutions to the problem $\sup_{\lambda \in \mathbb{R}} G(\lambda)$ is nonempty. Hence, we can apply Proposition 2.1 in \cite{ben2007old} and obtain the desired result. 
\end{proof}
\section{Proof of Theorem \ref{THM1}}\label{app:B}

We present a series of lemmas in Section~\ref{sec:lemmasTHM1}, and prove Theorem \ref{THM1} in Section~\ref{PFTHM1}.  The relation of different lemmas is given below. 



\tikzstyle{pict} = [rectangle, rounded corners, minimum width = 2.2cm, minimum height = 0.6cm, text centered, draw = black]

\begin{center}
\begin{tikzpicture}[node distance=20pt]
  \node[pict](L3){Lemma \ref{L3}};
  \node[pict, right=of L3](L4){Lemma \ref{L4}};
  \node[pict, right=of L4](L5){Lemma \ref{L5}};
  \node[pict, below=of L4](PROB){Lemma \ref{PROB}};
  \node[pict, right=of PROB](LP4){Lemma \ref{LP4}};
  \node[pict, below=of PROB](L6){Lemma \ref{L6}};
  \node[pict, right=of L6](L7){Lemma \ref{L7}};
  \node[pict, below=of L6](LP13){Lemma \ref{LP13}};
  \node[pict, right=of LP13](L13){Lemma \ref{L13}};
  \node[pict, right=50pt of LP4,xshift=25pt,yshift=-12pt](THM1){Theorem \ref{THM1}};
  \draw[->] (L3)  -- (L4);
  \draw[->] (L4)  -- (L5);
  \draw[->] (PROB)  -- (LP4);
  \draw[->] (L6)  -- (L7);
  \draw[->] (LP13)  -- (L13);
  \draw[->] (L5)  -- (THM1);
  \draw[->] (LP4)  -- (THM1);
  \draw[->] (L7)  -- (THM1);
  \draw[->] (L13)  -- (THM1);
\end{tikzpicture}
\end{center}


\subsection{Preparations for the Proof of Theorem \ref{THM1}}\label{sec:lemmasTHM1}

In this subsection, we state and prove a few lemmas needed for the proof of theorem \ref{THM1}. Recall that the bonus in the OCE-VI algorithm is $b_h^k(s,a)=\vert u(-H+h)\vert\sqrt{\frac{2\log\left (\frac{SAHK}{\delta}\right )}{\max\{1,N^k_h(s,a)\}}}$ for any $(s,a,h,k)\in \mathcal{S}\times \mathcal{A}\times [H]\times [K]$. 

Lemma \ref{L3} provides a bound for the difference between $E_{s'\sim P_h(\cdot\vert s,a)}\left [u(V_{h+1}^*(s')-\lambda_{h+1}^*)\right ]$ and its estimation for all $h\in [H]$, where $$\lambda_{h+1}^*\in \mathop{\arg\max}_{\lambda\in [0,H-h]}\{\lambda+E_{s'\sim P_h(\cdot\vert s,a)}[u(V_{h+1}^*(s')-\lambda)]\}.$$ 
Note that both $V_{h+1}^*$ and $\lambda_{h+1}^*$ are deterministic quantities. To facilitate the presentation, we let 
\begin{equation}\label{HIST}
\mathbb{H}_h^k=((\mathcal{S}\times \mathcal{A})^{H-1}\times \mathcal{S})^{k-1}\times(\mathcal{S}\times\mathcal{A})^{h-1}\times\mathcal{S}
\end{equation}
be the set of possible histories up to step $h$ in episode $k$. Then, one sample of the history up to step $h$ in episode $k$ is
\begin{align*}
\mathcal{H}_h^k=(s_1^1,a_1^1,s_2^1,a_2^1,\cdots,s_H^1,\cdots,s_1^k,a_1^k,\cdots,s_{h-1}^k,a_{h-1}^k,s_h^k)\in \mathbb{H}_h^k.
\end{align*}

\begin{lemma}\label{L3}
For any $\delta\in (0,1)$, we have
\begin{equation}\label{S2}
\begin{aligned}
P\Bigg (& E_{s'\sim P_h(\cdot\vert s,a)}\left [u(V_{h+1}^*(s')-\lambda_{h+1}^*)\right ]-E_{s'\sim \hat{P}^k_h(\cdot\vert s,a)}\left [u(V_{h+1}^*(s')-\lambda_{h+1}^*)\right ]\\
&\leq \left\vert u(H-h-\lambda_{h+1}^*)-u(-\lambda_{h+1}^*)\right\vert \sqrt{\frac{2\log(\frac{SAHK}{\delta})}{\max\{1,N^k_h(s,a)\}}},\\
&V_{h+1}^*:\mathcal{S}\to [0,H-h],\lambda_{h+1}^*\in [0,H-h],\forall (s,a,h,k)\in \mathcal{S}\times \mathcal{A}\times [H]\times [K]\Bigg)\geq 1-\delta.
\end{aligned}
\end{equation}
\end{lemma}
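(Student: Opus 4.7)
The plan is to apply Hoeffding's inequality combined with a union bound, exploiting the fact that $V_{h+1}^*$ and $\lambda_{h+1}^*$ are deterministic quantities (they depend only on the unknown MDP, not on algorithm randomness). First, I would observe that the random variable $Y(s'):=u(V_{h+1}^*(s')-\lambda_{h+1}^*)$ has bounded range: since $V_{h+1}^*(s')\in[0,H-h]$ and $\lambda_{h+1}^*\in[0,H-h]$, the argument $V_{h+1}^*(s')-\lambda_{h+1}^*$ lies in $[-\lambda_{h+1}^*,H-h-\lambda_{h+1}^*]$, and because $u$ is nondecreasing, $Y(s')\in[u(-\lambda_{h+1}^*),u(H-h-\lambda_{h+1}^*)]$. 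The length of this interval is exactly $R_h:=|u(H-h-\lambda_{h+1}^*)-u(-\lambda_{h+1}^*)|$, matching the constant factor appearing in the RHS of~\eqref{S2}.

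Next, I would reduce to an i.i.d.\ setting. For each fixed $(s,a,h)$, the strong Markov property implies that the sequence of next states observed after successive visits to $(s,a)$ at step $h$ is i.i.d.\ $\sim P_h(\cdot\vert s,a)$, regardless of how the algorithm selects its policies. Equivalently, one may couple the MDP trajectories with auxiliary ghost i.i.d.\ samples $s^{(1)},s^{(2)},\ldots\sim P_h(\cdot\vert s,a)$ such that, whenever $N_h^k(s,a)=n\geq 1$, the empirical distribution $\hat{P}^k_h(\cdot\vert s,a)$ coincides with the empirical distribution of $s^{(1)},\ldots,s^{(n)}$. Setting $Y_i=u(V_{h+1}^*(s^{(i)})-\lambda_{h+1}^*)$, which are i.i.d.\ with values in an interval of length $R_h$, Hoeffding's inequality yields, for every fixed $n\geq 1$,
\begin{equation*}
\Prob\!\left(E_{s'\sim P_h(\cdot\vert s,a)}[Y(s')] - \frac{1}{n}\sum_{i=1}^n Y_i > R_h\sqrt{\tfrac{2\log(SAHK/\delta)}{n}}\right) \leq \left(\frac{\delta}{SAHK}\right)^{\!4}.
\end{equation*}

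Finally, a union bound over $(s,a,h)\in\mathcal{S}\times\mathcal{A}\times[H]$ and over $n\in\{1,\ldots,K\}$ (an exhaustive set of possible positive values for $N_h^k(s,a)$ with $k\in[K]$) bounds the total failure probability by $SAHK\cdot(\delta/(SAHK))^4\leq\delta$. On the complementary event, the inequality in~\eqref{S2} holds for every $(s,a,h,k)$ with $N_h^k(s,a)\geq 1$. The boundary case $N_h^k(s,a)=0$ is handled separately and trivially: there $\hat{P}^k_h(\cdot\vert s,a)\equiv 0$, so the left-hand side of~\eqref{S2} equals $E_{P_h}[Y(s')]\leq u(H-h-\lambda_{h+1}^*)\leq R_h$, which is dominated by the RHS $R_h\sqrt{2\log(SAHK/\delta)}$ under any reasonable choice of $\delta$. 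The main technical point in the whole argument is the reduction to i.i.d.\ samples via the Markov property; once that is in place, the remainder is a standard Hoeffding-plus-union-bound calculation, and the factor $R_h$ plugs in naturally because $V_{h+1}^*$ and $\lambda_{h+1}^*$ are deterministic so the Hoeffding constant is not inflated by further randomness.
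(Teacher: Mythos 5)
Your proposal is correct, and it reaches the same bound by a genuinely different concentration argument than the paper. The paper fixes $(s,a,h,k)$, keeps the episode index, and forms the martingale difference sequence $X_i=E\bigl[\1\{(s_h^i,a_h^i)=(s,a)\}\,u(V_{h+1}^*(s_{h+1}^i)-\lambda_{h+1}^*)\mid\mathcal{H}_h^i\bigr]-\1\{(s_h^i,a_h^i)=(s,a)\}\,u(V_{h+1}^*(s_{h+1}^i)-\lambda_{h+1}^*)$, applies Azuma--Hoeffding, divides by $N_h^k(s,a)$, and union-bounds over $(s,a,h,k)$; the case $N_h^k(s,a)=0$ is treated separately exactly as you do, using $E_{P_h}[u(V^*-\lambda^*)]\le u(H-h-\lambda^*)\le R_h$ and $\log(SAHK/\delta)>1$. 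You instead re-index by visit number: couple the trajectory with a pre-drawn stack of ghost i.i.d.\ samples from $P_h(\cdot\vert s,a)$, apply plain Hoeffding at every fixed prefix length $n$, and union-bound over $(s,a,h)$ and $n\in[K]$ in place of the union over $k$. Your identification of the range $R_h=\vert u(H-h-\lambda_{h+1}^*)-u(-\lambda_{h+1}^*)\vert$ (which the paper then relaxes to $\vert u(-H+h)\vert$ only in Lemma \ref{L4}) and your Hoeffding arithmetic, giving per-event probability $(\delta/(SAHK))^4$ and total failure probability at most $\delta$, are both correct. What your route buys is that the randomness of the adaptively chosen count $N_h^k(s,a)$ is handled explicitly and rigorously: the deviation bound holds uniformly over all prefix lengths, so substituting the random count is legitimate; this is precisely the uniformity that the paper's application of Azuma with deviation scale $\sqrt{N_h^k(s,a)}$ (rather than $\sqrt{k-1}$) leaves implicit. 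What the paper's route buys is that no coupling construction is needed, since the indicator-weighted differences are measurable with respect to the natural filtration; your argument, in contrast, requires the skipping/ghost-sample construction to be stated carefully (your ``equivalently, one may couple'' sentence is the right formalization, and should be the step spelled out in a full write-up). Either way the constant $R_h$ and the final inequality \eqref{S2} are identical.
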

\begin{proof}
We adapt the proof of Lemma 7.3 in \cite{agarwal2019reinforcement} who consider the risk neural episodic RL setting. For each fixed $(s,a,h,k)\in \mathcal{S}\times\mathcal{A}\times [H]\times [K]$, we have to consider two cases.

Firstly, according to section \ref{SECALG}, when $N_h^k(s,a)=0$, we have $\hat{P}_h^k(s'\vert s,a)=0$ for all $s'\in\mathcal{S}$. According to Lemma \ref{L1} and Lemma \ref{L2}, $V^*_{h+1}(s^i_{h+1})\in [0,H-h]$ and $\lambda_{h+1}^*\in [0,H-h]$. Thus, we have $u(V_{h+1}^*(s_{h+1}^i)-\lambda_{h+1}^*)\in [u(-\lambda_{h+1}^*),u(H-h-\lambda_{h+1}^*)]$, where $u(-\lambda_{h+1}^*)\leq 0$ and $u(H-h-\lambda_{h+1}^*)\geq 0$. Then, we have
\begin{align*}
&E_{s'\sim P_h(\cdot\vert s,a)}\left [u(V_{h+1}^*(s')-\lambda_{h+1}^*)\right ]-E_{s'\sim \hat{P}^k_h(\cdot\vert s,a)}\left [u(V_{h+1}^*(s')-\lambda_{h+1}^*)\right ]\\
&\overset{(1)}{=} E_{s'\sim P_h(\cdot\vert s,a)}\left [u(V_{h+1}^*(s')-\lambda_{h+1}^*)\right ]\\
&\overset{(2)}{\leq} \vert u(H-h-\lambda_{h+1}^*)-u(-\lambda_{h+1}^*)\vert\\
&\overset{(3)}{\leq} \left\vert u(H-h-\lambda_{h+1}^*)-u(-\lambda_{h+1}^*)\right\vert \sqrt{\frac{2\log(\frac{SAHK}{\delta})}{\max\{1,N^k_h(s,a)\}}},
\end{align*}
where equality (1) holds because $\hat{P}_h^k(s'\vert s,a)=0$ for all $s'\in\mathcal{S}$, inequality (2) holds because $$u(V_{h+1}^*(s_{h+1}^i)-\lambda_{h+1}^*)\leq u(H-h-\lambda^*_{H+1})\leq \vert u(H-h-\lambda_{h+1}^*)-u(-\lambda_{h+1}^*)\vert,$$
and inequality (3) holds because $N_h^k(s,a)=0$ and $\log(\frac{SAHK}{\delta})>1$.
Therefore,
\begin{align*}
P\Bigg (& E_{s'\sim P_h(\cdot\vert s,a)}\left [u(V_{h+1}^*(s')-\lambda_{h+1}^*)\right ]-E_{s'\sim \hat{P}^k_h(\cdot\vert s,a)}\left [u(V_{h+1}^*(s')-\lambda_{h+1}^*)\right ]\\
&\leq \left\vert u(H-h-\lambda_{h+1}^*)-u(-\lambda_{h+1}^*)\right\vert \sqrt{\frac{2\log(\frac{SAHK}{\delta})}{\max\{1,N^k_h(s,a)\}}},\\
&V_{h+1}^*:\mathcal{S}\to [0,H-h],\lambda_{h+1}^*\in [0,H-h]\Bigg)=1\geq 1-\frac{\delta}{SAHK}.
\end{align*}

Secondly, when $N_h^k(s,a)\geq 1$, by the definition of $\hat{P}_h^k$, we have
\begin{align*}
E_{s'\sim \hat{P}_h^k(\cdot\vert s,a)}\left [u(V_{h+1}^*(s')-\lambda_{h+1}^*)\right ]=\frac{1}{N^k_h(s,a)}\sum_{i=1}^{k-1} 1_{\{(s_h^i,a_h^i)=(s,a)\}}u(V_{h+1}^*(s_{h+1}^i)-\lambda_{h+1}^*).
\end{align*}
Remark that when $N_h^k(s,a)\geq 1$, we have $k\geq 2$.
We define for $i=1, \ldots, k-1,$
\begin{align*}
X_i=E\left [1_{\{(s_h^i,a_h^i)=(s,a)\}}u(V_{h+1}^*(s_{h+1}^i)-\lambda_{h+1}^*)\Big\vert \mathcal{H}_h^i\right ]-1_{\{(s_h^i,a_h^i)=(s,a)\}}u(V_{h+1}^*(s_{h+1}^i)-\lambda_{h+1}^*).
\end{align*}
By the same argument as in the previous case, we conclude that $$u(V_{h+1}^*(s_{h+1}^i)-\lambda_{h+1}^*)\in [u(-\lambda_{h+1}^*),u(H-h-\lambda_{h+1}^*)].$$ Thus, we have $$u(-\lambda_{h+1}^*)-u(H-h-\lambda_{h+1}^*)\leq X_i\leq u(H-h-\lambda_{h+1}^*)-u(-\lambda_{h+1}^*).$$ In addition, it is evident that $E[X_i\vert \mathcal{H}_h^i]=0$, which implies that $(X_i)$ is a martingale difference sequence. 
Then, by Azuma-Hoeffding's inequality for martingales, with a probability of at least $1-\frac{\delta}{SAHK}$, we have
\begin{align*}
\sum_{i=1}^{k-1} X_i&=N_h^k(s,a)E_{s'\sim P_h(\cdot\vert s,a)}[u(V_{h+1}^*(s')-\lambda_{h+1}^*)]-\sum_{i=1}^{k-1} 1_{\{(s_h^i,a_h^i)=(s,a)\}}u(V_{h+1}^*(s_{h+1}^i)-\lambda_{h+1}^*)\\
&\leq \vert u(H-h-\lambda_{h+1}^*)-u(-\lambda_{h+1}^*)\vert \sqrt{2 N_h^k(s,a)\log(\frac{SAHK}{\delta})}
\end{align*}
Divided by $N_h^k(s,a)$ on both sides of the above inequality, combining the above two cases and using a union bound over all $(s,a,h,k)\in \mathcal{S}\times \mathcal{A}\times [H]\times [K]$, we obtain \eqref{S2}.
\end{proof}

By Lemma \ref{L3}, we can derive the following concentration bound for the OCE applied to the optimal value function at the next state (under the estimated transition distribution).  


\begin{lemma}\label{L4}
For any $\delta\in (0,1)$, we have that 
\begin{equation}
\begin{aligned}
P\Bigg( & OCE^{u}_{s'\sim P_h(\cdot\vert s,a)}(V_{h+1}^*(s'))-OCE^{u}_{s'\sim \hat{P}^k_h(\cdot\vert s,a)}(V_{h+1}^*(s')) \leq \vert u(-H+h)\vert\sqrt{\frac{2\log\left (\frac{SAHK}{\delta}\right )}{\max\{1,N_h^k(s,a)\}}},\\
&V_{h+1}^*:\mathcal{S}\to [0,H-h],\forall (s,a,h,k)\in \mathcal{S}\times \mathcal{A}\times [H]\times [K]\Bigg)\geq 1-\delta.
\end{aligned}
\end{equation}
\end{lemma}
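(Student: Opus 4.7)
The plan is to deduce Lemma~\ref{L4} from Lemma~\ref{L3} by using the variational representation of the OCE together with a monotonicity argument that bounds the span $|u(H-h-\lambda^*_{h+1}) - u(-\lambda^*_{h+1})|$ by $|u(-H+h)|$ uniformly in the optimizer.

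First, I would use Lemma~\ref{L2} to guarantee that the sup defining $OCE^u_{P_h}(V^*_{h+1})$ is attained at some $\lambda^*_{h+1}\in[0,H-h]$, so
\begin{equation*}
OCE^u_{s'\sim P_h(\cdot\vert s,a)}(V^*_{h+1}(s'))=\lambda^*_{h+1}+E_{s'\sim P_h(\cdot\vert s,a)}[u(V^*_{h+1}(s')-\lambda^*_{h+1})].
\end{equation*}
Using the same $\lambda^*_{h+1}$ as a feasible (suboptimal) choice for the supremum under the empirical measure gives
\begin{equation*}
OCE^u_{s'\sim \hat P^k_h(\cdot\vert s,a)}(V^*_{h+1}(s'))\ge \lambda^*_{h+1}+E_{s'\sim \hat P^k_h(\cdot\vert s,a)}[u(V^*_{h+1}(s')-\lambda^*_{h+1})].
\end{equation*}
Subtracting, the $\lambda^*_{h+1}$ terms cancel and one obtains
\begin{equation*}
OCE^u_{P_h}(V^*_{h+1})-OCE^u_{\hat P^k_h}(V^*_{h+1})\le E_{P_h}[u(V^*_{h+1}-\lambda^*_{h+1})]-E_{\hat P^k_h}[u(V^*_{h+1}-\lambda^*_{h+1})].
\end{equation*}

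Next I apply Lemma~\ref{L3} to the right-hand side. On the event of probability at least $1-\delta$ from that lemma, the difference is dominated by $|u(H-h-\lambda^*_{h+1})-u(-\lambda^*_{h+1})|\sqrt{2\log(SAHK/\delta)/\max\{1,N^k_h(s,a)\}}$. The remaining task is to bound the prefactor by $|u(-H+h)|$ uniformly over all admissible $\lambda^*_{h+1}\in[0,H-h]$.

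For this, I define $f(\lambda):=u(H-h-\lambda)-u(-\lambda)$ on $[0,H-h]$. Since $u(0)=0$ and $u$ is nondecreasing, $u(H-h-\lambda)\ge 0$ and $u(-\lambda)\le 0$, so $f(\lambda)\ge 0$ and equals the absolute value of interest. I would then show $f$ is nondecreasing: for $0\le \lambda_1<\lambda_2\le H-h$ with $\delta=\lambda_2-\lambda_1$,
\begin{equation*}
f(\lambda_2)-f(\lambda_1)=\bigl[u(-\lambda_1)-u(-\lambda_1-\delta)\bigr]-\bigl[u(H-h-\lambda_1)-u(H-h-\lambda_1-\delta)\bigr]\ge 0,
\end{equation*}
because concavity of $u$ implies that its forward difference over a window of width $\delta$ is nonincreasing in the base point, and $-\lambda_1\le H-h-\lambda_1$. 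Hence $f(\lambda^*_{h+1})\le f(H-h)=u(0)-u(-(H-h))=|u(-H+h)|$, which yields the stated bound; a union bound over $(s,a,h,k)\in\mathcal{S}\times\mathcal{A}\times[H]\times[K]$ has already been absorbed inside the $\log(SAHK/\delta)$ factor in Lemma~\ref{L3}.

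The only nontrivial step is the monotonicity of $f$, since everything else is a direct substitution; once $f$ is shown to be maximized at $\lambda=H-h$, the desired prefactor $|u(-H+h)|$ appears and the proof is complete.
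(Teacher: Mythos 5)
Your proposal is correct and follows essentially the same route as the paper: attain the supremum at $\lambda^*_{h+1}$ via Lemma~\ref{L2}, use that same $\lambda^*_{h+1}$ as a feasible point under $\hat P^k_h$, invoke Lemma~\ref{L3}, and then bound the span $u(H-h-\lambda)-u(-\lambda)$ by $|u(-H+h)|$ through its monotonicity in $\lambda$. The only (immaterial) difference is that you justify the monotonicity with an elementary finite-difference concavity argument, whereas the paper argues via monotonicity of the subdifferential; beware only that reusing the symbol $\delta$ for the increment clashes with the confidence parameter.
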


\begin{proof}
According to Lemma \ref{L3}, with a probability of at least $1-\delta$, for any $k\in [K],s\in \mathcal{S},a\in \mathcal{A}, h\in [H]$, we have
\begin{align*}
&OCE^{u}_{s'\sim P_h(\cdot\vert s,a)}(V_{h+1}^*(s'))-OCE^{u}_{s'\sim \hat{P}^k_h(\cdot\vert s,a)}(V_{h+1}^*(s'))\\
&\overset{(1)}{=} \max_{\lambda\in [0,H-h]}\{\lambda+E_{s'\sim P_h(\cdot\vert s,a)}[u(V_{h+1}^*(s')-\lambda)]\}-\max_{\lambda\in [0,H-h]}\{\lambda+E_{s'\sim \hat{P}^k_h(\cdot\vert s,a)}[u(V_{h+1}^*(s')-\lambda)]\}\\
&\overset{(2)}{\leq}  \lambda_{h+1}^*+E_{s'\sim P_h(\cdot\vert s,a)}[u(V_{h+1}^*(s')-\lambda_{h+1}^*)]-\lambda_{h+1}^*-E_{s'\sim \hat{P}^k_h(\cdot\vert s,a)}[u(V_{h+1}^*(s')-\lambda_{h+1}^*)]\\
&\overset{(3)}{\leq} \left\vert u(H-h-\lambda_{h+1}^*)-u(-\lambda_{h+1}^*)\right\vert \sqrt{\frac{2\log(\frac{SAHK}{\delta})}{\max\{1,N^k_h(s,a)\}}}\\
&\leq \sup_{\lambda\in [0,H - h]}\vert u(H-h-\lambda)-u(-\lambda)\vert\sqrt{\frac{2\log\left (\frac{SAHK}{\delta}\right )}{\max\{1,N^k_h(s,a)\}}}, 
\end{align*}
where equality (1) follows from Lemma \ref{L2}, inequality (2) holds because $\lambda_{h+1}^*$ is the optimal solution to $\max_{\lambda\in [0,H-h]}\{\lambda+E_{s'\sim P_h(\cdot\vert s,a)}[u(V_{h+1}^*(s')-\lambda)]\}$ and inequality (3) follows from Lemma \ref{L3}. One can check that $u(H-h-\lambda)-u(-\lambda)$ is a nondecreasing function of $\lambda\in [0,H-h]$. To see this, note that the subdifferential of $u(H-h-\lambda)-u(-\lambda)$ is $\partial u(-\lambda)-\partial u(H-h-\lambda)$ and for any $z\in \partial u(-\lambda)-\partial u(H-h-\lambda)$, we have $z\geq 0$, because the utility function $u$ is concave. This implies that the function $u(H-h-\lambda)-u(-\lambda)$ is nondecreasing. In addition, this function is non-negative because $u$ is nondcreasing and thus $u(H-h)-u(0)\geq 0$ for $h \in [H]$. Thus, we can deduce that $\sup_{\lambda\in [0,H - h]}\vert u(H-h-\lambda)-u(-\lambda)\vert \le u(0) - u(-H+h) = |u(-H+h)|$ since $u(0)=0$. The proof is then complete. 
\end{proof}

Lemma~\ref{L4} immediately implies the following result. To facilitate the presentation, we define the following event from Lemma~\ref{L4}:
\begin{align}\label{G1}
\mathcal{G}_1=\Bigg\{& OCE^{u}_{s'\sim P_h(\cdot\vert s,a)}(V_{h+1}^*(s'))-OCE^{u}_{s'\sim \hat{P}^k_h(\cdot\vert s,a)}(V_{h+1}^*(s')) \leq \vert u(-H+h)\vert\sqrt{\frac{2\log\left (\frac{SAHK}{\delta}\right )}{\max\{1,N_h^k(s_h^k,a_h^k)\}}},\nonumber\\
&V_{h+1}^*:\mathcal{S}\to [0,H-h],\forall (s,a,h,k)\in \mathcal{S}\times \mathcal{A}\times [H]\times [K]\Bigg\}.
\end{align}

\begin{lemma}[Optimism]\label{L5}
Conditional on the event $\mathcal{G}_1$, we have
$\hat{V}_h^k(s)\geq V^*_h(s)$ for any $k\in [K],s\in \mathcal{S}, h\in [H]$.

\end{lemma}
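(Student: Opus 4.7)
The plan is to prove the optimism property $\hat V_h^k(s) \ge V_h^*(s)$ by backward induction on $h$, starting from the boundary condition $\hat V_{H+1}^k(s) = 0 = V_{H+1}^*(s)$. Since both $\hat V_h^k$ and $V_h^*$ are maxima over $a$ of the corresponding $Q$-functions, the inductive step reduces to the pointwise inequality $\hat Q_h^k(s,a) \ge Q_h^*(s,a)$ for every $(s,a)\in\mathcal{S}\times\mathcal{A}$, so I would state the induction hypothesis as $\hat V_{h+1}^k(s')\ge V_{h+1}^*(s')$ for all $s'\in\mathcal{S}$ and focus on advancing it one step.

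For the inductive step, I would split on whether $(s,a)$ has been visited before episode $k$. If $N_h^k(s,a)=0$, the algorithm assigns $\hat Q_h^k(s,a)=H-h+1$, and Lemma~\ref{L1} immediately gives $\hat Q_h^k(s,a)\ge Q_h^*(s,a)$. If $N_h^k(s,a)\ge 1$, the algorithm uses the clipped update $\hat Q_h^k(s,a)=\min\{r_h(s,a)+OCE^{u}_{s'\sim \hat P_h^k(\cdot\vert s,a)}(\hat V_{h+1}^k(s'))+b_h^k(s,a),\,H-h+1\}$; Lemma~\ref{L1} again handles the clipping arm, so the remaining task is to show that the unclipped expression dominates $Q_h^*(s,a)=r_h(s,a)+OCE^{u}_{s'\sim P_h(\cdot\vert s,a)}(V_{h+1}^*(s'))$. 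Cancelling the common reward term, it suffices to establish
\begin{equation*}
OCE^{u}_{s'\sim \hat P_h^k(\cdot\vert s,a)}(\hat V_{h+1}^k(s'))+b_h^k(s,a)\;\ge\; OCE^{u}_{s'\sim P_h(\cdot\vert s,a)}(V_{h+1}^*(s')).
\end{equation*}

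I would bridge the two sides by inserting $OCE^{u}_{s'\sim \hat P_h^k(\cdot\vert s,a)}(V_{h+1}^*(s'))$ as a pivot. The first gap, $OCE^{u}_{\hat P_h^k}(\hat V_{h+1}^k)-OCE^{u}_{\hat P_h^k}(V_{h+1}^*)\ge 0$, follows directly from the inductive hypothesis combined with the monotonicity of OCE (Lemma~\ref{L0}(c)). The second gap, $OCE^{u}_{\hat P_h^k}(V_{h+1}^*)-OCE^{u}_{P_h}(V_{h+1}^*)\ge -b_h^k(s,a)$, is exactly the content of the event $\mathcal{G}_1$ defined in \eqref{G1}, once one verifies via Lemma~\ref{L1} that $V_{h+1}^*$ indeed maps $\mathcal{S}$ into $[0,H-h]$ so that $\mathcal{G}_1$ applies. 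Adding the two bounds yields the desired inequality and closes the induction.

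I do not expect a genuine obstacle here: the argument is a direct analogue of the standard risk-neutral optimism proof, with monotonicity of OCE replacing linearity of expectation and the OCE concentration bound from Lemma~\ref{L4} replacing the usual Hoeffding-type bound. The only detail to keep straight is that the clipping by $H-h+1$ is benign because $Q_h^*\in[0,H-h+1]$ by Lemma~\ref{L1}, and that the bonus $b_h^k(s,a)$ in the algorithm coincides exactly with the slack $|u(-H+h)|\sqrt{2\log(SAHK/\delta)/\max\{1,N_h^k(s,a)\}}$ appearing in the definition of $\mathcal{G}_1$, so no adjustment of constants is needed.
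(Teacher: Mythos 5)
Your proposal is correct and follows essentially the same route as the paper: backward induction with the hypothesis $\hat V_{h+1}^k \ge V_{h+1}^*$, monotonicity of the OCE (Lemma~\ref{L0}(c)) to pass from $\hat V_{h+1}^k$ to $V_{h+1}^*$ under $\hat P_h^k$, the concentration event $\mathcal{G}_1$ (Lemma~\ref{L4}) to absorb the transition error into the bonus $b_h^k(s,a)$, and Lemma~\ref{L1} to handle the truncation at $H-h+1$. Your explicit treatment of the $N_h^k(s,a)=0$ case is a slightly more careful bookkeeping step that the paper leaves implicit, but it does not change the argument.
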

\begin{proof}
We prove the result by induction. 
Set $\hat{V}^k_{H+1}(s)=V^*_{H+1}(s)=0,\forall s\in \mathcal{S}$. Conditional on the occurrence of the event $\mathcal{G}_1$, assume $\hat{V}^{k}_{h+1}(s')\geq V^*_{h+1}(s'),\forall s'\in\mathcal{S}$. Then, under event $\mathcal{G}_1$, for step $h$, we have
\begin{align}
&b_h^k(s,a)+r_h(s,a)+OCE^{\phi}_{s'\sim \hat{P}^k_h(\cdot\vert s,a)}(\hat{V}^k_{h+1}(s'))-r_h(s,a)-OCE^{\phi}_{s'\sim P_h(\cdot\vert s,a)}(V^*_{h+1}(s')) \nonumber \\
& \overset{(1)}{\geq} b_h^k(s,a)+OCE^{\phi}_{s'\sim \hat{P}^k_h(\cdot\vert s,a)}(V^*_{h+1}(s'))-OCE^{\phi}_{s'\sim P_h(\cdot\vert s,a)}(V^*_{h+1}(s')) \nonumber \\
& \overset{(2)}{\geq} b_h^k(s,a)-\vert u(-H+h)\vert\sqrt{\frac{2\log\left (\frac{SAHK}{\delta}\right )}{\max\{1,N^k_h(s,a)\}}} \nonumber\\
& =0, \label{ineq:inter}
\end{align}
where inequality (1) follows from the assumption $\hat{V}^{k}_{h+1}(s')\geq V^*_{h+1}(s'),\forall s'\in\mathcal{S}$ and property (c) in Lemma \ref{L0}, and inequality (2) holds due to Lemma \ref{L4}. 
Recall that
\begin{align*}
&\hat{Q}_h^k(s,a)=\min\{b_h^k(s,a)+r_h(s,a)+OCE^{\phi}_{s'\sim \hat{P}^k_h(\cdot\vert s,a)}(\hat{V}^k_{h+1}(s')),H-h+1\},\\
&Q_h^*(s,a)=r_h(s,a)+OCE^{\phi}_{s'\sim P_h(\cdot\vert s,a)}(V^*_{h+1}(s')). 
\end{align*}
By \eqref{ineq:inter} and Lemma~\ref{L1}, we can immediately obtain 
\begin{align*}
\hat{Q}_h^k(s,a)-Q^*_h(s,a)\geq 0.
\end{align*}
Because $\hat{V}_h^k(s)=\max_{a'\in\mathcal{A}}\hat{Q}^k_h(s,a')$, we have $\hat{V}_h^k(s)\geq V_h^*(s)$. The result then follows by induction. 
\end{proof}

We next state a concentration bound (Lemma~\ref{LP4}) for the OCE of the estimated next-state value function $\hat{V}_{h+1}^k$ under the estimated transition distribution $\hat{P}_h^k$. This is different from Lemma~\ref{L4} in that $\hat{V}_{h+1}^k$ is a random quantity depending on the data while the optimal value function is deterministic. The proof of Lemma~\ref{LP4} relies on the following well-known result on the $L^1$ concentration bound for the empirical transition probabilities (see, e.g., Lemma 17 in \cite{jaksch10a}):
\begin{lemma}\label{PROB}
For any $\delta\in (0,1)$, we have
\begin{align*}
P\Bigg(\left\Vert \hat{P}_h^k(\cdot\vert s,a)-P_h(\cdot\vert s,a)\right\Vert_1 \leq \sqrt{\frac{2S\log\left(\frac{SAHK}{\delta}\right)}{\max\{1,N_h^k(s,a)\}}}, \forall (s,a,h,k)\in \mathcal{S}\times \mathcal{A}\times [H]\times [K]\Bigg )\geq 1-\delta.
\end{align*}
\end{lemma}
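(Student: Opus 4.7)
\textbf{Proof proposal for Lemma \ref{PROB}.} The plan is to invoke Weissman's concentration inequality for the total variation distance between the empirical and true distributions on a finite alphabet, combined with a coupling argument to handle the adaptive sampling inherent in reinforcement learning, and then take a union bound over all tuples $(s,a,h,k)$.

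First I would dispose of the trivial case $N_h^k(s,a)=0$. In this case, the algorithm sets $\hat{P}_h^k(\cdot|s,a)\equiv 0$, so $\|\hat{P}_h^k(\cdot|s,a)-P_h(\cdot|s,a)\|_1 = 1$, while the right-hand side of the claimed bound is at least $\sqrt{2S\log(SAHK/\delta)} > 1$ under the standing assumptions ($S \geq 2$, $\delta < 1$), so the bound holds trivially with the convention $\max\{1,N_h^k(s,a)\}=1$.

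For the main case $N_h^k(s,a) \geq 1$, the technical challenge is that the subset of episodes in which $(s,a)$ is visited at step $h$ is itself random and depends on past data through the adaptive policy $\pi^k$, so the samples generating $\hat{P}_h^k$ are not iid in the usual sense. To handle this I would use the standard coupling: for each triple $(s,a,h)$, introduce an iid sequence $Z_1^{s,a,h}, Z_2^{s,a,h}, \ldots$ drawn from $P_h(\cdot|s,a)$, and reveal $Z_j^{s,a,h}$ as the next state the $j$-th time $(s,a)$ is visited at step $h$. By the Markov property this coupling reproduces the true distribution over trajectories, and under it $\hat{P}_h^k(\cdot|s,a)$ is exactly the empirical distribution of the first $N_h^k(s,a)$ iid samples. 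I would then invoke Weissman's inequality: for iid samples from a distribution $p$ on an alphabet of size $S$,
\begin{equation*}
\Prob\bigl(\|\hat{p}_n - p\|_1 \geq \epsilon\bigr) \leq (2^S - 2) \exp(-n\epsilon^2/2).
\end{equation*}
Choosing $\epsilon_n = \sqrt{2S\log(SAHK/\delta)/n}$ and using $\log(2^S-2) \leq S\log 2 \leq S$ bounds the right-hand side by $\delta/(SAHK^2)$ up to absorbing $O(1)$ terms into the logarithm under mild conditions on $\delta$. A union bound over $n \in \{1,\ldots,K-1\}$ and over $(s,a,h) \in \mathcal{S} \times \mathcal{A} \times [H]$ then controls all tuples simultaneously with failure probability at most $\delta$.

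The main obstacle is matching the precise constant $\sqrt{2S\log(SAHK/\delta)}$ in the statement rather than a constant multiple of it, because the naive union bound over the $K$ possible values of $n$ introduces an extra $\log K$ factor inside the square root. This can be resolved either by a dyadic peeling argument over $n$, or simply by strengthening the confidence parameter in the logarithm; either way the $\sqrt{S/N}$ scaling, which is what matters for the regret bound, is unaffected. An alternative less sharp route would be a coordinate-wise Azuma--Hoeffding argument for each $\hat{P}_h^k(s'|s,a)-P_h(s'|s,a)$ followed by summation, but this loses a factor of $\sqrt{S}$ relative to Weissman and is therefore not used here.
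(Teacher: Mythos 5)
Your proposal is correct and is essentially the argument the paper relies on: the paper does not prove this lemma at all, citing it as the well-known $L^1$ concentration bound (Lemma 17 of \cite{jaksch10a}), whose standard proof is exactly your Weissman-inequality route with the coupling to handle the adaptive visit counts and a union bound over $(s,a,h)$ and the possible values of $N_h^k(s,a)$. Your only flagged concern---an extra $\log K$ from the union over counts---is in fact unnecessary: with $\epsilon_n=\sqrt{2S\log(SAHK/\delta)/n}$ the exponent in Weissman's bound is $-S\log(SAHK/\delta)$ with $S\geq 2$, and this factor of $S$ leaves enough slack to absorb both the $(2^S-2)$ prefactor and the union over the at most $K$ count values, so the stated constant is obtained directly without peeling or enlarging the logarithm.
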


\begin{lemma}\label{LP4}
For any $\delta\in (0,1)$, we have  
\begin{equation}
\begin{aligned}
P\Bigg( & \left\vert OCE^{u}_{s'\sim P_h(\cdot\vert s,a)}(\hat{V}_{h+1}^k(s'))-OCE^{u}_{s'\sim \hat{P}^k_h(\cdot\vert s,a)}(\hat{V}_{h+1}^k(s')) \right\vert \leq \vert u(-H+h)\vert\sqrt{\frac{2S\log\left (\frac{SAHK}{\delta}\right )}{\max\{1,N_h^k(s,a)\}}},\\
&\forall (s,a,h,k)\in \mathcal{S}\times \mathcal{A}\times [H]\times [K]\Bigg)\geq 1-\delta.
\end{aligned}
\end{equation}
\end{lemma}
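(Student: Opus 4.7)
The statement differs from Lemma \ref{L4} in that $\hat{V}_{h+1}^k$ is now random and statistically coupled to $\hat{P}_h^k$, so the per-coordinate martingale argument behind Lemma \ref{L3} is no longer available. The plan is to absorb this coupling into a worst-case-$\lambda$ bound and then invoke the dimension-dependent $L^1$ concentration of Lemma \ref{PROB}; this is exactly what produces the extra $\sqrt{S}$ factor mentioned in Section \ref{sec:sketch}.

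First I would check by a backward induction in $h$ that $\hat{V}_{h+1}^k(s')\in[0,H-h]$ for every $s'\in\mathcal{S}$ and every $k$: the upper bound is enforced directly by the $\min\{\cdot,H-h+1\}$ clipping in Algorithm \ref{ALG}, and nonnegativity follows from $r_h\ge 0$, $b_h^k\ge 0$, together with the consistency and monotonicity properties of OCE in Lemma \ref{L0} applied to the inductive hypothesis. Given this range, Lemma \ref{L2} lets me rewrite both OCEs as maxima over $\lambda\in[0,H-h]$ of $G_{P}(\lambda):=\lambda+E_{s'\sim P(\cdot\vert s,a)}[u(\hat{V}_{h+1}^k(s')-\lambda)]$, and the elementary inequality $|\sup f-\sup g|\le\sup|f-g|$ reduces the task to controlling
\begin{equation*}
\sup_{\lambda\in[0,H-h]}\Big|E_{s'\sim P_h(\cdot\vert s,a)}[u(\hat{V}_{h+1}^k(s')-\lambda)]-E_{s'\sim\hat{P}_h^k(\cdot\vert s,a)}[u(\hat{V}_{h+1}^k(s')-\lambda)]\Big|
\end{equation*}
uniformly in the data, with the additive $\lambda$ term cancelling in the difference.

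Next I would apply the elementary duality $|\langle P-\hat{P},f\rangle|\le\|P-\hat{P}\|_1\,\|f\|_\infty$ with $f(s'):=u(\hat{V}_{h+1}^k(s')-\lambda)$. To bound $\|f\|_\infty$ I observe that $\hat{V}_{h+1}^k(s')-\lambda\in[-(H-h),H-h]$, and the normalization $u(0)=0$, $1\in\partial u(0)$ combined with concavity forces $u(x)\le x$ for all $x$, while monotonicity and $u(0)=0$ give $u(x)\ge 0$ for $x\ge 0$. Consequently $0\le u(H-h)\le H-h\le -u(-(H-h))=|u(-H+h)|$, so $\|f\|_\infty\le|u(-H+h)|$ deterministically, independently of $\lambda$ and of the random $\hat{V}_{h+1}^k$. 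Substituting the high-probability bound on $\|\hat{P}_h^k(\cdot\vert s,a)-P_h(\cdot\vert s,a)\|_1$ from Lemma \ref{PROB}, which already carries a union bound over $(s,a,h,k)$, completes the argument.

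I expect the main obstacle to be purely conceptual rather than technical: one must decouple $\hat{V}_{h+1}^k$ from $\hat{P}_h^k$ without resorting to an $\varepsilon$-net argument over the space of value functions, and Lemma \ref{PROB} is precisely the right tool, at the price of the $\sqrt{S}$ factor that distinguishes this bound from its deterministic-$V^*$ counterpart in Lemma \ref{L4}.
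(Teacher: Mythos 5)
Your proposal is correct and follows essentially the same route as the paper's proof: represent both OCEs via Lemma \ref{L2} as maxima over $\lambda\in[0,H-h]$, bound the difference of suprema by the supremum of the difference, apply H\"older's inequality with the $L^1$ concentration of Lemma \ref{PROB}, and bound $\Vert u(\hat V_{h+1}^k(\cdot)-\lambda)\Vert_\infty$ by $\vert u(-H+h)\vert$ using monotonicity, concavity and the normalization of $u$. Your extra details (the induction giving $\hat V_{h+1}^k\in[0,H-h]$ and the explicit chain $u(H-h)\le H-h\le -u(-(H-h))$) just make explicit what the paper leaves to the algorithm's clipping and to the phrase ``nondecreasing and concave.''
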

\begin{proof}
With probability at least $1-\delta$, we have that for any $k\in [K],s\in \mathcal{S},a\in \mathcal{A}, h\in [H]$,
\begin{align*}
&\left|OCE^{u}_{s'\sim P_h(\cdot\vert s,a)}(\hat{V}_{h+1}^k(s'))-OCE^{u}_{s'\sim \hat{P}^k_h(\cdot\vert s,a)}(\hat{V}_{h+1}^k(s'))\right|\\
&= \left| \max_{\lambda\in [0,H-h]}\{\lambda+E_{s'\sim P_h(\cdot\vert s,a)}[u(\hat{V}_{h+1}^k(s')-\lambda)]\}-\max_{\lambda\in [0,H-h]}\{\lambda+E_{s'\sim \hat{P}^k_h(\cdot\vert s,a)} u(\hat{V}_{h+1}^k(s')-\lambda)]\}\right| \\
&\leq  \max_{\lambda\in [0,H-h]}\left| E_{s'\sim P_h(\cdot\vert s,a)}[u(\hat{V}_{h+1}^k(s')-\lambda)]-E_{s'\sim \hat{P}^k_h(\cdot\vert s,a)}[u(\hat{V}_{h+1}^k(s')-\lambda) \right|\\
&=\max_{\lambda\in [0,H-h]}\left\vert \sum_{s'\in\mathcal{S}}\left(\hat{P}_h^k(s'\vert s,a)-P_h(s'\vert s,a)\right)\cdot u(\hat{V}_{h+1}^k(s')-\lambda)\right\vert\\
&\overset{(1)}{\leq} \max_{\lambda\in [0,H-h]}\left\Vert \hat{P}_h^k(\cdot\vert s,a)-P_h(\cdot\vert s,a)\right\Vert_1 \cdot\left\Vert u(\hat{V}_{h+1}^k(\cdot)-\lambda)\right\Vert_{\infty}\\
&\overset{(2)}{\leq} \sqrt{\frac{2S\log\left(\frac{SAHK}{\delta}\right)}{\max\{1,N_h^k(s,a)\}}}\cdot \max_{\lambda\in [0,H-h]}\left\Vert u(\hat{V}_{h+1}^k(\cdot)-\lambda)\right\Vert_{\infty},
\end{align*}
where inequality (1) follows from H\"{o}lder's inequality and inequality (2) follows from Lemma \ref{PROB}. Because $\hat{V}_{h+1}^k(s') \in [0, H-h]$ for any $s'$ by the design of the OCE-VI algorithm and because $\lambda \in [0, H-h]$, we can immediately obtain that  
\begin{align*}
\max_{\lambda\in [0,H-h]}\left\Vert u(\hat{V}_{h+1}^k(\cdot)-\lambda)\right\Vert_{\infty} \le \vert u(-H+h)\vert,
\end{align*}
where we use the fact that $u$ is nondecreasing and concave. The proof is then completed. 
\end{proof}



In the next lemma, we will bound the following difference
\begin{equation}\label{EQ16}
\begin{aligned}
&OCE^{u}_{s'\sim P_h(\cdot\vert s,a)}\left (\hat{V}_{h+1}^k(s')\right )-OCE^{u}_{s'\sim P_h(\cdot\vert s,a)}\left (V_{h+1}^{\pi^k}(s')\right )
\end{aligned}
\end{equation}
 for any $(s,a,h,k)\in \mathcal{S}\times \mathcal{A}\times [H]\times [K]$, which is the key step in the recursion of the regret analysis. 
 
 
We first introduce some notations. 
Pick any $\lambda_{h+1}^k \in [0,H-h]$ such that
\begin{align}\label{eq:opt-lambda2}
   \lambda_{h+1}^k \in {\arg \max}_{\lambda\in [0,H-h]}\{\lambda+E_{s'\sim P_h(\cdot\vert s,a)}[u(V_{h+1}^{\pi^k}(s')-\lambda)]\}.
\end{align}
By the first order optimality condition of the above optimization problem and the fact that the state space $\mathcal{S}$ is finite, we have 
\begin{align}\label{S222}
&1\in E_{s'\sim P_{h}(\cdot\vert s,a)}[\partial u(V_{h+1}^{\pi^k}(s')-\lambda_{h+1}^k)].
\end{align}
Thus, we can find $\Lambda^k_{h+1}(s')\in\partial u(V_{h+1}^{\pi^k}(s')-\lambda_{h+1}^k), s'\in\mathcal{S}$ such that $E_{s'\sim P_h(\cdot\vert s,a)}\left[\Lambda_{h+1}^k(s')\right]=1$. In addition, because the utility function $u$ is nondecreasing, we have $\Lambda_{h+1}^k(s')\geq 0,\forall s'\in\mathcal{S}$.
Now we can define the following new probability measure $B_h(\cdot\vert s,a)$: for any $(s,a,s',h,k)\in \mathcal{S}\times \mathcal{A}\times\mathcal{S}\times [H]\times [K]$, define
\begin{equation}\label{EQ17}
\begin{aligned}
B_{h}(s'\vert s,a) :=P_{h}(s'\vert s,a)\Lambda^k_{h+1}(s'),
\end{aligned}
\end{equation}
where $\sum_{s'\in\mathcal{S}}B_h(s'\vert s,a)=1$ because $E_{s'\sim P_h(\cdot\vert s,a)}\left[\Lambda_{h+1}^k(s')\right]=1$. Now we can state the following important result.


\begin{lemma}\label{L6}
For any $(h,k)\in [H]\times [K]$ and functions $\hat{V}_{h+1}^k,V_{h+1}^{\pi^k},V_{h+1}^{*}:\mathcal{S}\to [0,H-h]$, we have
\begin{equation}\label{EQ18}
\begin{aligned}
&OCE^{u}_{s_{h+1}^k\sim P_h(\cdot\vert s_h^k,a_h^k)}\left (\hat{V}_{h+1}^k(s_{h+1}^k)\right )-OCE^{u}_{s_{h+1}^k\sim P_h(\cdot\vert s_h^k,a_h^k)}\left (V_{h+1}^{\pi^k}(s_{h+1}^k)\right )\\
&\leq E_{s_{h+1}^k\sim B_h(\cdot\vert s_h^k,a_h^k)}\left [\hat{V}_{h+1}^k(s_{h+1}^k)-V_{h+1}^{\pi^k}(s_{h+1}^k)\right ],
\end{aligned}
\end{equation}
where $B_h(\cdot\vert s_h^k,a_h^k)$ is the new probability measure given in \eqref{EQ17}.
\end{lemma}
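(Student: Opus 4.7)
The plan is to establish Lemma~\ref{L6} via a two-term linearization that exploits both the sub/super-gradient structure of the concave utility $u$ and the optimality of $\lambda_{h+1}^k$ for the OCE of $V_{h+1}^{\pi^k}$. To set up, I first invoke Lemma~\ref{L2} for the random variable $\hat{V}_{h+1}^k(s_{h+1}^k)$ (which the algorithm forces to lie in $[0,H-h]$ via the clipping $\min\{\cdot,H-h+1\}$) to pick an optimizer $\hat\lambda\in[0,H-h]$ satisfying
\[
OCE^{u}_{s'\sim P_h(\cdot\vert s_h^k,a_h^k)}\bigl(\hat V_{h+1}^k(s')\bigr)=\hat\lambda+E_{s'\sim P_h(\cdot\vert s_h^k,a_h^k)}\bigl[u\bigl(\hat V_{h+1}^k(s')-\hat\lambda\bigr)\bigr],
\]
and recall that, by definition \eqref{eq:opt-lambda2},
\[
OCE^{u}_{s'\sim P_h(\cdot\vert s_h^k,a_h^k)}\bigl(V_{h+1}^{\pi^k}(s')\bigr)=\lambda_{h+1}^k+E_{s'\sim P_h(\cdot\vert s_h^k,a_h^k)}\bigl[u\bigl(V_{h+1}^{\pi^k}(s')-\lambda_{h+1}^k\bigr)\bigr].
\]

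Next I would apply the supergradient inequality for the concave function $u$ at the point $V_{h+1}^{\pi^k}(s')-\lambda_{h+1}^k$, using the specific element $\Lambda_{h+1}^k(s')\in\partial u(V_{h+1}^{\pi^k}(s')-\lambda_{h+1}^k)$ fixed just before \eqref{EQ17}. This gives, pointwise in $s'$,
\[
u\bigl(\hat V_{h+1}^k(s')-\hat\lambda\bigr)\le u\bigl(V_{h+1}^{\pi^k}(s')-\lambda_{h+1}^k\bigr)+\Lambda_{h+1}^k(s')\Bigl[\bigl(\hat V_{h+1}^k(s')-\hat\lambda\bigr)-\bigl(V_{h+1}^{\pi^k}(s')-\lambda_{h+1}^k\bigr)\Bigr].
\]
Taking expectation under $P_h(\cdot\vert s_h^k,a_h^k)$ and using the crucial normalization $E_{s'\sim P_h(\cdot\vert s_h^k,a_h^k)}[\Lambda_{h+1}^k(s')]=1$ (which is exactly the first-order condition \eqref{S222} used to pick $\Lambda_{h+1}^k$), the two constant shifts $-\hat\lambda$ and $+\lambda_{h+1}^k$ come out of the expectation weighted by $1$, and telescope with the $\hat\lambda$ and $\lambda_{h+1}^k$ sitting outside.

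Putting the two displays together, the $\hat\lambda-\lambda_{h+1}^k$ contribution cancels exactly, leaving
\[
OCE^{u}_{P_h}\bigl(\hat V_{h+1}^k\bigr)-OCE^{u}_{P_h}\bigl(V_{h+1}^{\pi^k}\bigr)\le E_{s'\sim P_h(\cdot\vert s_h^k,a_h^k)}\bigl[\Lambda_{h+1}^k(s')\bigl(\hat V_{h+1}^k(s')-V_{h+1}^{\pi^k}(s')\bigr)\bigr],
\]
and by the change-of-measure definition \eqref{EQ17}, $P_h(s'\vert s_h^k,a_h^k)\Lambda_{h+1}^k(s')=B_h(s'\vert s_h^k,a_h^k)$, the right-hand side is precisely $E_{s'\sim B_h(\cdot\vert s_h^k,a_h^k)}[\hat V_{h+1}^k(s')-V_{h+1}^{\pi^k}(s')]$, which is the claim of \eqref{EQ18}.

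The only subtle point I expect to have to be careful with is that the supergradient inequality for $u$ must be applied at the \emph{same} reference point $V_{h+1}^{\pi^k}(s')-\lambda_{h+1}^k$ used to define $\Lambda_{h+1}^k$, rather than at $V_{h+1}^{\pi^k}(s')-\hat\lambda$; otherwise the cancellation of $\hat\lambda-\lambda_{h+1}^k$ via the normalization $E_{P_h}[\Lambda_{h+1}^k]=1$ would not go through. Using the optimality of $\hat\lambda$ to write $F(\hat V_{h+1}^k)$ as the value attained at $\hat\lambda$ (rather than as the sup) is what permits this choice, since no optimality of $u$'s argument is needed on the $\hat V_{h+1}^k$ side; everything on that side is just an upper bound from concavity. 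Beyond this bookkeeping, the proof is essentially one application of concavity and one telescoping identity.
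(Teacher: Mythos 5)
Your argument is correct and is essentially identical to the paper's proof: the paper likewise picks the optimizer (called $\mu_{h+1}^k$ there, your $\hat\lambda$) for the OCE of $\hat V_{h+1}^k$ via Lemma~\ref{L2}, applies the supergradient inequality for $u$ at the reference point $V_{h+1}^{\pi^k}(s')-\lambda_{h+1}^k$ with the element $\Lambda_{h+1}^k(s')$, and uses $E_{P_h}[\Lambda_{h+1}^k]=1$ to cancel the $\mu-\lambda$ shift before invoking the change of measure \eqref{EQ17}. Your remark about anchoring the supergradient at $V_{h+1}^{\pi^k}(s')-\lambda_{h+1}^k$ rather than at the other optimizer is exactly the point the paper's computation relies on, so there is nothing to add.
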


\begin{proof}
Pick any $\mu_{h+1}^k\in [0,H-h]$ such that
$$\mu_{h+1}^k \in {\arg \max}_{\lambda\in [0,H-h]}\{\lambda+E_{s_{h+1}^k\sim P_h(\cdot\vert s_h^k,a_h^k)}[u(\hat{V}_{h+1}^{k}(s_{h+1}^k)-\lambda)]\},$$ and recall $\lambda_{h+1}^k\in [0,H-h]$ given in \eqref{eq:opt-lambda2}. We can then compute
\begin{align*}
&OCE^{u}_{s_{h+1}^k\sim P_h(\cdot\vert s_h^k,a_h^k)}\left (\hat{V}_{h+1}^k(s_{h+1}^k)\right )-OCE^{u}_{s_{h+1}^k\sim P_h(\cdot\vert s_h^k,a_h^k)}\left (V_{h+1}^{\pi^k}(s_{h+1}^k)\right )\\
&\overset{(1)}{=} \max_{\lambda\in [0,H-h]}\left\{\lambda+E_{s_{h+1}^k\sim P_{h}(\cdot\vert s_h^k,a_h^k)}[u(\hat{V}_{h+1}^k(s_{h+1}^k)-\lambda)] \right\} \\
& \quad \quad -\max_{\lambda\in [0,H-h]}\left\{\lambda+E_{s_{h+1}^k\sim P_{h}(\cdot\vert s_h^k,a_h^k)}[u(V_{h+1}^{\pi^k}(s_{h+1}^k)-\lambda)] \right\}\\
&= \mu_{h+1}^k+E_{s_{h+1}^k\sim P_{h}(\cdot\vert s_h^k,a_h^k)}[u(\hat{V}_{h+1}^k(s_{h+1}^k)-\mu_{h+1}^k)]-\lambda_{h+1}^k-E_{s_{h+1}^k\sim P_{h}(\cdot\vert s_h^k,a_h^k)}[u(V_{h+1}^{\pi^k}(s_{h+1}^k)-\lambda_{h+1}^k)]\\
&\overset{(2)}{\leq}  \mu_{h+1}^k-\lambda_{h+1}^k+E_{s_{h+1}^k\sim P_{h}(\cdot\vert s_h^k,a_h^k)}\left [\Lambda^k_{h+1}(s_{h+1}^k)\cdot(\hat{V}_{h+1}^k(s_{h+1}^k)-V_{h+1}^{\pi^k}(s_{h+1}^k)-(\mu_{h+1}^k-\lambda_{h+1}^k))\right ]\\
&= \left (1-E_{s_{h+1}^k\sim P_{h}(\cdot\vert s_h^k,a_h^k)}\left [\Lambda^k_{h+1}(s_{h+1}^k) \right ]\right )(\mu_{h+1}^k-\lambda_{h+1}^k)\\
& \quad \quad +E_{s_{h+1}^k\sim P_{h}(\cdot\vert s_h^k,a_h^k)}\left [\Lambda^k_{h+1}(s_{h+1}^k) \cdot (\hat{V}_{h+1}^k(s_{h+1}^k)-V_{h+1}^{\pi^k}(s_{h+1}^k))\right ]\\
&\overset{(3)}{=}E_{s_{h+1}^k\sim B_h(\cdot\vert s_h^k,a_h^k)}[\hat{V}_{h+1}^k(s_{h+1}^k)-V_{h+1}^{\pi^k}(s_{h+1}^k)],
\end{align*}
where equality (1) holds due to Lemma \ref{L2}, inequality (2) holds due to the fact that $u(y)\leq u(x)+z(y-x)$ for any $x,y\in [-H+h,H-h],z\in \partial u(x)$ when $u(x)$ is a concave function, and equality (3) follows from \eqref{EQ17} and the fact that $E_{s_{h+1}^k\sim P_h(\cdot\vert s_h^k,a_h^k)}[\Lambda^k_{h+1}(s_{h+1}^k)]=1$. The proof is therefore completed. 
\end{proof}


In the next lemma, we bound the term $\hat{V}_1^k(s_1^k)-V_1^{\pi^k}(s_1^k)$ by using a recursive procedure. Lemma \ref{L7} below is an extension of the so-called simulation lemma in the risk-neutral RL (see, e.g, \cite{agarwal2019reinforcement}) to our risk-sensitive RL setting. The key to overcome the difficulty in the recursion setting due to the nonlinearity of the OCE is Lemma \ref{L6}. To facilitate the presentation, we first introduce the following notations.

For any $k\in [K],h\in [H]$, let $w_{hk}(s_h^k,a_h^k)$ be the state-action distribution induced by $\pi^k$ at time step $h$ starting from $s_1^k$, i.e., the probability of $\pi^k$ visiting $(s_h^k,a_h^k)$ at time step $h$ starting from $s_1^k$.
Mathematically, the formula of $w_{hk}(s_h^k,a_h^k)$ is given by
\begin{equation}\label{SSS4}
w_{hk}(s_h^k,a_h^k)=\left\{
	\begin{aligned}
	&1, \quad &h=1,\\
	&P_{1}(s_{2}^k\vert s_1^k,a_1^k), \quad &h=2,\\
	&\sum_{s_{2}^k\in \mathcal{S}}\cdots \sum_{s_{h-1}^k\in \mathcal{S}}P_{1}(s_{2}^k\vert s_1^k,a_1^k)\cdots P_{h-1}(s_{h}^k\vert s_{h-1}^k,a_{h-1}^k), \quad &h\geq 3.
	\end{aligned}
	\right
	.
\end{equation}
Similarly, let $w_{hk}^{B}(s_h^k,a_h^k)$ be the probability of $\pi^k$ visiting $(s_h^k,a_h^k)$ at step $h$ starting from $s_1^k$ under probability measures $B_i(\cdot\vert s_i^k,a_i^k),i=1,\cdots,h$. 
The explicit formula of $w_{hk}^{B}(s_h^k,a_h^k)$ is given by
\begin{equation}\label{S4}
w_{hk}^{B}(s_h^k,a_h^k)=\left\{
	\begin{aligned}
	&1, \quad &h=1,\\
	&B_{1}(s_{2}^k\vert s_1^k,a_1^k), \quad &h=2,\\
	&\sum_{s_{2}^k\in \mathcal{S}}\cdots \sum_{s_{h-1}^k\in \mathcal{S}}B_{1}(s_{2}^k\vert s_1^k,a_1^k)\cdots B_{h-1}(s_{h}^k\vert s_{h-1}^k,a_{h-1}^k), \quad &h\geq 3.\\
	\end{aligned}
	\right
	.
\end{equation}
Equivalently, by \eqref{EQ17} we have
\begin{equation}\label{SKEY}
\begin{aligned}
w_{hk}^{B}(s_h^k,a_h^k)
	&=\left\{
	\begin{aligned}
	&1, \quad &h=1,\\
	&P_{1}(s_{2}^k\vert s_1^k,a_1^k)\Lambda_2^k(s_2^k), \quad &h=2,\\
	&\sum_{s_{2}^k\in \mathcal{S}}\cdots \sum_{s_{h-1}^k\in \mathcal{S}}P_{1}(s_{2}^k\vert s_1^k,a_1^k)\cdots P_{h-1}(s_{h}^k\vert s_{h-1}^k,a_{h-1}^k)\Lambda_2^k(s_2^k)\cdots \Lambda_h^k(s_h^k), \quad &h\geq 3.\\
	\end{aligned}
	\right
	.
\end{aligned}
\end{equation}
Finally, given $(s_1^k ,a_1^k)$, we define
\begin{equation}\label{LS10}
E_{(s_h^k,a_h^k)\sim w_{hk}^{B}}[\cdot] :=\left\{
	\begin{aligned}
	&1, \quad &h=1, \\
	&E_{s_2^k\sim B_1(\cdot\vert s_1^k,a_1^k)}\left [E_{s_3^k\sim B_2(\cdot\vert s_2^k,a_2^k)}\left [\cdots E_{s_h^k\sim B_{h-1}(\cdot\vert s_{h-1}^k,a_{h-1}^k)}\left [\cdot\right ]\right ]\right ], \quad &h\geq 2. \\
	\end{aligned}
	\right
	.
\end{equation}

\begin{lemma}\label{L7}
For each episode $k\in [K]$, we have
\begin{align} \label{eq:sim-lemma}
&\hat{V}_1^k(s_1^k)-V_1^{\pi^k}(s_1^k)\\
&\leq \sum_{h=1}^H E_{(s_h^k,a_h^k)\sim w_{hk}^{B}}\Big [b^k_h(s_h^k,a_h^k)\\
&\quad \quad \quad +OCE^u_{s_{h+1}^k\sim \hat{P}^k_h(\cdot\vert s_h^k,a_h^k)}(\hat{V}_{h+1}^k(s_{h+1}^k))-OCE^u_{s_{h+1}^k\sim P_h(\cdot\vert s_h^k,a_h^k)}(\hat{V}_{h+1}^k(s_{h+1}^k))\Big ]. \nonumber
\end{align}
\end{lemma}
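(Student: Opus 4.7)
The plan is to establish the inequality by induction on $h$, descending from $h=1$ to $h=H$, using a one-step decomposition that separates the model-estimation error (which produces the bonus and the OCE-gap term in the summand) from the propagation error (which is handled via the change-of-measure Lemma~\ref{L6}). The target inequality is really the unrolled form of a one-step bound on $\hat V_h^k(s_h^k)-V_h^{\pi^k}(s_h^k)$, and the measure $w_{hk}^B$ defined in \eqref{S4}--\eqref{SKEY} is exactly the distribution obtained by repeatedly composing the $B_i$'s introduced in Lemma~\ref{L6}.

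First I would derive the one-step bound. By the algorithm definition of $\hat V^k_h$ in Algorithm~\ref{ALG} and the Bellman equation \eqref{eq:Qpi}--\eqref{eq:Vpi} for $V^{\pi^k}_h$, together with the fact that $\pi_h^k$ is the $\arg\max$ of $\hat Q_h^k(s_h^k,\cdot)$, I can write
\begin{align*}
\hat V_h^k(s_h^k)-V_h^{\pi^k}(s_h^k)
&\le b_h^k(s_h^k,a_h^k) + OCE^u_{\hat P_h^k}(\hat V_{h+1}^k) - OCE^u_{P_h}(V_{h+1}^{\pi^k}) \\
&= b_h^k(s_h^k,a_h^k) + \bigl[OCE^u_{\hat P_h^k}(\hat V_{h+1}^k) - OCE^u_{P_h}(\hat V_{h+1}^k)\bigr] \\
&\quad + \bigl[OCE^u_{P_h}(\hat V_{h+1}^k) - OCE^u_{P_h}(V_{h+1}^{\pi^k})\bigr],
\end{align*}
where the reward terms cancel, and I take the inequality to accommodate the truncation by $H-h+1$ in the algorithm (truncation can only decrease $\hat Q_h^k$). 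Then I apply Lemma~\ref{L6} to the third bracket to get
$$OCE^u_{P_h}(\hat V_{h+1}^k) - OCE^u_{P_h}(V_{h+1}^{\pi^k}) \le E_{s_{h+1}^k\sim B_h(\cdot\vert s_h^k,a_h^k)}\bigl[\hat V_{h+1}^k(s_{h+1}^k)-V_{h+1}^{\pi^k}(s_{h+1}^k)\bigr].$$

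Next I would iterate. Starting from $h=1$ and using $w_{1k}\equiv 1$, the one-step bound gives the $h=1$ contribution $b_1^k(s_1^k,a_1^k) + [OCE^u_{\hat P_1^k}(\hat V_{2}^k) - OCE^u_{P_1}(\hat V_{2}^k)]$ plus a residual $E_{B_1}[\hat V_2^k - V_2^{\pi^k}]$. Applying the one-step bound again inside the $E_{B_1}$ operator produces a term $E_{B_1}[b_2^k + OCE^u_{\hat P_2^k}(\hat V_3^k) - OCE^u_{P_2}(\hat V_3^k)] + E_{B_1}E_{B_2}[\hat V_3^k - V_3^{\pi^k}]$, and by definition \eqref{LS10} the composition $E_{B_1}E_{B_2}\cdots E_{B_{h-1}}$ is exactly $E_{(s_h^k,a_h^k)\sim w_{hk}^B}$. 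Continuing until $h=H+1$ makes the residual vanish since $\hat V_{H+1}^k=V_{H+1}^{\pi^k}=0$, yielding the claimed sum.

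The main obstacle is a minor but important subtlety: the new measure $B_h$ from \eqref{EQ17} is built using a subgradient $\Lambda_{h+1}^k$ that depends on $V_{h+1}^{\pi^k}$, i.e.\ on the policy's own value function, and the change-of-measure in Lemma~\ref{L6} has to be justified pointwise at the realized state-action pair $(s_h^k,a_h^k)$ before taking the iterated expectation. One must verify that the hypothesis $\hat V_{h+1}^k,V_{h+1}^{\pi^k}\colon\mathcal{S}\to[0,H-h]$ needed by Lemma~\ref{L6} holds at every recursion depth; the upper bound follows from the truncation in Algorithm~\ref{ALG} and Lemma~\ref{L1}, while nonnegativity of $\hat V_{h+1}^k$ comes from the algorithm's initialization and the fact that $r_h\ge 0$ together with $OCE^u(0)=0$. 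With these pointwise bounds in place, applying the one-step inequality and then the tower property under $E_{B_1}\cdots E_{B_{h-1}}$ yields \eqref{eq:sim-lemma} by straightforward induction.
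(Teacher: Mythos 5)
Your proposal is correct and follows essentially the same route as the paper's proof: the same one-step decomposition of $\hat V_h^k - V_h^{\pi^k}$ into the bonus, the $\hat P_h^k$-versus-$P_h$ OCE gap on $\hat V_{h+1}^k$, and the propagation term handled by Lemma~\ref{L6}, then unrolled so that the composed expectations $E_{B_1}\cdots E_{B_{h-1}}$ become $E_{(s_h^k,a_h^k)\sim w_{hk}^B}$ and the residual vanishes at $h=H+1$. Your extra remark verifying the range hypothesis $[0,H-h]$ needed by Lemma~\ref{L6} is a sound (and slightly more careful) bookkeeping step that the paper leaves implicit.
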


\begin{proof}
For any $k\in [K]$, let $a^k_h=\mathop{\arg\max}_{a\in\mathcal{A}}\hat{Q}^k_h(s_h^k,a),\ h\in [H]$. Then, we can compute 
\begin{align*}
&\hat{V}_1^k(s_1^k)-V_1^{\pi^k}(s_1^k)\\
& \overset{(1)}{\leq} \hat{Q}^k_1(s_1^k,a_1^k)-Q_1^{\pi^k}(s_1^k,a_1^k)\\
&\leq  b_1^k(s_1^k,a_1^k)+OCE^u_{s_{2}^k\sim \hat{P}_1^k(\cdot\vert s_1^k,a_1^k)}(\hat{V}_2^k(s_2^k))-OCE^u_{s_{2}^k\sim P_1(\cdot\vert s_1^k,a_1^k)}(V_2^{\pi^k}(s_2^k))\\
&=b_1^k(s_1^k,a_1^k)+OCE^u_{s_{2}^k\sim \hat{P}_1^k(\cdot\vert s_1^k,a_1^k)}(\hat{V}_2^k(s_2^k))-OCE^u_{s_{2}^k\sim P_1(\cdot\vert s_1^k,a_1^k)}(\hat{V}_2^{k}(s_2^k))\\
&+OCE^u_{s_{2}^k\sim P_1(\cdot\vert s_1^k,a_1^k)}(\hat{V}_2^{k}(s_2^k))-OCE^u_{s_{2}^k\sim P_1(\cdot\vert s_1^k,a_1^k)}(V_2^{\pi^k}(s_2^k))\\
&\overset{(2)}{\leq}  b_1^k(s_1^k,a_1^k)+OCE^u_{s_{2}^k\sim \hat{P}_1^k(\cdot\vert s_1^k,a_1^k)}(\hat{V}_2^k(s_2^k))-OCE^u_{s_{2}^k\sim P_1(\cdot\vert s_1^k,a_1^k)}(\hat{V}_2^{k}(s_2^k))\\
&+E_{s_2^k\sim B_1(\cdot\vert s_1^k,a_1^k)}\left [\hat{V}^k_2(s_2^k)-V_2^{\pi^k}(s_2^k)\right ]\\
&\leq  b_1^k(s_1^k,a_1^k)+OCE^u_{s_{2}^k\sim \hat{P}_1^k(\cdot\vert s_1^k,a_1^k)}(\hat{V}_2^k(s_2^k))-OCE^u_{s_{2}^k\sim P_1(\cdot\vert s_1^k,a_1^k)}(\hat{V}_2^{k}(s_2^k))\\
&+E_{s_2^k\sim B_1(\cdot\vert s_1^k,a_1^k)}\Big [b_2^k(s_2^k,a_2^k)+OCE^u_{s_{3}^k\sim \hat{P}_2^k(\cdot\vert s_2^k,a_2^k)}(\hat{V}_3^k(s_3^k))-OCE^u_{s_{3}^k\sim P_2(\cdot\vert s_2^k,a_2^k)}(\hat{V}_3^{k}(s_3^k)) \\
& \hspace{3cm} +E_{s_3^k\sim B_2(\cdot\vert s_2^k,a_2^k)}\left [\hat{V}^k_3(s_3^k)-V_3^{\pi^k}(s_3^k)\right ] \Big ],
\end{align*}
where inequality (1) holds because $\hat{V}_1^k(s_1^k)= \max_{a\in \mathcal{A}} \hat{Q}^k_1(s_1^k,a)=\hat{Q}^k_1(s_1^k,a_1^k)$ and inequality (2) holds due to Lemma \ref{L6}. Applying the above procedure recursively and using the fact that $\hat{V}_{H+1}^k(s)=V_{H+1}^*(s)=0$ for any $s\in\mathcal{S}$, we immediately obtain \eqref{eq:sim-lemma}. 
\end{proof}

From Lemma~\ref{L7}, it is clear that we need to bound the sum of bonuses in order to bound the regret. We present such a bound in Lemma \ref{L13}. To this end, we first state Lemma \ref{LP13}, which is adapted from a well-known result heavily used in the risk-neutral setting (see page 24-25 of \cite{azar2017minimax} or page 21 of \cite{jin2018q}). Lemma \ref{L13} is a nontrivial extension of Lemma \ref{LP13} due to the new probability measure $w_{hk}^{B}$ invovled in the expectation. 


\begin{lemma}\label{LP13}
Consider arbitrary $K$ sequences of trajectories $\tau^k=\{s_h^k,a_h^k\}_{h=1}^H$ for $k=1,\cdots,K$, we have
\begin{align*}
\sum_{k=1}^K\frac{1}{\max\{1,N_h^k(s_h^k,a_h^k)\}}\leq  SA\log(3K).
\end{align*}
\end{lemma}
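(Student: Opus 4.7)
The plan is a standard pigeonhole/reindexing argument. For each pair $(s,a)\in \mathcal{S}\times\mathcal{A}$, I would enumerate the episodes in which $(s_h^k,a_h^k)=(s,a)$ as $k^{s,a}_1<k^{s,a}_2<\cdots<k^{s,a}_{M(s,a)}$, where $M(s,a):=N_h^{K+1}(s,a)$ denotes the total number of visits to $(s,a)$ at step $h$ over the $K$ episodes. From the definition of $N_h^k(s,a)$ as a running count over episodes $1,\ldots,k-1$, we have $N_h^{k^{s,a}_j}(s,a)=j-1$ for each $j$, and since exactly one state-action pair is visited at step $h$ in every episode, $\sum_{(s,a)} M(s,a)=K$.

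After this reindexing, the sum telescopes to
\begin{equation*}
\sum_{k=1}^K \frac{1}{\max\{1,N_h^k(s_h^k,a_h^k)\}}
=\sum_{(s,a):\, M(s,a)\ge 1} \sum_{j=1}^{M(s,a)} \frac{1}{\max\{1,j-1\}}
=\sum_{(s,a):\, M(s,a)\ge 1} \bigl(1+H_{M(s,a)-1}\bigr),
\end{equation*}
where $H_n$ denotes the $n$-th harmonic number. Using the standard estimate $H_n\le 1+\log n$ for $n\ge 1$, and then invoking Jensen's inequality for the concave map $\log$ together with $\sum_{(s,a)} M(s,a)=K$ and the fact that at most $SA$ pairs have $M(s,a)\ge 1$, the right-hand side is bounded by $SA\bigl(2+\log(K/SA)\bigr)$, which can then be absorbed into $SA\log(3K)$.

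The argument is purely combinatorial, so I do not anticipate any structural obstacle; the only delicate point is tightening the numerical constant to match $\log 3$ (rather than, say, $\log e^2$), and handling the corner case $SA\le K$ separately. Since this exact bookkeeping already appears in \cite{azar2017minimax} (pages 24--25) and \cite{jin2018q} (page 21), I would directly cite those computations rather than reproduce the constant-chasing.
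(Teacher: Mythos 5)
Your proposal is correct and is essentially the same route the paper takes: the paper gives no independent proof of this lemma, relying on exactly the pigeonhole/harmonic-sum bookkeeping in \cite{azar2017minimax} and \cite{jin2018q} that you reproduce (per-pair contribution $1+H_{M(s,a)-1}$, then summing over at most $SA$ pairs with $\sum_{(s,a)}M(s,a)=K$). The only delicate point is the one you already flag---the final absorption of $SA\bigl(2+\log(K/SA)\bigr)$ into $SA\log(3K)$ only goes through for $SA\ge 3$ (or with a sharper estimate on the harmonic number than $H_n\le 1+\log n$)---so deferring that constant-chasing to the cited computations, as the paper itself does, is appropriate.
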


\begin{lemma}\label{L13}  
We have
\begin{align}\label{EQ25}
& E\left [\sum_{k=1}^K\sum_{h=1}^H E_{(s_h^k,a_h^k)\sim w_{hk}^{B}}\left [\frac{\vert u(-H+h)\vert}{\sqrt{\max\{1,N_h^k(s_h^k,a_h^k)\}}}\right ]\right ]\nonumber\\
&\leq \sum_{h=1}^H \vert u(-H+h)\vert  \sqrt{\prod\limits_{i=1}^{h-1} u_{-}'(-H+i)SAK\log(3K)},
\end{align}
where $E_{(s_h^k,a_h^k)\sim w_{hk}^{B}}[\cdot]$ given in \eqref{LS10} is taken over $(s_h^k,a_h^k)$ conditional on $(s_1^k,a_1^k)$ and $u_{-}'(\cdot)$ is the left derivative of $u$.
\end{lemma}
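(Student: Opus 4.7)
My plan is to decompose each summand by a change-of-measure that absorbs the auxiliary measures $B_i$ into the true measures $P_i$, then apply Cauchy--Schwarz to separate the resulting Radon--Nikodym factor from $1/\sqrt{N_h^k}$, and finally bound each piece using (i) concavity of $u$ on the subgradients $\Lambda_{i+1}^k$, (ii) the martingale (normalization) property $E_{s'\sim P_i(\cdot|s,a)}[\Lambda_{i+1}^k(s')]=1$, and (iii) the standard pigeonhole-type bound of Lemma~\ref{LP13}.

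First, unrolling the nested expectation in \eqref{LS10} and using the definition $B_i(s'|s,a)=P_i(s'|s,a)\Lambda_{i+1}^k(s')$ from \eqref{EQ17}--\eqref{SKEY}, I would write, for each fixed $k,h$,
\begin{equation*}
E_{(s_h^k,a_h^k)\sim w_{hk}^B}\!\left[\frac{1}{\sqrt{\max\{1,N_h^k(s_h^k,a_h^k)\}}}\right]
= E\!\left[\Lambda_2^k\cdots\Lambda_h^k\cdot\frac{1}{\sqrt{\max\{1,N_h^k(s_h^k,a_h^k)\}}}\,\Big|\,s_1^k,a_1^k\right],
\end{equation*}
where the right-hand expectation is over the trajectory $(s_2^k,\ldots,s_h^k)$ generated by the true transitions $P_i$ together with the policy $\pi^k$. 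Taking the outer expectation and summing over $k$, then applying Cauchy--Schwarz once inside the expectation and once across $k$, yields
\begin{equation*}
\sum_{k=1}^K E\!\left[\Lambda_2^k\cdots\Lambda_h^k\cdot\frac{1}{\sqrt{\max\{1,N_h^k\}}}\right]
\le \sqrt{\sum_{k=1}^K E\!\left[(\Lambda_2^k\cdots\Lambda_h^k)^2\right]}\cdot\sqrt{\sum_{k=1}^K E\!\left[\frac{1}{\max\{1,N_h^k\}}\right]}.
\end{equation*}

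Second, I would control each factor on the right-hand side. For the Radon--Nikodym factor, note that $\Lambda_{i+1}^k(s')\in\partial u(V_{i+1}^{\pi^k}(s')-\lambda_{i+1}^k)$ with $V_{i+1}^{\pi^k}(s')\in[0,H-i]$ (Lemma~\ref{L1}) and $\lambda_{i+1}^k\in[0,H-i]$; since $u$ is concave, every element of $\partial u(x)$ for $x\ge -H+i$ is bounded above by the left derivative $u_{-}'(-H+i)$, so $\Lambda_{i+1}^k\le u_{-}'(-H+i)$. Hence
\begin{equation*}
(\Lambda_2^k\cdots\Lambda_h^k)^2\le \left(\prod_{i=1}^{h-1}u_{-}'(-H+i)\right)\Lambda_2^k\cdots\Lambda_h^k.
\end{equation*}
Taking expectation and using the tower property with the normalization $E_{s'\sim P_i(\cdot|s_i^k,a_i^k)}[\Lambda_{i+1}^k(s')]=1$ (peeled off one layer at a time, from the innermost $\Lambda_h^k$ outward), I obtain $E[\Lambda_2^k\cdots\Lambda_h^k]=1$ and therefore $E[(\Lambda_2^k\cdots\Lambda_h^k)^2]\le\prod_{i=1}^{h-1}u_{-}'(-H+i)$, giving $\sum_{k=1}^K E[(\Lambda_2^k\cdots\Lambda_h^k)^2]\le K\prod_{i=1}^{h-1}u_{-}'(-H+i)$. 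For the second factor, Lemma~\ref{LP13} gives the deterministic bound $\sum_{k=1}^K 1/\max\{1,N_h^k(s_h^k,a_h^k)\}\le SA\log(3K)$, so the same bound holds in expectation.

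Combining the two and multiplying by $|u(-H+h)|$ before summing over $h$ yields exactly \eqref{EQ25}. The main obstacle I anticipate is the careful bookkeeping in the change-of-measure identity: the subgradient $\Lambda_{i+1}^k$ depends measurably on $(s_i^k,a_i^k)$ through the optimizer $\lambda_{i+1}^k$ of the OCE subproblem at that state-action pair, and on $s_{i+1}^k$ through $V_{i+1}^{\pi^k}$, so one must verify that the product $\prod_{i=1}^{h-1}B_i(s_{i+1}^k|s_i^k,a_i^k)$ factors cleanly as $\prod_{i=1}^{h-1}P_i(s_{i+1}^k|s_i^k,a_i^k)\cdot\prod_{i=1}^{h-1}\Lambda_{i+1}^k(s_{i+1}^k)$ and that the tower-property reduction $E[\Lambda_2^k\cdots\Lambda_h^k]=1$ is applied under the correct filtration, namely the one generated by the trajectory through step $h$ within episode $k$.
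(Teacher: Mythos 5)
Your proposal is correct and follows essentially the same route as the paper's proof: the change-of-measure identity rewriting the nested $B$-expectation as $E[\Lambda_2^k\cdots\Lambda_h^k/\sqrt{\max\{1,N_h^k\}}]$, the double application of Cauchy--Schwarz, the pointwise bound $\Lambda_{i+1}^k\le u_{-}'(-H+i)$ combined with $E[\Lambda_2^k\cdots\Lambda_h^k]=1$ to control the squared Radon--Nikodym factor, and Lemma~\ref{LP13} for the visit-count sum. The only cosmetic difference is that you establish $E[\Lambda_2^k\cdots\Lambda_h^k]=1$ by peeling the tower property layer by layer, whereas the paper notes that $w_{hk}^B$ is itself a probability distribution and takes the outer expectation of $E_{(s_h^k,a_h^k)\sim w_{hk}^{B}}[1]=1$; these are the same fact.
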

\begin{proof}
By \eqref{SKEY} and \eqref{LS10}, we have 
\begin{align}\label{SS28}
&E_{(s_h^k,a_h^k)\sim w_{hk}^{B}}\left [\frac{\vert u(-H+h)\vert}{\sqrt{\max\{1,N_h^k(s_h^k,a_h^k)\}}}\right ]\nonumber\\
&=\sum_{s_{2}^k\in\mathcal{S}}\cdots \sum_{s_{h}^k\in \mathcal{S}}P_{1}(s_{2}^k\vert s_1^k,a_1^k)\cdots P_{h-1}(s_{h}^k\vert s_{h-1}^k,a_{h-1}^k)\Lambda_2^k(s_2^k)\cdots \Lambda_h^k(s_h^k)\frac{\vert u(-H+h)\vert}{\sqrt{\max\{1,N_h^k(s_h^k,a_h^k)\}}}.
\end{align}

This implies
\begin{align*}
&E\left [ E_{(s_h^k,a_h^k)\sim w_{hk}^{B}}\left [\frac{\vert u(-H+h)\vert}{\sqrt{\max\{1,N_h^k(s_h^k,a_h^k)\}}}\right ]\right]\\
&= E\left[ E\left [\Lambda_2^k(s_2^k)\cdots\Lambda_h^k(s_h^k)\frac{\vert u(-H+h)\vert}{\sqrt{\max\{1,N_h^k(s_h^k,a_h^k)\}}}\Bigg \vert s_1^k,a_1^k\right ]\right]\\
&= E\left [\Lambda_2^k(s_2^k)\cdots\Lambda_h^k(s_h^k)\frac{\vert u(-H+h)\vert}{\sqrt{\max\{1,N_h^k(s_h^k,a_h^k)\}}}\right ].
\end{align*}

Then, we have
\begin{align*}
&E\left [\sum_{k=1}^K\sum_{h=1}^H E_{(s_h^k,a_h^k)\sim w_{hk}^{B}}\left [\frac{\vert u(-H+h)\vert}{\sqrt{\max\{1,N_h^k(s_h^k,a_h^k)\}}}\right ]\right ]\\
&=\sum_{h=1}^H\sum_{k=1}^K E\left [ E_{(s_h^k,a_h^k)\sim w_{hk}^{B}}\left [\frac{\vert u(-H+h)\vert}{\sqrt{\max\{1,N_h^k(s_h^k,a_h^k)\}}}\right ]\right ]\\
&=\sum_{h=1}^H\vert u(-H+h)\vert\sum_{k=1}^K E\left [\Lambda_2^k(s_2^k)\cdots\Lambda_h^k(s_h^k)\frac{1}{\sqrt{\max\{1,N_h^k(s_h^k,a_h^k)\}}}\right ]\\
&\overset{(1)}{\leq} \sum_{h=1}^H\vert u(-H+h)\vert\sum_{k=1}^K \sqrt{E\left [\Lambda_2^k(s_2^k)\cdots\Lambda_h^k(s_h^k)\right ]^2}\cdot \sqrt{E\left[\frac{1}{\max\{1,N_h^k(s_h^k,a_h^k)\}}\right]} \\
&\overset{(2)}{\leq} \sum_{h=1}^H\vert u(-H+h)\vert\sqrt{\sum_{k=1}^K E\left [\Lambda_2^k(s_2^k)\cdots\Lambda_h^k(s_h^k)\right ]^2}\cdot \sqrt{\sum_{k=1}^K E\left[\frac{1}{\max\{1,N_h^k(s_h^k,a_h^k)\}}\right]}
\end{align*}
where the inequalities (1) and (2) follow from Cauchy–Schwarz inequality.

Recall that $\Lambda^k_{h+1}(s')\in\partial u(V_{h+1}^{\pi^k}(s')-\lambda_{h+1}^k), s'\in\mathcal{S}$ and it satisfies $E_{s'\sim P_h(\cdot\vert s,a)}\left[\Lambda_{h+1}^k(s')\right]=1$, where $\lambda_{h+1}^k$ is defined in \eqref{eq:opt-lambda2}. By Lemma \ref{L1} and Lemma \ref{L2} and the concavity of the utility function $u$, we have $0\leq\Lambda_{h+1}^k\leq u_{-}'(-H+h)$. Because $$E_{(s_h^k,a_h^k)\sim w_{hk}^{B}}[1]=\sum_{s_{2}^k\in\mathcal{S}}\cdots \sum_{s_{h}^k\in \mathcal{S}}P_{1}(s_{2}^k\vert s_1^k,a_1^k)\cdots P_{h-1}(s_{h}^k\vert s_{h-1}^k,a_{h-1}^k)\Lambda_2^k(s_2^k)\cdots \Lambda_h^k(s_h^k)=1,$$ taking the expectation on both sides, we have $E\left[\Lambda_2^k(s_2^k)\cdots\Lambda_h^k(s_h^k)\right ]=1$. Then, we have
\begin{align*}
&\sum_{k=1}^K E\left [\Lambda_2^k(s_2^k)\cdots\Lambda_h^k(s_h^k)\right ]^2\\
&\leq \sum_{k=1}^K E\left [\Lambda_2^k(s_2^k)\cdots\Lambda_h^k(s_h^k)\right ]\cdot \prod\limits_{i=1}^{h-1} u_{-}'(-H+i)\\
&=K\cdot \prod\limits_{i=1}^{h-1} u_{-}'(-H+i).
\end{align*}
Combining this inequality and Lemma \ref{LP13}, we have
\begin{align*}
&E\left [\sum_{k=1}^K\sum_{h=1}^H E_{(s_h^k,a_h^k)\sim w_{hk}^{B}}\left [\frac{\vert u(-H+h)\vert}{\sqrt{\max\{1,N_h^k(s_h^k,a_h^k)\}}}\right ]\right ]\\
&\leq \sum_{h=1}^H \vert u(-H+h)\vert \sqrt{\prod\limits_{i=1}^{h-1} u_{-}'(-H+i)SAK\log(3K)},
\end{align*}
which completes the proof.
\end{proof}

\subsection{Proof of Theorem \ref{THM1}}\label{PFTHM1}

Now we are ready to prove Theorem~\ref{THM1}. Recall $\mathcal{G}_1$ in \eqref{G1} and we define
\begin{align*}
\mathcal{G}_2=\Bigg\{&\left\vert OCE^{u}_{s'\sim P_h(\cdot\vert s,a)}(\hat{V}_{h+1}^k(s'))-OCE^{u}_{s'\sim \hat{P}^k_h(\cdot\vert s,a)}(\hat{V}_{h+1}^k(s'))\right\vert\\ &\leq \vert u(-H+h)\vert\sqrt{\frac{2S\log\left (\frac{SAHK}{\delta}\right )}{\max\{1,N_h^k(s_h^k,a_h^k)\}}},\forall (s,a,h,k)\in \mathcal{S}\times \mathcal{A}\times [H]\times [K]\Bigg\}.
\end{align*}
We also define $\mathcal{G}=\mathcal{G}_1\cap \mathcal{G}_2$.
From Lemmas \ref{L4} and \ref{LP4}, we know that $P(\mathcal{G}_1)\geq 1-\delta$ and $P(\mathcal{G}_2)\geq 1-\delta$, which implies that $P(\mathcal{G})\geq 1-2\delta$.  

\begin{proof}[Proof of Theorem \ref{THM1}]
For any $k\in [K]$, let $a_h^k=\mathop{\arg\max}_{a\in\mathcal{A}}\hat{Q}^k_h(s_1^k,a),\ h\in [H]$. Then, when the event $\mathcal{G}$ holds, we can compute
\begin{align}
&V^*_1(s_1^k)-V_1^{\pi^k}(s_1^k) \nonumber \\
&\overset{(1)}{\leq} \hat{V}_1^k(s_1^k)-V_1^{\pi^k}(s_1^k) \nonumber \\
&\overset{(2)}{\leq} \sum_{h=1}^H E_{(s_h^k,a_h^k)\sim w_{hk}^{B}}\Big [b^k_h(s_h^k,a_h^k)\nonumber\\
&\quad \quad \quad +OCE^u_{s_{h+1}^k\sim \hat{P}^k_h(\cdot\vert s_h^k,a_h^k)}(\hat{V}_{h+1}^k(s_{h+1}^k))-OCE^u_{s_{h+1}^k\sim P_h(\cdot\vert s_h^k,a_h^k)}(\hat{V}_{h+1}^k(s_{h+1}^k))\Big ] \nonumber \\
&\overset{(3)}{\leq}  \sum_{h=1}^H E_{(s_h^k,a_h^k)\sim w_{hk}^{B}}\left [2\sqrt{2} \vert u(-H+h)\vert\sqrt{\frac{S\log\left (\frac{SAHK}{\delta}\right )}{\max\{1,N_h^k(s_h^k,a_h^k)\}}} \right ],  \label{eq:highPbound}
\end{align}
where inequality (1) follows from Lemma \ref{L5}, inequality (2) holds due to Lemma \ref{L7}, inequality (3) holds due to Lemma \ref{LP4} and the fact that $b^k_h(s_h^k,a_h^k)\leq \vert u(-H+h)\vert\sqrt{\frac{2S\log\left (\frac{SAHK}{\delta}\right )}{\max\{1,N_h^k(s_h^k,a_h^k)\}}}$.

We can write the expected regret as follows: 
\begin{align*}
&Regret(\mathcal{M},\textbf{OCE-VI},K)\\
&= E\left[\sum_{k=1}^K\left(V_1^*(s_1^k)-V_1^{\pi^k}(s_1^k)\right)\right]\\
&= E\left[1_{\mathcal{G}}\cdot\sum_{k=1}^K\left(V_1^*(s_1^k)-V_1^{\pi^k}(s_1^k)\right)\right]+E\left[1_{\mathcal{G}^c}\cdot\sum_{k=1}^K\left(V_1^*(s_1^k)-V_1^{\pi^k}(s_1^k)\right)\right]\\
&\overset{(4)}{\leq}  E\left[1_{\mathcal{G}}\cdot\sum_{k=1}^K\left(V_1^*(s_1^k)-V_1^{\pi^k}(s_1^k)\right)\right]+2\delta KH,
\end{align*}
where inequality (4) holds because $P(\mathcal{G}^c)\leq 2\delta$ and $0\leq V_1^{\pi^k}(s_1^k)\leq V_1^*(s_1^k)\leq H$ by Lemma \ref{L1}. Using \eqref{eq:highPbound} and Lemma~\ref{L13}, we deduce that
\begin{align*}
&E\left[1_{\mathcal{G}}\cdot\sum_{k=1}^K\left(V_1^*(s_1^k)-V_1^{\pi^k}(s_1^k)\right)\right]\\
&\leq E\left[\sum_{k=1}^K\sum_{h=1}^H E_{(s_h^k,a_h^k)\sim w_{hk}^{B}}\left [2\sqrt{2} \vert u(-H+h)\vert\sqrt{\frac{S\log\left (\frac{SAHK}{\delta}\right )}{\max\{1,N_h^k(s_h^k,a_h^k)\}}} \right ]\right]\\
&\leq 2\sqrt{2}\sum_{h=1}^H\vert u(-H+h)\vert S\sqrt{\prod\limits_{i=1}^{h-1} u_{-}'(-H+i)AK\log(3K)\log\left (\frac{SAHK}{\delta}\right )}.
\end{align*}
Then, we have 
\begin{align*}
&Regret(\mathcal{M},\textbf{OCE-VI},K)\\
&\leq 2\sqrt{2}\sum_{h=1}^H\vert u(-H+h)\vert S\sqrt{\prod\limits_{i=1}^{h-1} u_{-}'(-H+i)AK\log(3K)\log\left (\frac{SAHK}{\delta}\right )}+2\delta KH \\
&\leq 2\sqrt{2}\sum_{h=1}^H\vert u(-H+h)\vert S\sqrt{\prod\limits_{i=1}^{h-1} u_{-}'(-H+i)AK\log(3K)\log\left (2SAH^2K^2\right )}+1,
\end{align*}
 where the last inequality follows by choosing $\delta=\frac{1}{2KH}$. The proof is then completed.  
\end{proof}



\section{Proof of Theorem  \ref{THM3}} \label{sec:THM-lowerB}



\begin{proof} We adapt the proof of Theorem 9 in \cite{domingues2021episodic} to our risk-sensitive setting. The proof of Theorem  \ref{THM3} is long, so we divide it into a few steps. 


\begin{itemize}
\item \textbf{Step 1: Constructing the hard MDP instances.}

We first construct hard MDP instances, which are almost the same as the ones in \cite{domingues2021episodic} except one small yet important difference: the transition probabilities in \eqref{eq:p}.  

Based on assumption \ref{ASSU2}, we can construct a full $A$-ary tree of depth $d-1$ with root $\tilde{s}_{root}$, which has $S-3$ states. In this rooted tree, each node has exactly $A$ children and the total number of nodes is given by $\sum_{i=0}^{d-1}A^i=S-3$. We add three special states to the tree: a ``waiting" state $\tilde{s}_w$ where the agent starts and can choose action $\tilde{a}_w$ to stay up to a stage $\bar{H}< H-d$, a ``good" state $\tilde{s}_g$ where the agent obtains rewards, and a ``bad" state $\tilde{s}_b$ that gives no reward. Note that $\bar{H}$ is a parameter to be chosen later. Both $\tilde{s}_g$ and $\tilde{s}_b$ are absorbing states. For any state in the tree, the transitions are deterministic, the $a$-th action in a node leads to the $a$-th child of that node. The agent stays or leaves $\tilde{s}_w$ with probability
\begin{equation*}
P_h(\tilde{s}_w\vert \tilde{s}_w,a):=1\{a=\tilde{a}_w,h\leq \bar{H}\},\ P_h(\tilde s_{root}\vert \tilde{s}_w,a):=1-P_h(\tilde{s}_w\vert \tilde{s}_w,a).
\end{equation*}
Then, the agent transverses the tree until she arrives at the leaves. Let $L$ be the number of leaves, and the set of the leaves is $\mathcal{L}=\{\tilde{s}_1,\cdots,\tilde{s}_L\}$. For any $\tilde{s}_i\in \mathcal{L}$, any action will lead to a transition to either $\tilde{s}_g$ or $\tilde{s}_b$ with the transition probability
\begin{equation} \label{eq:p}
P_h(\tilde{s}_g\vert \tilde{s}_i,a)=p+\Delta_{(h^*,s^*,a^*)}(h,\tilde{s}_i,a),\ P_h(\tilde{s}_b\vert \tilde{s}_i,a)=1-p-\Delta_{(h^*,s^*,a^*)}(h,\tilde{s}_i,a),
\end{equation}
where the parameter $p$ and the function $\Delta$ will be specified later. In \cite{domingues2021episodic}, $p$ is set to be $0.5$ in the risk-neutral setting, whereas we will tune $p$ in our risk-sensitive setting to obtain a tighter regret lower bound.  

The reward function is defined as 
\begin{equation*}
r_h(s,a):=1\{s=\tilde{s}_g,h\geq \bar{H}+d+1\},\ \forall a\in \mathcal{A} .
\end{equation*}
For each 
\begin{equation*}
(h^*,s^*,a^*)\in\{1+d,\cdots,\bar{H}+d\}\times\mathcal{L}\times\mathcal{A} =: \mathcal{Z},
\end{equation*}
we define an MDP $\mathcal{M}_{(h^*,s^*,a^*)}$, where $\Delta_{(h^*,s^*,a^*)}(h,\tilde{s}_i,a)=1\{h=h^*,\tilde{s}_i=s^*,a=a^*\}\epsilon$ and $\epsilon$ is a parameter to be chosen later. Denote by $P_{(h^*,s^*,a^*)}$ and $E_{(h^*,s^*,a^*)}$ the probability measure and expectation, respectively, in the MDP $\mathcal{M}_{(h^*,s^*,a^*)}$.
Let $\mathcal{M}_0$ be the MDP with $\Delta_0(h,\tilde{s}_i,a)=0$ for all $(h,\tilde{s}_i,a)\in [H]\times\mathcal{L}\times\mathcal{A}$. Denote by $P_0$ and $E_0$ the probability measure and expectation, respectively, in the MDP $\mathcal{M}_0$. 

\item  \textbf{Step 2: Computing the Expected Regret of an Algorithm in $\mathcal{M}_{(h^*,s^*,a^*)}$.}

We now compute
$Regret(\mathcal{M}_{(h^*,s^*,a^*)},\textbf{algo},K)$ for a learning algorithm \textbf{algo}, which executes policy $\pi^k$ in episode $k \in [K]$. By \eqref{eq:regret}, we need to compute the optimal value function, $V^*_1(s_1^k)$, and the value function under policy $\pi^k$, $V_1^{\pi^k}(s_1^k)$, for $k\in [K]$. 
Unlike \cite{domingues2021episodic}, these quantities can not be computed explicitly in general in our risk-sensitive setting because the OCE is defined by an optimization problem. Hence, in the following, we will bound $V_1^*(s_1^k)-V_1^{\pi^k}(s_1^k)$ in order to lower bound the regret. 


We first compute the value function $V_1^{\pi^k}(s_1^k)$ under policy $\pi^k$. For notational simplicity, we denote $\pi_h^k(s_h^k)$ as $a_h^k$ for all $h\in [H],k\in [K]$. Under policy $\pi^k$, we use $\hat{H}$ to denote the 
number of time steps in which the agent stays in the ``waiting" state, which is no larger than $\bar{H}$. Because there is no reward collected before step $\hat{H}+d$, we can obtain
\begin{align}\label{eq:V1pik}
V_1^{\pi^k}(s_1^k)=V_{\hat{H}+d}^{\pi^k}(s_{\hat{H}+d}^k).
\end{align}
Next, we compute $V_{\hat{H}+d}^{\pi^k}(s_{\hat{H}+d}^k).$ To this end, we first show 
\begin{align}\label{eq:VHd}
V_{\bar{H}+d+1}^{\pi^k}(s_{\bar{H}+d+1}^k)=(H-\bar{H}-d)\times 1\{s^k_{\bar{H}+d+1}=\tilde{s}_g\}.
\end{align}
We prove it by recursion. It is clear that 
\begin{align*}
V_{H}^{\pi^k}(s_{H}^k)&=r_H(s_{H}^k,a_{H}^k)+OCE_{s_{H+1}^k\sim P_H(\cdot\vert s_{H}^k,a_{H}^k)}(V_{H+1}^{\pi^k}(s_{H+1}^k))\overset{(1)}{=}1\{s_H^k=\tilde{s}_g\}\\
&\overset{(2)}{=}1\{s^k_{\bar{H}+d+1}=\tilde{s}_g\},
\end{align*}
where equality (1) holds because $V_{H+1}^{\pi^k}(s_{H+1}^k)=0$, and equality (2) follows from the fact that the agent is in the absorbing states when $h\geq \bar{H}+d+1$. Then, we can compute 
\begin{align*}
V_{H-1}^{\pi^k}(s_{H-1}^k)&=r_{H-1}(s_{H-1}^k,a_{H-1}^k)+OCE_{s^k_{H}\sim P_{H-1}(\cdot\vert s_{H-1}^k,a_{H-1}^k)}(V_{H}^{\pi^k}(s_{H}^k))\\
&\overset{(3)}{=}2\times 1\{s^k_{\bar{H}+d+1}=\tilde{s}_g\},
\end{align*}
where equality (3) holds because the random variable $1\{s^k_{\bar{H}+d+1}=\tilde{s}_g\}$ is known at step $H-1$, and we use property (b) in Lemma \ref{L0}. Repeating this procedure until time step $h=\bar{H}+d+1$, we obtain \eqref{eq:VHd}.  

Given \eqref{eq:VHd}, we next compute the value function under policy $\pi^k$ at time $\hat{H}+d+1$. Note that at time step $\hat{H}+d$, the agent is at the leaf of the rooted tree, where $\hat{H} \le \bar H.$ Hence, the probability that the agent is at good state $\tilde{s}_g$ at step $h=\hat{H}+d+1$ is the same as that at step $h=\bar{H}+d+1$. In addition, the reward function is given by $r_h(s,a)=1\{s=\tilde{s}_g,h\geq \bar{H}+d+1\},\forall a\in\mathcal{A}$. Hence, we obtain 
\begin{align*}
V_{\hat{H}+d+1}^{\pi^k}(s_{\hat{H}+d+1}^k)=V_{\bar{H}+d+1}^{\pi^k}(s_{\bar{H}+d+1}^k) 
&=(H-\bar{H}-d)\times 1\{s^k_{\bar{H}+d+1} 
=\tilde{s}_g\} \\
&=(H-\bar{H}-d)\times 1\{s^k_{\hat{H}+d+1}=\tilde{s}_g\}.
\end{align*}


It then follows that 
\begin{equation}\label{VLOW}
\begin{aligned}
&V_1^{\pi^k}(s_1^k) = V_{\hat{H}+d}^{\pi^k}(s_{\hat{H}+d}^k)\\
&=r_{\hat{H}+d}(s_{\hat{H}+d}^k,a_{\hat{H}+d}^k)+OCE_{s_{\hat{H}+d+1}^k\sim P_{\hat{H}+d}(\cdot\vert s_{\hat{H}+d}^k,a_{\hat{H}+d}^k)}(V_{\hat{H}+d+1}^{\pi^k}(s_{\hat{H}+d+1}^k))\\
&=OCE_{s_{\hat{H}+d+1}^k\sim P_{\hat{H}+d} (\cdot\vert s_{\hat{H}+d}^k,a_{\hat{H}+d}^k)}\left((H-\bar{H}-d)\times 1\{s^k_{\hat{H}+d+1}=\tilde{s}_g\}\right )\\
&=\sup_{\lambda\in [0,H-\bar{H}-d]}\{\lambda+P_{(h^*,s^*,a^*)}(s^k_{\hat{H}+d+1}=\tilde{s}_g)u(H-\bar{H}-d-\lambda)\\
&\quad \quad \quad \quad \quad \quad +(1-P_{(h^*,s^*,a^*)}(s^k_{\hat{H}+d+1}=\tilde{s}_g))u(-\lambda)\}\\
&=\sup_{\lambda\in [0,H-\bar{H}-d]}\{\lambda+P_{(h^*,s^*,a^*)}(s^k_{\bar{H}+d+1}=\tilde{s}_g)u(H-\bar{H}-d-\lambda)\\
&\quad \quad \quad \quad \quad \quad +(1-P_{(h^*,s^*,a^*)}(s^k_{\bar{H}+d+1}=\tilde{s}_g))u(-\lambda)\}.
\end{aligned}    
\end{equation}

Similar to Equation (7) in \cite{domingues2021episodic}, we can derive
\begin{equation}\label{S26}
\begin{aligned}
&P_{(h^*,s^*,a^*)}(s^k_{\bar{H}+d+1}=\tilde{s}_g)\\
&= \sum_{h=1+d}^{\bar{H}+d}pP_{(h^*,s^*,a^*)}(s^k_{h}\in\mathcal{L})+1\{h=h^*\}P_{(h^*,s^*,a^*)}(s_h^k=s^*,a_h^k=a^*)\epsilon\\
&= p+\epsilon \cdot P_{(h^*,s^*,a^*)}(s_{h^*}^k=s^*,a_{h^*}^k=a^*).
\end{aligned}
\end{equation}
Together with \eqref{VLOW}, we obtain an expression of the value function $V_1^{\pi^k}(s_1^k)$.

We next compute the optimal value function $V_1^{*}(s_1^k)$. Based on \eqref{VLOW}, one can easily show that the optimal policy is to let $P_{(h^*,s^*,a^*)}(s_{h^*}^k=s^*,a_{h^*}^k=a^*)=1$. Specifically, in the MDP $\mathcal{M}_{(h^*,s^*,a^*)}$, the optimal policy is to traverse the tree at step $h^*-d$, so that the agent visits the leaf state $s^*$ at time step $h^*$ and takes the action $a^*$ at this leaf state. Thus, the optimal value function is given by
\begin{align*}
V_1^{*}(s_1^k)&=\sup_{\lambda\in [0,H-\bar{H}-d]}\{\lambda+(p+\epsilon)u(H-\bar{H}-d-\lambda)+(1-p-\epsilon)u(-\lambda)\}.
\end{align*}

Now, we can compute that for each episode $k\in [K]$,
\begin{align*}
&V_1^*(s_1^k)-V_1^{\pi^k}(s_1^k)\\
&=\sup_{\lambda\in [0,H-\bar{H}-d]}\{\lambda+(p+\epsilon)u(H-\bar{H}-d-\lambda)+(1-p-\epsilon)u(-\lambda)\}\\
&-\sup_{\lambda\in [0,H-\bar{H}-d]}\{\lambda+P_{(h^*,s^*,a^*)}(s^k_{\bar{H}+d+1}=\tilde{s}_g)u(H-\bar{H}-d-\lambda)\\
&\quad \quad \quad \quad \quad \quad +(1-P_{(h^*,s^*,a^*)}(s^k_{\bar{H}+d+1}=\tilde{s}_g))u(-\lambda)\}\\
&\overset{(1)}{\geq} \rho+(p+\epsilon)u(H-\bar{H}-d-\rho)+(1-p-\epsilon)u(-\rho)\\
&\quad -\rho-P_{(h^*,s^*,a^*)}(s^k_{\bar{H}+d+1}=\tilde{s}_g)u(H-\bar{H}-d-\rho)-(1-P_{(h^*,s^*,a^*)}(s^k_{\bar{H}+d+1}=\tilde{s}_g))u(-\rho)\\
&\overset{(2)}{=} \epsilon [u(H-\bar{H}-d-\rho)-u(-\rho)]\times [1-P_{(h^*,s^*,a^*)}(s_{h^*}^k=s^*,a_{h^*}^k=a^*)],
\end{align*}
where inequality $(1)$ holds by setting 
\begin{align}
\rho\in &\mathop{\arg\max}_{\lambda\in [0,H-\bar{H}-d]}\{\lambda+P_{(h^*,s^*,a^*)}(s^k_{\bar{H}+d+1}=\tilde{s}_g)u(H-\bar{H}-d-\lambda)\nonumber\\
&\quad \quad \quad \quad \quad +(1-P_{(h^*,s^*,a^*)}(s^k_{\bar{H}+d+1}=\tilde{s}_g))u(-\lambda)\},\label{eq:rho}
\end{align}
and equality $(2)$ holds by applying \eqref{S26}. 

Therefore, the regret of a learning algorithm \textbf{algo} in $\mathcal{M}_{(h^*,s^*,a^*)}$ can be lower bounded as follow: 
\begin{align} \label{eq:regret-lb}
&Regret(\mathcal{M}_{(h^*,s^*,a^*)},\textbf{algo},K) \nonumber \\
&= \sum_{k=1}^K E_{(h^*,s^*,a^*)} [V_1^*(s_1^k)-V_1^{\pi^k}(s_1^k)] \nonumber \\
&\geq  \epsilon [u(H-\bar{H}-d-\rho)-u(-\rho)]\sum_{k=1}^K\left (1-P_{(h^*,s^*,a^*)}(s_{h^*}^k=s^*,a_{h^*}^k=a^*) \right ) \nonumber \\
&=  \epsilon [u(H-\bar{H}-d-\rho)-u(-\rho)]\left (K-E_{(h^*,s^*,a^*)}\left[N^K_{(h^*,s^*,a^*)}\right]\right ),
\end{align}
where 
\begin{align}\label{eq:N-star}
N^K_{(h^*,s^*,a^*)}=\sum_{k=1}^K 1\{s_{h^*}=s^*,a_{h^*}=a^*\}.
\end{align}

\item \textbf{Step 3: Bounding Maximum Regret over all possible $\mathcal{M}_{(h^*,s^*,a^*)}$.}

We can deduce from \eqref{eq:regret-lb} that
the maximum regret of an algorithm \textbf{algo} over all possible  $\mathcal{M}_{(h^*,s^*,a^*)}$ is lower bounded by 
\begin{equation}\label{S20}
\begin{aligned}
&\mathop{\max}_{(h^*,s^*,a^*) \in \mathcal{Z}} Regret(\mathcal{M}_{(h^*,s^*,a^*)},\textbf{algo},K)\\
&\geq \frac{1}{\bar{H}LA}\sum_{(h^*,s^*,a^*) \in \mathcal{Z}}Regret(\mathcal{M}_{(h^*,s^*,a^*)},\textbf{algo},K)\\
&\geq  K[u(H-\bar{H}-d-\rho)-u(-\rho)]\epsilon \left (1-\frac{1}{K\bar{H}LA}\sum_{(h^*,s^*,a^*)} E_{(h^*,s^*,a^*)}\left[N^K_{(h^*,s^*,a^*)}\right]\right ).
\end{aligned}
\end{equation}
So to lower bound the regret, we have to upper bound $\sum_{(h^*,s^*,a^*) \in \mathcal{Z} }E_{(h^*,s^*,a^*)} \left[N^K_{(h^*,s^*,a^*)}\right]$.

\item \textbf{Step 4: Bounding $\sum_{(h^*,s^*,a^*) \in \mathcal{Z} }E_{(h^*,s^*,a^*)}\left[N^K_{(h^*,s^*,a^*)}\right]$.}

For this step, we use similar arguments to those used in \cite{domingues2021episodic}; see page 13 therein.
Fix $(h^*,s^*,a^*)\in [H]\times \mathcal{S}\times \mathcal{A}$. Because $\frac{1}{K}N_{(h^*,s^*,a^*)}^K\in [0,1]$, one can obtain from Lemma 1 of \cite{garivier2019explore} that 
\begin{align*}
kl\left (\frac{1}{K}E_0\left[N_{(h^*,s^*,a^*)}^K\right],\frac{1}{K}E_{(h^*,s^*,a^*)}\left[N_{(h^*,s^*,a^*)}^K\right]\right )\leq KL(P_0,P_{(h^*,s^*,a^*)}),
\end{align*}
where KL denotes the Kullback-Leibler divergence between two probability measures and $kl(p,q)$ denotes the KL divergence between two Bernoulli distributions with success probabilities $p$ and $q$ respectively; see Definition 4 in \cite{domingues2021episodic}. It then follows from Pinsker's inequality, $(p-q)^2\leq \frac{1}{2}kl(p,q)$, that 
\begin{align*}
\frac{1}{K}E_{(h^*,s^*,a^*)}\left[N_{(h^*,s^*,a^*)}^K\right]\leq \frac{1}{K}E_0\left[N_{(h^*,s^*,a^*)}^K\right]+\sqrt{\frac{1}{2}KL(P_{0},P_{(h^*,s^*,a^*)})}.
\end{align*}
Because $\mathcal{M}_0$ and $\mathcal{M}_{(h^*,s^*,a^*)}$ differ at stage $h^*$ when $(s_{h^*},a_{h^*})=(s^*,a^*)$, by Lemma 5 of \cite{domingues2021episodic} and Lemma \ref{L16} in Appendix~\ref{sec:aux}, we can prove that 
\begin{align*}
KL\left(P_{0},P_{(h^*,s^*,a^*)}\right)=E_0\left[N_{(h^*,s^*,a^*)}^K\right]kl(p,p+\epsilon)\leq E_0\left[N_{(h^*,s^*,a^*)}^K\right]\frac{c_1\epsilon^2}{p},
\end{align*}
where $c_1\geq 2$ is a certain positive constant, $p\in [0,1-\frac{1}{c_1}]$ and $\epsilon$ satisfies
\begin{align}\label{eq:eps-constraint}
\epsilon\in \left [0,\frac{(1-2p)+\sqrt{1-\frac{4p}{c_1}}}{2}\right ].
\end{align}

Thus, 
\begin{align*}
\frac{1}{K}E_{(h^*,s^*,a^*)}\left[N_{(h^*,s^*,a^*)}^K\right]\leq \frac{1}{K}E_0\left[N_{(h^*,s^*,a^*)}^K\right]+\sqrt{\frac{c_1}{2p}}\epsilon\sqrt{E_0\left[N_{(h^*,s^*,a^*)}^K\right]}.
\end{align*}
According to the definition of $N_{(h^*,s^*,a^*)}^K$ in \eqref{eq:N-star}, we know that $\mathop{\sum}_{(h^*,s^*,a^*) \in \mathcal{Z} }N_{(h^*,s^*,a^*)}^K\leq K$. Then, by Cauchy-Schwarz inequality, we have
\begin{align}\label{S21}
\frac{1}{K}\mathop{\sum}_{(h^*,s^*,a^*) \in \mathcal{Z} }E_{(h^*,s^*,a^*)}\left[N_{(h^*,s^*,a^*)}^K\right]\leq 1+\sqrt{\frac{c_1}{2p}}\epsilon\sqrt{\bar{H}LAK}.
\end{align}

\item \textbf{Step 5: Optimizing $\epsilon$ and Choosing $\bar{H}$ and $p$.}

By combining \eqref{S20} with \eqref{S21}, we have 
\begin{align}
&\mathop{\max}_{(h^*,s^*,a^*) \in \mathcal{Z} } Regret(\mathcal{M}_{(h^*,s^*,a^*)},\textbf{algo},K)\nonumber\\
&\geq K[u(H-\bar{H}-d-\rho)-u(-\rho)]\epsilon \left (1-\frac{1}{\bar{H}LA}-\sqrt{\frac{c_1}{2p}}\epsilon\frac{\sqrt{\bar{H}LAK}}{\bar{H}LA}\right ),\label{Q39}
\end{align}
where the right-hand side of the inequality is a quadratic function of $\epsilon$. Maximizing this function by taking 
\begin{align}\label{eq:eps-choice}
\epsilon=\sqrt{\frac{p}{2c_1}}\left(1-\frac{1}{\bar{H}LA}\right)\sqrt{\frac{\bar{H}LA}{K}},
\end{align}
we derive
\begin{align}
&\mathop{\max}_{(h^*,s^*,a^*) \in \mathcal{Z}} Regret(\mathcal{M}_{(h^*,s^*,a^*)},\textbf{algo},K)\nonumber\\
&\geq \frac{1}{2\sqrt{2}}\sqrt{\frac{p}{c_1}}[u(H-\bar{H}-d-\rho)-u(-\rho)]\sqrt{\bar{H}LAK}(1-\frac{1}{\bar{H}LA})^2.\label{Q40}
\end{align}
According to Assumption \ref{ASSU2}, we have $A\geq 2$, $S\geq 6$, and thus $L=(1-\frac{1}{A})(S-3)+\frac{1}{A}\geq \frac{S}{4}$. Then, we can deduce from \eqref{Q40} that
\begin{align}
&\mathop{\max}_{(h^*,s^*,a^*) \in \mathcal{Z} } Regret(\mathcal{M}_{(h^*,s^*,a^*)},\textbf{algo},K)\nonumber\\
&\geq \frac{1}{2\sqrt{2}}\cdot \sqrt{\frac{p}{c_1}}[u(H-\bar{H}-d-\rho)-u(-\rho)]\sqrt{\bar{H}\cdot\frac{S}{4}AK}\cdot \frac{4}{9}\nonumber\\
&= \frac{1}{9\sqrt{2}}\cdot \sqrt{\frac{p}{c_1}}[u(H-\bar{H}-d-\rho)-u(-\rho)]\sqrt{SA\bar{H}K}.\label{Q41}
\end{align}

The bound in \eqref{Q41} is not explicit in the sense that the quantity $\rho$ defined in \eqref{eq:rho} depends on the unknown probability $P_{(h^*,s^*,a^*)}(s^k_{\bar{H}+d+1}=\tilde{s}_g)$, which equals $p+\epsilon \cdot P_{(h^*,s^*,a^*)}(s_{h^*}^k=s^*,a_{h^*}^k=a^*)$. We next lower bound the term on the right-hand-side of \eqref{Q41} in order to derive the explicit bound given in Theorem \ref{THM3}. 

Because $\rho$ satisfies \eqref{eq:rho}, by the first-order optimality condition, we have
\begin{align}\label{F38}
1\in P_{(h^*,s^*,a^*)}(s^k_{\bar{H}+d+1}=\tilde{s}_g)\partial u(H-\bar{H}-d-\rho)+(1-P_{(h^*,s^*,a^*)}(s^k_{\bar{H}+d+1}=\tilde{s}_g))\partial u(-\rho).
\end{align}
According to Assumption \ref{ASSU2}, we have $H\geq c_2 d$ with $c_2 >2$. We choose 
\begin{align}\label{eq:barH}
\bar H = \frac{H}{c_2}.
\end{align}
Then, by the monotonicity of the subgradients of the concave function $u$, we obtain from \eqref{F38} that
\begin{align}\label{F55}
1 \le P_{(h^*,s^*,a^*)}(s^k_{\bar{H}+d+1}=\tilde{s}_g)\partial u\left( \left(1- \frac{2}{c_2}\right)H-\rho\right)+(1-P_{(h^*,s^*,a^*)}(s^k_{\bar{H}+d+1}=\tilde{s}_g))\partial u(-\rho),
\end{align}
where the inequality means every element in the set on the right-hand-side is greater than one. Recall that  $P_{(h^*,s^*,a^*)}(s^k_{\bar{H}+d+1}=\tilde{s}_g) =p+\epsilon \cdot P_{(h^*,s^*,a^*)}(s_{h^*}^k=s^*,a_{h^*}^k=a^*)$. Then, we can deduce from \eqref{F38} that
\begin{align}\label{F38-2}
1 \le p \cdot \partial u\left( \left(1- \frac{2}{c_2}\right)H-\rho\right)+(1- p) \cdot \partial u(-\rho),
\end{align}
where we use the fact that $\partial u$ is monotone so that all
elements in the set $\partial u( (1- \frac{2}{c_2})H-\rho) - \partial u(-\rho)$ are all non-negative. Now consider the function $p \cdot \partial u( (1- \frac{2}{c_2})H-\lambda)+(1- p) \cdot \partial u(-\lambda)$ for $\lambda\in [0, \rho]$. When $\lambda=0$, it is clear that $ p \cdot \partial u( (1- \frac{2}{c_2})H)+(1- p) \cdot \partial u(0)$ contains an element that is smaller than one, because $1 \in \partial u(0)$ and the elements in $\partial u( (1- \frac{2}{c_2})H)$ are smaller than one. Together with \eqref{F38-2} and the continuity of the subdifferential mapping, we then deduce that 
there exists some $\lambda^* \in [0, \rho]$ such that
\begin{align}\label{eq:p-eq}
1 \in  p \cdot \partial u \left(\left(1-\frac{2}{c_2}\right)H-\lambda^*\right)+(1-p)\cdot \partial u(-\lambda^*).
\end{align}

Now, we are ready to lower bound the right-hand-side of \eqref{Q41}. Note that 
$ u(H-\bar{H}-d-\lambda) - u(-\lambda)$ is nondecreasing in $\lambda\in [0,H-\bar{H}-d]$. Using \eqref{eq:barH} and the assumption $H\geq 2c_2 d$,
we then have
\begin{align}
u(H-\bar{H}-d-\rho)-u(-\rho) \ge u \left(\left(1-\frac{2}{c_2}\right)H-\lambda^*\right) - u(-\lambda^*).  
\end{align}
For fixed $c_1 \ge 4$, we can choose 
\begin{align}
p = 1 - \frac{2}{c_1} \ge \frac{1}{2}.
\end{align}
It follows from \eqref{Q41} and \eqref{eq:barH} that 
\begin{align}
&\mathop{\max}_{(h^*,s^*,a^*) \in \mathcal{Z} } Regret(\mathcal{M}_{(h^*,s^*,a^*)},\textbf{algo},K)\nonumber\\
& \ge \frac{1}{9\sqrt{2}}\cdot \sqrt{\frac{p}{c_1}}[u(H-\bar{H}-d-\rho)-u(-\rho)]\sqrt{SA\bar{H}K} \nonumber \\
&\geq \frac{1}{18\sqrt{2c_1 c_2}}\cdot \left[u\left(\left(1-\frac{2}{c_2}\right) H-\lambda^*\right)-u(-\lambda^*)\right]\sqrt{SAHK}.
\end{align}
Finally, we need to make $\epsilon$ in \eqref{eq:eps-choice} satisfy the constraint \eqref{eq:eps-constraint}. It is easy to check that  $\epsilon \leq \sqrt{\frac{HSA}{2c_1 c_2 K}}$. Moreover, we have $\frac{(1-2p)+\sqrt{1-\frac{4p}{c_1}}}{2}\geq \frac{1}{c_1}$. Hence, we can choose $K\geq \frac{c_1 HSA}{2c_2}$ to make 
 $\epsilon$ in \eqref{eq:eps-choice} feasible. The proof is therefore completed. 

\end{itemize}

\end{proof}

\subsection{An Auxiliary Lemma and Its Proof}\label{sec:aux}

Recall that for any $p,q\in (0,1)$ with $p+ q=1$, $kl(p,q)$ denotes the KL divergence between two Bernoulli distributions with success probabilities $p$ and $q$ respectively, i.e.,
\begin{align*}
    kl(p,q)=p\log\left(\frac{p}{q}\right)+q\log\left(\frac{1-p}{1-q}\right).
\end{align*}

\begin{lemma}\label{L16}
Fix any constant $c_1\geq 2$. If $p\in [0,1-\frac{1}{c_1}]$ and $\epsilon\in \left [0,\frac{(1-2p)+\sqrt{1-\frac{4p}{c_1}}}{2}\right ]$, then we have 
$kl(p,p+\epsilon)\leq \frac{c_1 \epsilon^2}{p}$. 
\end{lemma}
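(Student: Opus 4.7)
The plan is to reduce the claim to a one-variable quadratic inequality via the standard bound of the Bernoulli KL divergence by the chi-squared divergence:
\begin{equation*}
kl(p, q) \;\le\; \chi^2(p, q) \;=\; \frac{(p-q)^2}{q(1-q)}.
\end{equation*}
This can be justified in a few lines by Jensen's inequality: if $\pi_p$ and $\pi_q$ denote the Bernoulli densities with parameters $p$ and $q$, then $kl(p,q) = \mathbb{E}_{X\sim \pi_p}[\log(\pi_p(X)/\pi_q(X))] \le \log \mathbb{E}_{X\sim \pi_p}[\pi_p(X)/\pi_q(X)] = \log(1 + \chi^2(p,q)) \le \chi^2(p,q)$, using $\log(1+x)\le x$; the identity $\chi^2(p,q) = (p-q)^2/(q(1-q))$ follows by direct computation. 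Setting $q = p+\epsilon$, the lemma then reduces to establishing
\begin{equation*}
p \;\le\; c_1 (p+\epsilon)(1-p-\epsilon),
\end{equation*}
which, writing $q = p+\epsilon$, is the quadratic inequality $c_1 q^2 - c_1 q + p \le 0$.

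Next I would check that this quadratic has real roots under the hypotheses. The discriminant is $c_1^2 - 4 c_1 p$, which is nonnegative because the hypotheses $c_1 \ge 2$ and $p \le 1 - 1/c_1$ force $p \le c_1/4$ (via $(c_1-2)^2 \ge 0 \iff 4(1 - 1/c_1) \le c_1$). Hence the roots
\begin{equation*}
q_\pm = \frac{1 \pm \sqrt{1 - 4p/c_1}}{2}
\end{equation*}
are real, and the quadratic inequality is equivalent to $q \in [q_-, q_+]$.

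Finally, I would match the endpoints against the stated range for $\epsilon$. The upper bound $p+\epsilon \le q_+$ rearranges to exactly $\epsilon \le \tfrac{1}{2}((1-2p)+\sqrt{1-4p/c_1})$, which is the lemma's hypothesis. For the lower bound $p+\epsilon \ge q_-$, since $\epsilon \ge 0$ it suffices to show $p \ge q_-$, i.e.\ $1-2p \le \sqrt{1-4p/c_1}$; this is immediate when $p \ge 1/2$, and when $p < 1/2$ it follows by squaring and simplifying to $p \le 1 - 1/c_1$, again matching the hypothesis. The edge cases $p=0$ and $\epsilon=0$ are trivial (both sides infinite or both sides zero, respectively). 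The only step that takes any care is recognizing that the upper endpoint of the allowable $\epsilon$-interval encodes exactly the upper root $q_+$ of the quadratic, so the lemma's somewhat mysterious-looking hypothesis on $\epsilon$ is in fact tightly calibrated to this argument.
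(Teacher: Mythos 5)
Your proof is correct and follows essentially the same route as the paper: the paper bounds $kl(p,p+\epsilon)$ termwise via $\log x\le x-1$, which yields exactly the same intermediate bound $\frac{\epsilon^2}{(p+\epsilon)(1-p-\epsilon)}$ as your $\chi^2$ argument, and then reduces to the same quadratic condition $p\le c_1(p+\epsilon)(1-p-\epsilon)$. The only difference is that the paper dismisses the verification of this quadratic inequality as "easily verified," whereas you carry out the root analysis explicitly, confirming that the hypotheses on $p$ and $\epsilon$ are exactly calibrated to the roots $q_\pm$.
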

\begin{proof}
Using the inequality $\log(1+x)\leq x$ for any $x>-1$, we have
\begin{align*}
kl(p,p+\epsilon)&=p\log\left (\frac{p}{p+\epsilon}\right )+(1-p)\log\left (\frac{1-p}{1-p-\epsilon}\right )\\
&\leq p\left (\frac{p}{p+\epsilon}-1 \right )+(1-p)\left (\frac{1-p}{1-p-\epsilon}-1 \right )\\
&=\frac{\epsilon^2}{(p+\epsilon)(1-p-\epsilon)}\\
&\overset{(1)}{\leq} \frac{c_1\epsilon^2}{p},
\end{align*}
where inequality (1) holds if we have
\begin{align*}
\frac{p}{c_1}\leq p(1-p)+(1-2p)\epsilon-\epsilon^2.
\end{align*}
One can easily verify that the above inequality holds if $p\in [0,1-\frac{1}{c_1}]$ and $\epsilon\in \left [0,\frac{(1-2p)+\sqrt{1-\frac{4p}{c_1}}}{2}\right ]$. The proof is then completed. 
\end{proof}


\section{Numerical Experiments}\label{sec:experiment}
In this section, we conduct numerical experiments to illustrate the performance of the OCE-VI algorithm on randomly generated MDPs. 

We adopt the methods in \citep[Section 4.7]{dann2019strategic} to randomly generate MDPs with state space $\mathcal{S}=\{1,\cdots,S\}$, action space $\mathcal{A}=\{1,\cdots,A\}$ and episode length $H$. For each $h=1, 2, \ldots, H$, the transition probabilities $P_h(\cdot\vert s,a)$ are generated independently from the Dirichlet distribution $Dir(0.1,\cdots,0.1)$. Reward functions $r_h(s,a)$ are set to $0$ with probability $85\%$ and 
generated independently from the uniform distribution $U[0,1]$ with probability $15\%$. In comparing the performance of different learning algorithms, we assume that the reward functions are known, but the transition probabilities are unknown.



In our experiments we consider two different OCEs\footnote{For CVaR, our OCE-VI algorithm is essentially the ICVaR algorithm \cite{du2022risk}, so we do not compare their performances in the experiments.
}: entropic risk and mean-variance models. For entropic risk, we compare the performance of our OCE-VI algorithm with the RSVI2 and RSQ2 algorithm in \cite{fei2021exponential}. For mean-variance models, because there is no existing benchmark algorithm in the episodic RL setting with recursive mean-variance criterion, we compare our OCE-VI algorithm with the UCBVI-CH (with Chernoff-Hoeffding bonus)
and UCBVI-BF (with Bernstein bonus) algorithms in \cite{azar2017minimax} designed for the risk-neutral episodic RL. While the original UCBVI algorithms in \cite{azar2017minimax} are developed for MDPs with stationary transitions, we adapt them to our non-stationary MDP setting with time-dependent transition probabilities. 


We consider two sets of parameters. The first one is $(H,S,A)=(3,6,3)$, and  we use the risk-aversion parameter $\beta=-0.6$ for the entropic risk and $c=\frac{1}{6}$ for the mean-variance models. We set
$K=10^6$ and $\delta=\frac{1}{2KH}$ for all algorithms. The second one is $(H,S,A)=(6,20,3)$, and  we use $\beta=-0.6$ for the entropic risk and $c=\frac{1}{12}$ for the mean-variance models. Because the size of the MDP becomes larger and learning can be more difficult in the second setting, we consider $K=10^7$ to show the sublinear regret (in $K$) of algorithms. 



Figures \ref{F1} and \ref{F2} illustrate the performance comparisons of the OCE-VI algorithm with other algorithms, where we plot the average regret of each algorithm as a function of the number of episodes $K$. We compute the expected regret of each algorithm by averaging over 30 independent runs, but we do not plot the confidence intervals since the confidence intervals estimated from the 30 samples are very narrow compared with the magnitude of the regret and are almost invisible in the figures. We can observe from Figures \ref{F1} and \ref{F2} that for episodic RL with recursive entropic risk, our algorithm can outperform the RSVI2 algorithm in \cite{fei2021exponential} on randomly generated MDPs in the same risk-sensitive RL setting. For episodic RL with recursive mean-variance models, we find that our algorithm performs better than UCBVI algorithms in \cite{azar2017minimax}, though this is not surprising given that UCBVI is designed for the risk-netural RL setting.


\begin{figure}[htbp]
	\centering
 \begin{minipage}{0.49\linewidth}
		\centering
		\includegraphics[width=0.9\linewidth]{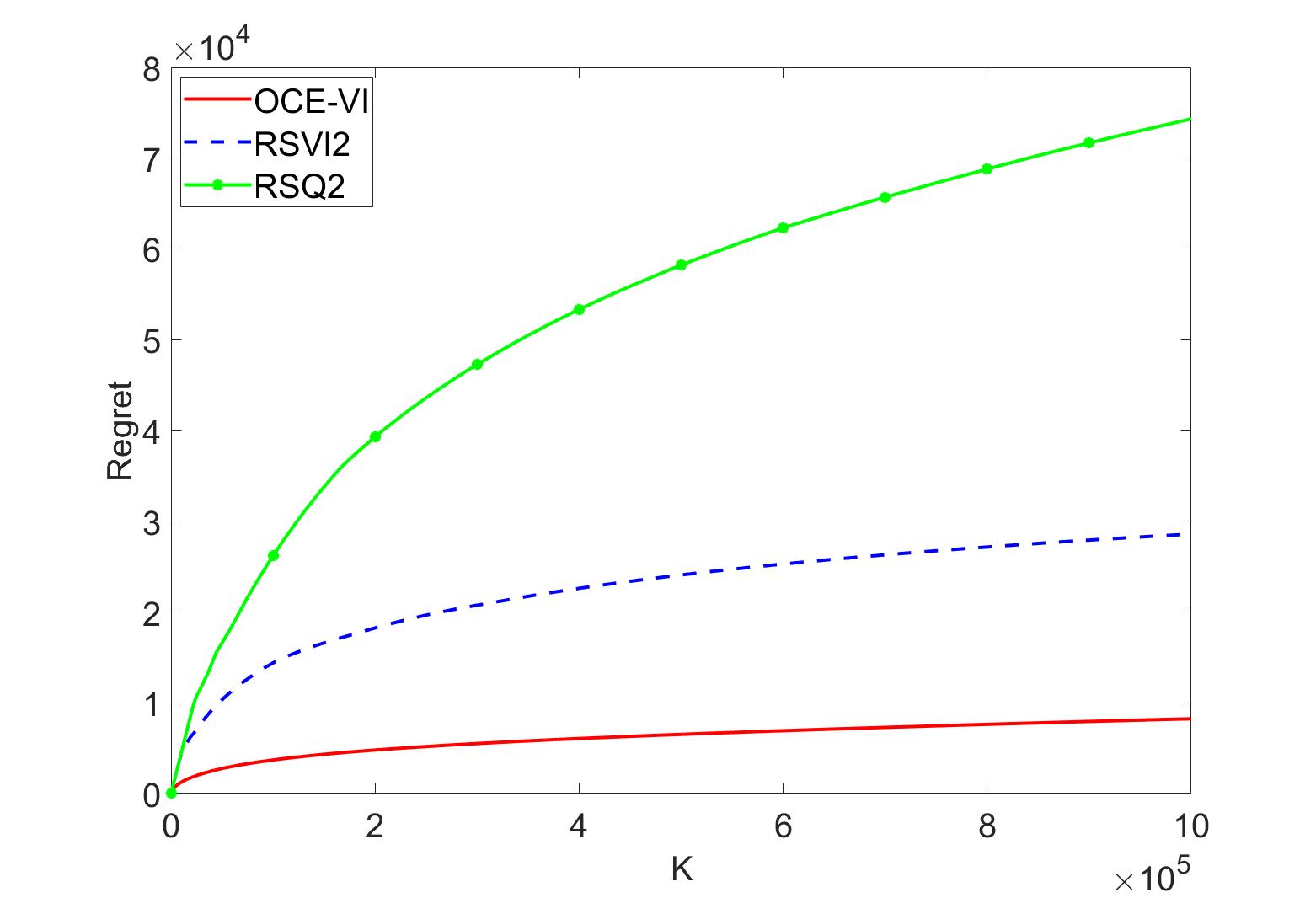}
		\subcaption{\centering}
		\label{OCEvsfei1}
	\end{minipage}
	\begin{minipage}{0.49\linewidth}
		\centering
		\includegraphics[width=0.9\linewidth]{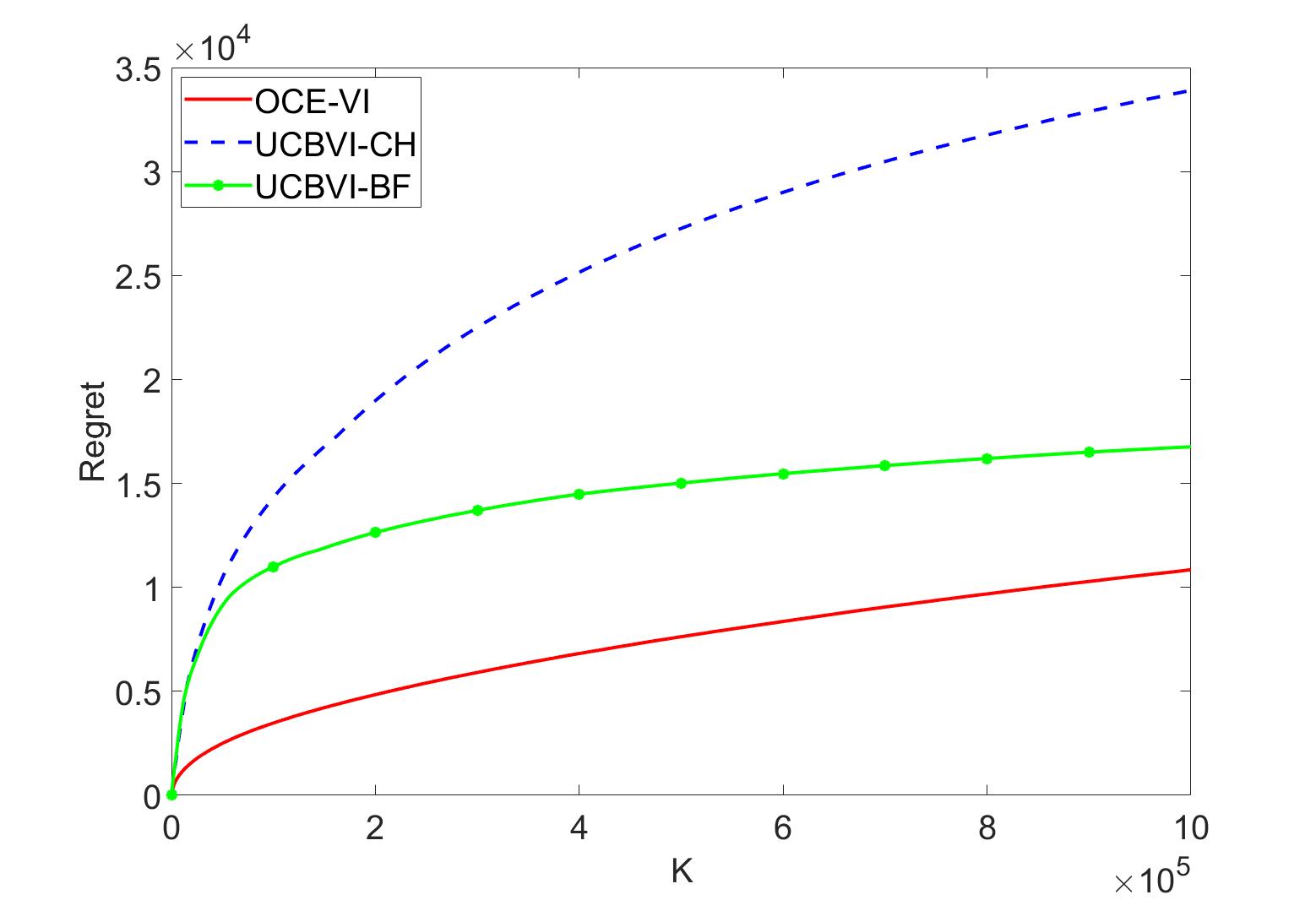}
		\subcaption{\centering}
		\label{poly1}
	\end{minipage}
   \caption{Performance comparison of OCE-VI algorithm with other algorithms 
on a randomly generated MDP with $(H,S,A)=(3,6,3)$. Figure~\ref{OCEvsfei1} is for episodic RL with recursive entropic risk and Figure~\ref{poly1} is for the mean-variance models.  }
 \label{F1}
\end{figure}

\begin{figure}[htbp]
	\centering
 \begin{minipage}{0.49\linewidth}
		\centering
		\includegraphics[width=0.9\linewidth]{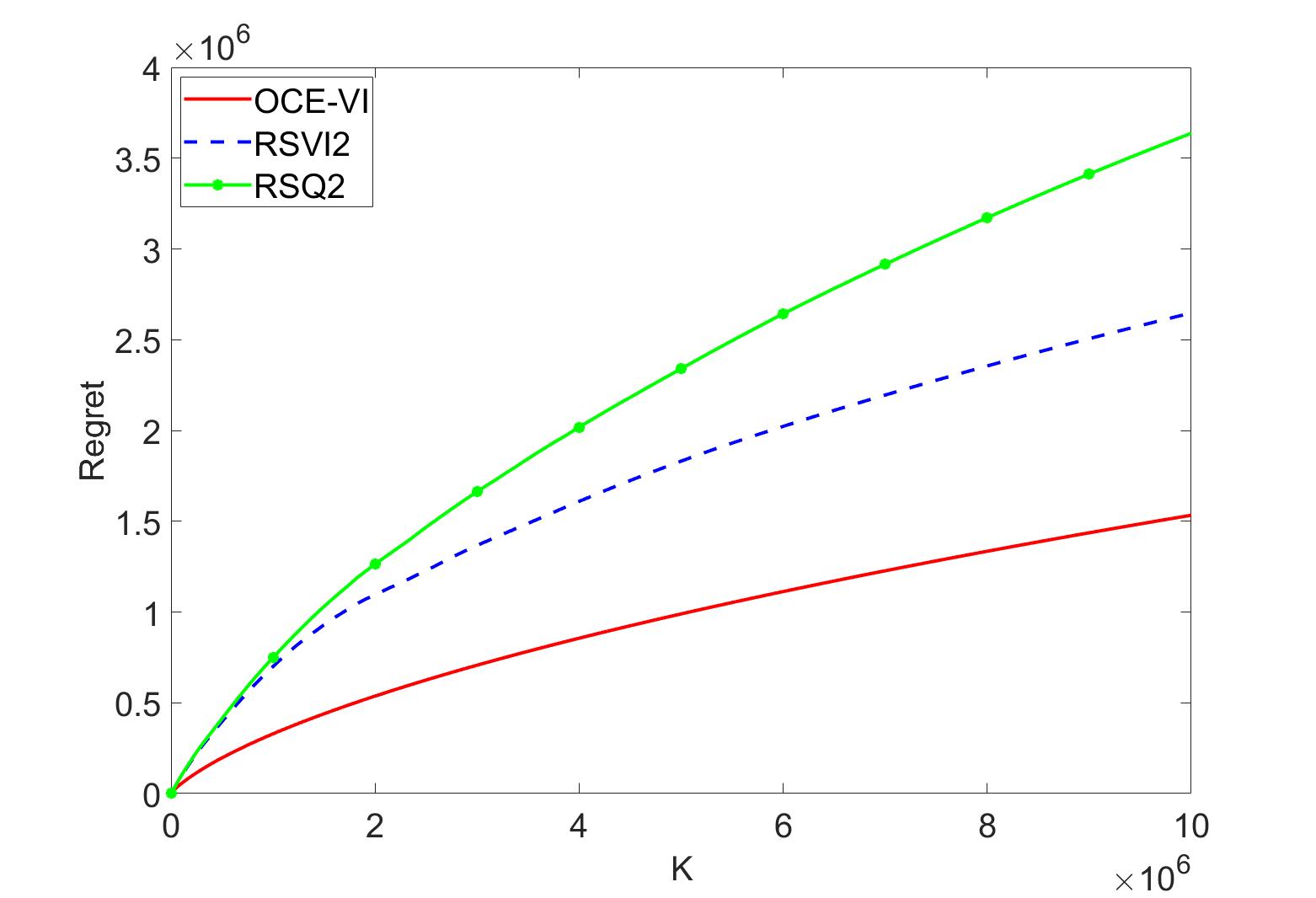}
		\subcaption{\centering}
		\label{OCEvsfei2}
	\end{minipage}
	\begin{minipage}{0.49\linewidth}
		\centering
		\includegraphics[width=0.9\linewidth]{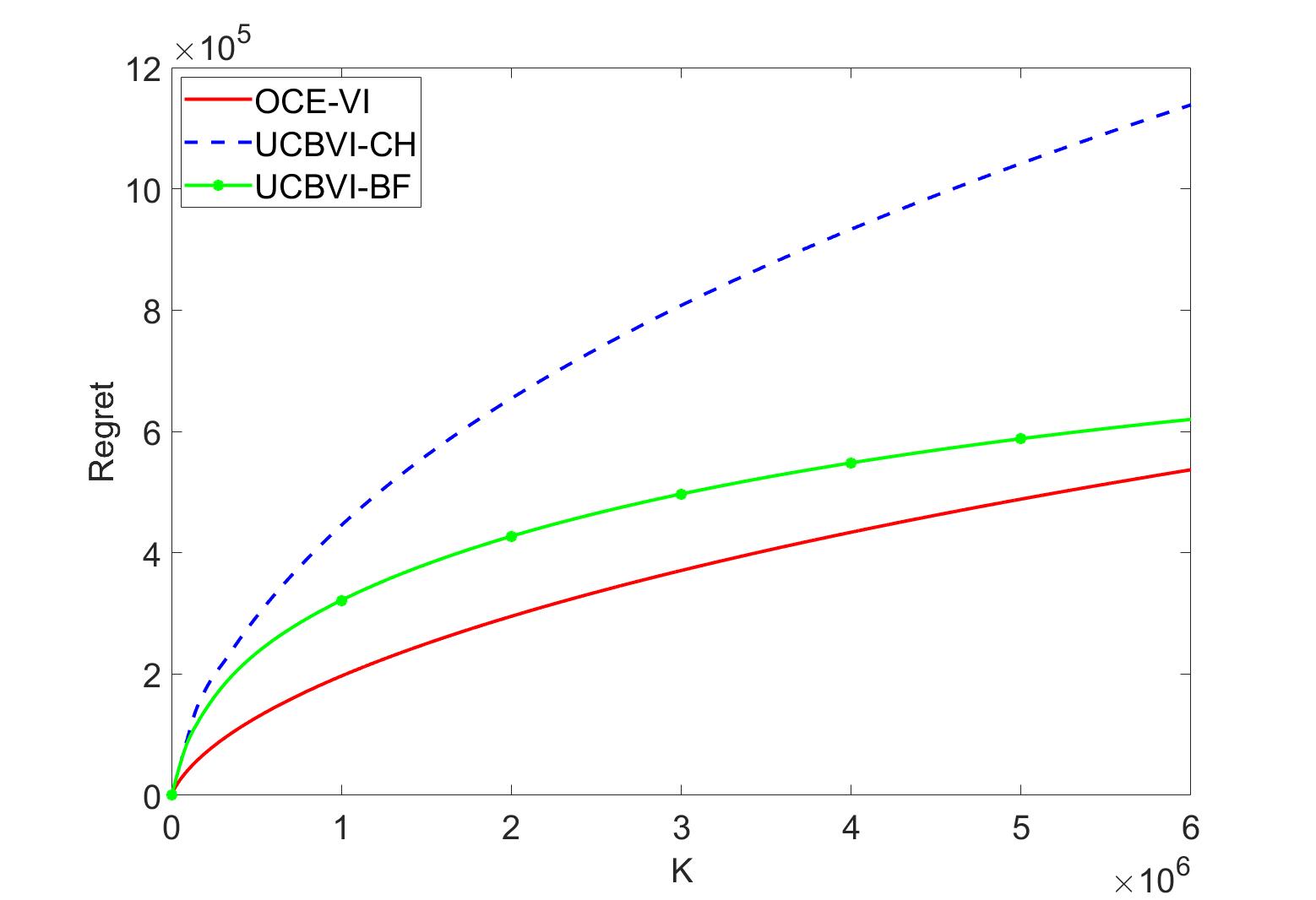}
		\subcaption{\centering}
		\label{poly2}
	\end{minipage}
   \caption{Performance comparison of OCE-VI algorithm with other algorithms 
on a randomly generated MDP with $(H,S,A)=(6,20,3)$. Figure~\ref{OCEvsfei2} is for episodic RL with recursive entropic risk and Figure~\ref{poly2} is for the mean-variance models.  }
 \label{F2}
\end{figure}

\end{document}